\documentclass[11pt]{article}

\usepackage[margin=1in]{geometry}
\usepackage{amsmath,amssymb,amsfonts,amsthm,mathtools,mathrsfs}
\usepackage{bm}
\usepackage{enumitem}
\usepackage{xcolor}
\usepackage{float}
\usepackage{graphicx}
\usepackage{booktabs}
\usepackage{tikz}
\usepackage[caption=false]{subfig}
\usepackage{natbib}

\usetikzlibrary{arrows.meta}
\usepackage[colorlinks=true,linkcolor=blue,citecolor=blue,urlcolor=blue]{hyperref}
\providecommand{\keywords}[1]{%
  \par\smallskip\noindent\textbf{Keywords: }#1\par}

\numberwithin{equation}{section}

\theoremstyle{plain}
\newtheorem{theorem}{Theorem}[section]
\newtheorem{proposition}[theorem]{Proposition}
\newtheorem{lemma}[theorem]{Lemma}
\newtheorem{corollary}[theorem]{Corollary}

\theoremstyle{definition}
\newtheorem{assumption}[theorem]{Assumption}

\theoremstyle{remark}
\newtheorem{remark}[theorem]{Remark}

\newcommand{\R}{\mathbb R}

\newcommand{\E}{\mathbb E}

\newcommand{\calD}{\mathcal D}
\newcommand{\calL}{\mathcal L}

\newcommand{\calK}{\mathcal K}

\newcommand{\ip}[2]{\left\langle #1,#2\right\rangle}
\newcommand{\Var}{\operatorname{Var}}

\newcommand{\cP}{\mathcal P}
\newcommand{\cE}{\mathcal E}
\newcommand{\cI}{\mathcal I}
\newcommand{\cA}{\mathcal A}
\newcommand{\cS}{\mathcal S}
\newcommand{\calG}{\mathcal G}
\newcommand{\cW}{\mathcal W}
\newcommand{\norm}[1]{\left\lVert #1\right\rVert}
\newcommand{\abs}[1]{\left\lvert #1\right\rvert}
\allowdisplaybreaks

\title{Accelerating Non-Smooth and Heavy-Tailed Sampling}
\author{
Pervez Ali\thanks{Department of Mathematics, Florida State University, Tallahassee, Florida, United States of America; \texttt{pa22g@fsu.edu}}
\and
Xiaoyu Wang\thanks{Fintech Thrust, Hong Kong University of Science and Technology (Guangzhou), Guangzhou, Guangdong Province, People's Republic of China;
\texttt{xiaoyuwang@hkust-gz.edu.cn}}
\and
Yingli Wang\thanks{School of Mathematical Sciences, Fudan University, Shanghai, People's Republic of China; \texttt{yingliwang@fudan.edu.cn}}
\and
Lingjiong Zhu\thanks{Department of Mathematics, Florida State University, Tallahassee, Florida, United States of America; \texttt{zhu@math.fsu.edu}}}
\date{\today}

\begin{document}
\maketitle

\begin{abstract}
Anchored Langevin dynamics (ALD) is useful for non-smooth sampling where the density 
of the target distribution is
possibly non-differentiable and heavy-tailed; reflected anchored Langevin dynamics (RALD) can sample possibly non-differentiable target density on a constrained domain. In this paper, we propose and study non-reversible anchored Langevin dynamics (NALD)
for sampling possibly non-differentiable and heavy-tailed target density in the Euclidean space
and the non-reversible reflected anchored Langevin dynamics (NRALD) for sampling possibly non-differentiable target density
in the constrained space.  Our construction adds a circulation drift generated by a
possibly state-dependent divergence-free skew-symmetric matrix field
and a stream potential.  It preserves the target distribution without requiring
derivatives of target density, admits a random-time-change representation, and applies both on
the whole Euclidean space and on bounded domains with normal reflection. 
By breaking reversibility, we show
that NALD and NRALD can converge to their target distributions
faster than their reversible counterparts
via finite-time non-asymptotic convergence analysis,  
a large deviations analysis
and asymptotic variance reduction.
Numerical experiments demonstrate
the efficiency of the proposed algorithms.
\end{abstract}

\keywords{Non-reversible, anchored Langevin dynamics, non-smooth sampling, heavy-tailed sampling, reflected diffusion,
large deviations, asymptotic variance}

\section{Introduction}

Let $E$ be either $\R^d$ or the closure of a bounded domain $K\subset\R^d$, and
consider the problem of sampling
\begin{equation}\label{eq:target-intro}
    \pi(dx)=Z^{-1}e^{-U(x)}\,dx,
    \qquad
    Z=\int_E e^{-U(x)}\,dx<\infty.
\end{equation}
Such problems arise in many applications 
such as Bayesian learning problems in machine learning \citep{gelman1995bayesian,stuart2010inverse,andrieu2003introduction,teh2016consistency,DistMCMC19,GIWZ2024}.

Langevin algorithms are popular
\textit{Markov chain Monte Carlo} (MCMC) methods
for solving the sampling problem 
\eqref{eq:target-intro}.
When $E=\mathbb{R}^{d}$ in \eqref{eq:target-intro}, 
the most classical Langevin algorithm
for sampling \eqref{eq:target-intro}, 
the \textit{Unajudsted Langevin Algorithm} (ULA), 
is based on the discretization
of the \textit{overdamped Langevin dynamics}
\citep{Dalalyan,DM2017,DK2017,Raginsky}:
\begin{equation}\label{eq:overdamped-2}
dX_{t}=-\nabla U(X_{t})dt+\sqrt{2}dW_{t},
\end{equation}
where $W_{t}$ is a standard $d$-dimensional Brownian motion,
and it is known that under mild conditions,
the overdamped Langevin dynamics \eqref{eq:overdamped-2} admits the target distribution
\eqref{eq:target-intro} with $E=\mathbb{R}^{d}$ as the unique invariant distribution \citep{chiang1987diffusion,stroock-langevin-spectrum}.

The overdamped Langevin dynamics \eqref{eq:overdamped-2} requires evaluations of $\nabla U$ and may therefore
be unavailable when $U$ is non-smooth.  Several approaches replace the non-smooth target
by a differentiable surrogate.  These include smoothing through parametrized exponential
families \citep{Zhou14}, \textit{Moreau--Yoshida envelope} (MYE) and proximal constructions
\citep{Durmus18,Goldman21,Mou19,Salim20}. Other envelopes such as the forward-backward envelope to approximate a non-smooth potential with a smooth one can also be used \citep{Eftekhari2023}.
When the potential function $U$
is non-differentiable but convex,
\cite{Durmus19,habring2024subgradient} provide an alternative; but they are restrictive to the convex setting. 
By approximating $U$ by convolving it with a Gaussian kernel, 
Gaussian smoothing method 
\citep{nesterov2017random,Chatterji20} provide another alternative. 
Such methods have produced useful algorithms and
nonasymptotic guarantees, particularly under convexity, but they alter the drift through a
regularization or approximation of the original potential.
For non-differentiable target distributions, zeroth-order Langevin algorithms can also be employed. These methods approximate first-order information based on the evaluations of the target function $U$ using finite-differences \citep{roy2022stochastic,dwivedi2019log}. Such an approach can be beneficial in some settings when the gradients do not exist or when they are hard to compute. However, zeroth-order methods are typically slower than first-order methods, as their gradient estimates tend to be noisier. 
\textit{Piecewise-deterministic Markov} processes (PDMPs) such as the bouncy particle and zig-zag samplers do not suffer from bias due to discretization; they can be applied to target distributions that are differentiable almost everywhere \citep{deligiannidis2019exponential,durmus2020geometric,bierkens2019ergodicity}, and in practice, may be preferable to MYE-based methods—particularly when the MYE is difficult to compute or not well-defined 
\citep{Goldman21}. However, PDMPs may encounter computational difficulties in high dimensions; event-time simulations can be demanding, and their performance can be sensitive to the availability of tight event-rate bounds \citep{Goldman21}.

\textit{Anchored Langevin dynamics} (ALD) \citep{gurbuzbalaban2025anchored} takes a different,
probabilistic route.  It introduces a differentiable reference potential $U_0$ and the
state-dependent speed
\begin{equation}\label{eq:a-intro}
    a(x):=e^{U(x)-U_0(x)},
\end{equation}
such that 
the \textit{anchored Langevin dynamics} (ALD) satisfies the SDE \citep{gurbuzbalaban2025anchored}:
\begin{equation}\label{eqn:anchored}
dX_{t}=-a(X_{t})\nabla U_{0}(X_{t})dt+\sqrt{2a(X_{t})}dW_{t},
\end{equation}
and it was proved in \cite{gurbuzbalaban2025anchored}
that \eqref{eqn:anchored} preserves
the Gibbs distribution \eqref{eq:target-intro} as an invariant distribution 
when $E=\mathbb{R}^{d}$.
The infinitesimal generator of \eqref{eqn:anchored} is given by
\begin{equation}\label{eq:reversible-generator-intro}
    \cS f
    :=a\left(\Delta f-\nabla U_0\cdot\nabla f\right)
    =\pi^{-1}\nabla\cdot\left(\pi a\nabla f\right).
\end{equation}
Thus, the target potential $U$ enters through function evaluations in $a$, whereas the
drift uses the gradient of the reference potential.  The divergence-form identity in
\eqref{eq:reversible-generator-intro} shows that $\cS$ is self-adjoint in $L^2(\pi)$.
The same construction can be combined with normal reflection to sample a target
supported on a constrained domain; this reflected anchored diffusion has recently been
studied in \cite{RALD}.

The anchored Langevin dynamics in \eqref{eqn:anchored} only requires the differentiability of $U_{0}$, not
the target potential $U$. 
As a result, \eqref{eqn:anchored} provides
an alternative method for non-smooth sampling where the target density is not necessarily differentiable. 
Moreover, it is known that
the overdamped Langevin dynamics \eqref{eq:overdamped-2} can be inefficient
and slow for sampling \textit{heavy-tailed} distributions \citep{RT96}. 
Sampling from target distributions with heavy tails—where the negative log-density \( U(x) \) grows sublinearly—presents unique challenges for standard sampling algorithms and the literature is quite limited. 
To address this, several algorithmic modifications have been proposed \citep{Kamatani2017,belomestny2021fourier,bell2024adaptive}; however, these approaches rely on Metropolis steps, which can be expensive for some applications involving high dimensionality and large datasets \citep{welling2011bayesian}.
\cite{he-mean-square-analysis} develops zeroth- and first-order Langevin algorithms for heavy-tailed distributions whose potentials satisfy weighted Poincar\'e inequalities, including Student-\(t\) distributions. In related work, \cite{he-transformed} characterize the class of heavy-tailed densities for which polynomial-order complexity guarantees can be obtained when the Unadjusted Langevin Algorithm is applied to suitably transformed versions of the target. 
The anchored Langevin dynamics
and the discretized algorithms
in \cite{gurbuzbalaban2025anchored}
provided a complementary approach
to sampling heavy-tailed distributions.
It was shown in \cite{gurbuzbalaban2025anchored} that anchored Langevin dynamics can converge
to a heavy-tailed target distribution exponentially fast in time; hence, it provides a unified approach for sampling both non-smooth and heavy-tailed distributions. 

Constrained sampling has gained a lot of interest in recent years.  
\textit{Projected Langevin Monte Carlo} (PLMC) algorithms were
analyzed first for log-concave targets and subsequently in stochastic and nonconvex
settings \citep{bubeck2015finite,bubeck2018sampling,Lamperski2021,zheng2022constrained}, which
are based on the discretization of
the \textit{reflected Langevin dynamics} (RLD):
\begin{equation}\label{eqn:RLD}
dX_{t}=-\nabla U(X_{t})dt+\sqrt{2}dW_{t}-n(X_{t})dL_t,
\end{equation}
where $W_{t}$ is a standard $d$-dimensional Brownian motion, the term $n(X_{t})$ is the outward unit normal vector when $X_t\in\partial K$ and $-n(X_{t})dL_t$ reflects the process on the boundary to ensure $X_{t}\in K$
for every $t$ given that $X_{0}\in K$. 
In particular, $-\int_{0}^{t}n(X_s)dL_s$ is a bounded variation reflection process,
where $L_t$ is nondecreasing and finite for every $t$, and $dL_t$ is supported on
$\{t:X_t\in\partial K\}$.
It is proved in \cite{bubeck2015finite} that
the RLD \eqref{eqn:RLD} converges to the target distribution \eqref{eq:target-intro}
with $E=K$ as $t\rightarrow\infty$.

Other approaches for constrained sampling include proximal methods \citep{Brosse,Salim20,2026IslamZhu}, mirrored Langevin
dynamics \citep{hsieh2018mirrored,Chewi2020,Zhang2020,TaoMirror2021,Ahn2021}, and
penalty-based or replica-exchange constructions \citep{GHZ2022,constraint_replica}.
Nonreversible constrained diffusions with skew reflection were introduced in
\cite{DFTWZ2025}.  More recently, \cite{wang2026constrained} showed that a
state-dependent skew field satisfying $J(x)n(x)=0$ on the boundary restores ordinary
normal reflection and yields both large-deviation and asymptotic-variance acceleration.
To sample \eqref{eq:target-intro}
for $E=K$ with a possibly non-differentiable $U(x)$, 
\cite{RALD} proposed and studied \textit{reflected anchored Langevin dynamics} (RALD):
\begin{equation}\label{eqn:RALD}
dX_{t}=-a(X_{t})\nabla U_{0}(X_{t})dt+\sqrt{2a(X_{t})}dW_{t}-n(X_{t})dL_t,
\end{equation}
where $U_{0}:\mathbb{R}^{d}\rightarrow\mathbb{R}$
plays the role as an anchored reference potential, $W_{t}$ is a standard $d$-dimensional Brownian motion, the term $n(X_{t})$ is the outward unit normal vector when $X_t\in\partial K$ and $-n(X_{t})dL_t$ reflects the process on the boundary to ensure $X_{t}\in K$
for every $t$ given that $X_{0}\in K$. 
In particular, $-\int_{0}^{t}n(X_s)dL_s$ is a bounded variation reflection process,
where $L_t$ is nondecreasing and finite for every $t$, and $dL_t$ is supported on
$\{t:X_t\in\partial K\}$.

In the literature, it is known
that in the setting of unconstrained sampling, 
by \textit{breaking reversibility}, 
a \textit{non-reversible} Langevin dynamics, 
where an \textit{skew-symmetric} matrix is added,
can converge to the target distribution faster on the Euclidean space $\mathbb{R}^{d}$. 
This phenomenon is well understood through non-asymptotic
convergence analysis; see e.g. \cite{HHS93,HHS05,reybellet2015irreversible,HWGGZ20,GGZ2}.
The improvement has also been studied
through spectral and nonasymptotic convergence \citep{Lelievre-optdrift,GGZ2,HWGGZ20},
the level-two large-deviation rate function \citep{reybellet2015irreversible}, and the
asymptotic variance of ergodic averages \citep{duncan2016variance}.  Weighted-topology
large deviations for unbounded diffusions provide the coercive framework needed beyond
compact state spaces \citep{ferre2020large,Yao2025LDP}.  
Recently, this idea also inspired the study of non-reversible Langevin algorithms
in constrained sampling. 
\textit{Non-reversible skew-reflected Langevin dynamics} (SRNLD) was
proposed and studied in \cite{DFTWZ2025} where an anti-symmetric matrix $J$ is added to the
dynamics, that can accelerate the convergence 
of constrained sampling.
However, how to choose the anti-symmetric matrix $J$ in practice is a challenging problem. 
Very recently,  
\cite{wang2026constrained} showed acceleration for constrained sampling through the lens of large deviations and asymptotic variance for SRNLD under the
additional constraint that $J(x)n(x)=0$,
where $n(x)$ is the outward unit normal vector. For the examples of ball constraint, and more generally the constrained domains characterized by the sub-level sets that include smoothed $\ell_{p}$ balls as special cases, the explicit construction of $J(x)$ was provided in \cite{wang2026constrained}. Indeed, under this additional condition, 
skew-reflection becomes a normal reflection. Extensive numerical experiments using both synthetic and real data in \cite{wang2026constrained} demonstrated superior performance for the choice of $J$ that satisfies this additional condition.

In \cite{gurbuzbalaban2025anchored}, it was proved that anchored Langevin dynamics is reversible. 
In addition, \cite{RALD} proved that reflected anchored Langevin dynamics is also reversible.
Therefore, it motivates us to ask the question whether
by breaking reversibility, one can accelerate the convergence
of anchored and reflected anchored Langevin dynamics for sampling
not necessarily differentiable target.
In this paper, we will give positive answer to this question by proposing
and studying \textit{non-reversible anchored Langevin dynamics} (NALD) for unconstrained
sampling and \textit{non-reversible reflected anchored Langevin dynamics} (NRALD)
for constrained sampling. 

Our main contributions are as follows.
\begin{enumerate}[leftmargin=2em]
    \item We introduce non-reversible anchored Langevin dynamics (NALD) on $\R^d$ and its
    normally reflected counterpart, non-reversible reflected anchored Langevin dynamics (NRALD), on bounded domains.  
    Both preserve the given target distributions as the invariant distributions on $\R^d$ and the bounded domain respectively (Theorem~\ref{thm:invariance})
    and arise as random time changes of nonreversible diffusions associated with the
    reference potential $U_0$ (Proposition~\ref{prop:time-change}).  The current construction includes divergence-free
    state-dependent skew tensors as well as constant matrices.
    On a constrained domain, we identify two mechanisms for retaining ordinary
    normal reflection: a stream potential with constant boundary trace, or a
    state-dependent tensor satisfying $J(x)n(x)=0$.  We explain the two-dimensional
    degeneracy of the latter condition and give a three-dimensional cross-product
    construction (Lemma~\ref{lem:stream-boundary} and
    Remark~\ref{rem:dimension-Jn}).  The stream mechanism remains nontrivial when
    $U_0$ is constant (Remark~\ref{rem:constant-anchor}), while
    Proposition~\ref{prop:state-dependent-current} places both mechanisms in the same
    divergence-free current framework.
    \item We obtain finite-time non-asymptotic convergence guarantees in total variation (TV) distance and show
    that by breaking reversibility, NALD and NRALD can converge to the target distributions
    faster than their reversible counterparts (Theorem~\ref{thm:TV:unconstrained} and Theorem~\ref{thm:TV}). 
    \item In order to better quantify the acceleration, we establish a level-two large
    deviation principle under standard compact-state assumptions
    (Theorem~\ref{thm:rate-decomposition}) or under weighted Witten--Lyapunov
    assumptions on $\R^d$ (Proposition~\ref{prop:unconstrained-ldp-sufficient} and
    Theorem~\ref{thm:unconstrained-ldp}).  If $d\nu=e^v d\pi$, then the rate function has the
    decomposition (Theorem~\ref{thm:rate-decomposition} and Theorem~\ref{thm:unconstrained-ldp}):
    \begin{equation}\label{eq:rate-preview}
        \cI_\alpha(\nu)
        =\frac14\int_E a\abs{\nabla v}^2d\nu
        +\frac{\alpha^2}{4}
        \norm{\cA v}_{H^{-1}_a(\nu)}^2.
    \end{equation}
    Consequently $\cI_\alpha\geq \cI_0$, with a transparent strictness criterion. Hence, the non-reversible dynamics
    is more concentrated around the target distribution which justifies the acceleration.
    \item In addition, we study asymptotic variance to quantify the acceleration. Under the assumption of a weighted Poincar\'e inequality, we prove that for every centered
    observable $g\in L^2(\pi)$ (Theorem~\ref{thm:variance-reduction}):
   $\sigma^2_{g,\alpha}
        \leq \sigma^2_{g,0}$.
    Equality holds precisely when the reversible Poisson solution is not moved by the
    circulation, under the operator assumptions stated below.
    \item For the $q$-parametrized heavy-tailed family of
    \cite[Theorem~3]{gurbuzbalaban2025anchored}, the same Foster--Lyapunov estimate yields
    a weighted Poincar\'e inequality when $r\geq0$.  When $r>0$, tangency of the
    circulation yields the principal coercive Lyapunov estimate for the large deviation principle (LDP).  We verify
    the remaining Witten--Lyapunov conditions for a Student-$t$ specialization (Proposition~\ref{prop:q-heavy-tail}). For
    this same superquadratic model, we prove the strong sector condition by separating
    radial and angular derivatives, thereby obtaining both LDP (Corollary~\ref{cor:student-ldp}) and asymptotic-variance
    acceleration (Corollary~\ref{cor:student-superquadratic-variance}).  We also give an explicitly solvable quadratic-clock example (Proposition~\ref{prop:student-variance}).
    \item Our experiments show that breaking reversibility accelerates the anchored
Langevin algorithms which confirms the theory (Section~\ref{sec:numerical}). NALD~\eqref{eq:algorithm-unconstrained}
with the skew-symmetric matrix $J_a$ ~\eqref{eq:J_a} reaches the noise floor on the heavy-tailed target with the MCP regularizer~\eqref{eq:student-mcp} in about $1.8$ times
fewer iterates, NRALD~\eqref{eq:algorithm-constrained} with the state-dependent skew-symmetric matrix $J_s$
in~\eqref{eq:Js} speeds up convergence to the constrained Laplace
targets by a factor of up to $1.85$. In constrained Bayesian logistic
regression experiment, NRALD ~\eqref{eq:algorithm-constrained} reaches a given accuracy faster on both synthetic and real
data.
\end{enumerate}

The rest of the paper is organized as follows.  Section~\ref{sec:model} introduces the
dynamics and proves invariance and the time-change representation.
Section~\ref{sec:non:asymptotic} presents finite-time non-asymptotic convergence analysis.
Section~\ref{sec:ldp} studies empirical-measure large deviations.  The asymptotic
variance comparison is developed in Section~\ref{sec:variance}.  Explicit unconstrained
and reflected constructions are given in Section~\ref{sec:examples}. Numerical experiments are presented in Section~\ref{sec:numerical}.
Finally, we conclude in Section~\ref{sec:discussion}.

\section{Main Results}\label{sec:model}

We aim to design non-reversible anchored Langevin dynamics 
in both unconstrained and constrained
settings that can accelerate
the convergence
to the target distributions.
A first attempt would replace $\nabla U_0$ by $(I_{d}+\alpha J)\nabla U_0$, where
$J$ is a constant skew-symmetric matrix.  This is a valid and useful special case, but
it becomes reversible when $U_0$ is constant in the constrained setting.  The latter is precisely the choice that
produces the simplest zeroth-order anchored sampler on a compact domain.  We therefore
use a more general circulation construction.  Let $J(x)$ be a possibly state-dependent
skew-symmetric matrix field satisfying $\nabla\cdot J=0$, and let $\psi$ be a stream
potential.  Define
\begin{equation}\label{eq:circulation-intro}
    c(x):=e^{U(x)}J(x)\nabla\psi(x).
\end{equation}
Since $\pi c=Z^{-1}J\nabla\psi$, we have
\begin{align}\label{eq:weighted-div-free-intro}
    \nabla\cdot(\pi c)
    &=Z^{-1}\sum_{i=1}^{d}\sum_{j=1}^{d}
      \left[(\partial_iJ_{ij})(\partial_j\psi)
             +J_{ij}\partial_{ij}\psi\right]=0.
\end{align}
The first term vanishes because $\nabla\cdot J=0$, while the second vanishes because
$J$ is skew-symmetric and $\nabla^2\psi$ is symmetric.
On a bounded domain we additionally choose $\psi$ so that $c$ is tangential at the
boundary.  The resulting infinitesimal generator is
\begin{equation}\label{eq:generator-intro}
    \calL_\alpha=\cS+\alpha\cA,
    \qquad
    \cS f:=a\left(\Delta f-\nabla U_0\cdot\nabla f\right)
           =\pi^{-1}\nabla\cdot(\pi a\nabla f),
    \qquad
    \cA f:=c\cdot\nabla f,
\end{equation}
where $\cS$ is symmetric and $\cA$ is anti-symmetric on their natural common core in
$L^2(\pi)$.  This common form structure is the link between the unconstrained and
reflected parts of the paper.  The stronger skew-adjoint statement needed for the
variance comparison is established below for the sandwiched operator \eqref{eq:G-def}.

Throughout the paper, $d\geq2$ and $\alpha\in\R$ controls the strength of the
nonreversible perturbation.  Let $J\in C^1(E;\R^{d\times d})$ be a possibly
state-dependent matrix field.  For clarity, we interpret its divergence columnwise:
\begin{equation}\label{eq:tensor-divergence}
    (\nabla\cdot J)_j:=\sum_{i=1}^d\partial_iJ_{ij},
    \qquad j=1,\ldots,d.
\end{equation}
Our general assumptions on $J$ are
\begin{equation}\label{eq:state-dependent-J-assumptions}
    J(x)^\top=-J(x),
    \qquad
    \nabla\cdot J(x)=0.
\end{equation}
When $J(x)$ is independent of $x$, we simply write $J$ and refer to it as the
constant-matrix case.  This specialization is used in the unconstrained Student
analysis and in the explicit examples.

\subsection{Unconstrained dynamics}

Let $E=\R^d$.  We assume that $U$ is locally Lipschitz, $U_0\in C^2(\R^d)$,
$\psi\in C^2(\R^d)$, and that $J$ satisfies
\eqref{eq:state-dependent-J-assumptions}.  Set
\begin{equation}\label{eq:a-c-def}
    a(x):=e^{(U-U_0)(x)},
    \qquad
    c(x):=e^{U(x)} J(x)\nabla\psi(x),
    \qquad
    b_\alpha(x):=-a(x)\nabla U_0(x)+\alpha c(x).
\end{equation}
We propose the \textit{non-reversible anchored Langevin dynamics} (NALD) as:
\begin{equation}\label{eq:nald}
    dX_t=b_\alpha(X_t)\,dt+\sqrt{2a(X_t)}\,dW_t.
\end{equation}

\begin{assumption}[Local regularity and well-posedness]
\label{ass:unconstrained-local}
The functions $a$ and $b_\alpha$ are locally Lipschitz, $a(x)>0$ for every
$x\in\R^d$, and the solution of \eqref{eq:nald} is nonexplosive.  Whenever
irreducibility is invoked, the diffusion is also assumed to be topologically
irreducible.
\end{assumption}
Notice that \eqref{eq:nald} does not require the everywhere existence of $\nabla U$.
The special choice
\begin{equation}\label{eq:canonical-psi}
    \psi=e^{-U_0}
\end{equation}
gives $c(x)=-a(x)J(x)\nabla U_0(x)$ and hence \eqref{eq:nald} becomes:
\begin{equation}\label{eq:canonical-anld}
    dX_t=-a(X_t)\left(I_{d}+\alpha J(X_t)\right)\nabla U_0(X_t)\,dt
          +\sqrt{2a(X_t)}\,dW_t,
\end{equation}
which is the most direct anti-symmetric perturbation of anchored Langevin dynamics.

\subsection{Divergence-free state-dependent skew fields}

Here $E$ may be either $\R^d$ or $\overline K$, as appropriate.  The assumptions in
\eqref{eq:state-dependent-J-assumptions} ensure that the general matrix field $J(x)$
produces a divergence-free weighted current.  For a skew-symmetric matrix field,
vanishing column divergence is equivalent, up to sign, to vanishing row divergence.

\begin{proposition}[State-dependent stream currents]
\label{prop:state-dependent-current}
Let $J\in C^1(E;\R^{d\times d})$ satisfy
\eqref{eq:state-dependent-J-assumptions}.  For $\psi\in C^2(E)$, the current
$c$ in \eqref{eq:a-c-def} satisfies $\nabla\cdot(\pi c)=0$.
Consequently, whenever the corresponding diffusion is well posed and, in the reflected
case, $c\cdot n=0$, all the invariance and form-anti-symmetry conclusions of
Theorem~\ref{thm:invariance} hold for the general state-dependent model.

In particular, the canonical choice $\psi=e^{-U_0}$ gives
\begin{equation}\label{eq:state-dependent-canonical}
    dX_t=-a(X_t)\left(I_{d}+\alpha J(X_t)\right)\nabla U_0(X_t)\,dt
          +\sqrt{2a(X_t)}\,dW_t.
\end{equation}
On $\overline K$, the ordinary normal reflection term $-n(X_t)dL_t$ is added.
\end{proposition}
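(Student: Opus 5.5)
The plan is to compute $\pi c$ explicitly and take its divergence componentwise. From \eqref{eq:a-c-def} and $\pi=Z^{-1}e^{-U}$, the factor $e^{U}$ in $c$ cancels the density, so $\pi c=Z^{-1}J\nabla\psi$. This quantity is $C^1$ even though $U$ is only locally Lipschitz, because $J\in C^1$ and $\psi\in C^2$. Hence the divergence is classical. Using the column convention \eqref{eq:tensor-divergence}, I would write $(J\nabla\psi)_i=\sum_j J_{ij}\partial_j\psi$ and differentiate:
\[
\nabla\cdot(J\nabla\psi)=\sum_{i,j}(\partial_iJ_{ij})\partial_j\psi+\sum_{i,j}J_{ij}\partial_{ij}\psi .
\]
The first sum equals $\sum_j(\nabla\cdot J)_j\partial_j\psi=0$. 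The second vanishes because it pairs the skew matrix $J$ with the symmetric Hessian: exchanging $i\leftrightarrow j$ and using $J_{ji}=-J_{ij}$ shows it equals its own negative. I would also record the remark that row divergence is minus column divergence for skew $J$, so the sign convention is irrelevant.

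For the ``consequently'' part, I would observe that Theorem~\ref{thm:invariance} (stated later, but assumed) needs only three inputs. The first is that the perturbation current $c$ satisfies $\nabla\cdot(\pi c)=0$. The second is, in the reflected case, $c\cdot n=0$ on $\partial K$. The third is well-posedness. I would verify form anti-symmetry directly. For $f,g$ in the common core ($C_c^2$ on $\R^d$, or $C^2(\overline K)$ with the Neumann condition), compute
\[
\int_E (c\cdot\nabla f)\,g\,d\pi+\int_E f\,(c\cdot\nabla g)\,d\pi=\int_E \pi c\cdot\nabla(fg)\,dx=-\int_E fg\,\nabla\cdot(\pi c)\,dx+\int_{\partial K}fg\,\pi\,c\cdot n\,dS .
\]
Both terms on the right vanish. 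Taking $g=1$ gives $\int\cA f\,d\pi=0$. Combined with $\int\cS f\,d\pi=0$, this gives $\int\calL_\alpha f\,d\pi=0$, so $\pi$ is infinitesimally invariant. The passage from this to true invariance is then imported verbatim from the proof of Theorem~\ref{thm:invariance}, since nothing there used constancy of $J$.

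The canonical case is a direct substitution. With $\psi=e^{-U_0}$ we have $\nabla\psi=-e^{-U_0}\nabla U_0$, so $c=-e^{U-U_0}J\nabla U_0=-aJ\nabla U_0$. Then $b_\alpha=-a(I_d+\alpha J)\nabla U_0$, which is \eqref{eq:state-dependent-canonical}.

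The only delicate point is regularity on the bounded domain. There the integration by parts requires $\pi c\in C^1(\overline K)$ and a $C^1$ boundary, and I would state these explicitly. The algebraic identity itself is routine.
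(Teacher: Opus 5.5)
Your proposal is correct and follows the paper's proof essentially step for step. Both arguments use the reduction $\pi c=Z^{-1}J\nabla\psi$, split $\nabla\cdot(J\nabla\psi)$ into a term killed by the column-divergence condition and a term killed by the Frobenius orthogonality of the skew $J$ and the symmetric Hessian, defer the invariance and anti-symmetry consequences to the integration by parts of Theorem~\ref{thm:invariance}, and finish with the direct substitution $\psi=e^{-U_0}$. The only difference is that you write out that integration by parts yourself, including the boundary term $\int_{\partial K}fg\,\pi\,c\cdot n\,dS$ and the $C^1(\overline K)$ regularity of $\pi c$, which the paper leaves implicit.
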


\begin{proof}
Since $\pi c=Z^{-1}J\nabla\psi$, we have
\begin{align*}
    \nabla\cdot(\pi c)
       =Z^{-1}\sum_{i=1}^d\sum_{j=1}^d
      (\partial_iJ_{ij})(\partial_j\psi)
             +Z^{-1}\sum_{i=1}^d\sum_{j=1}^d
     J_{ij}\partial_{ij}\psi=0.
\end{align*}
The first sum vanishes by \eqref{eq:tensor-divergence}--
\eqref{eq:state-dependent-J-assumptions}, and the second is zero because a
skew-symmetric matrix is orthogonal in the Frobenius inner product to the symmetric
Hessian.  The remaining conclusions follow by the integration-by-parts argument of
Theorem~\ref{thm:invariance}.  Finally,
\[
e^UJ\nabla\left(e^{-U_0}\right)
=-e^Ue^{-U_0}J\nabla U_{0}=-aJ\nabla U_0,
\]
which proves
\eqref{eq:state-dependent-canonical}.
\end{proof}

\subsection{Normally reflected dynamics}

Let $K\subset\R^d$ be a bounded connected domain with sufficiently smooth boundary,
and write $n(x)$ for the outward unit normal at $x\in\partial K$.  We impose the
zero-flux condition:
\begin{equation}\label{eq:tangent-condition}
    c(x)\cdot n(x)=0,
    \qquad x\in\partial K.
\end{equation}
We propose the \textit{non-reversible reflected anchored Langevin dynamics} (NRALD) as:
\begin{equation}\label{eq:nrald}   dX_t=b_\alpha(X_t)\,dt+\sqrt{2a(X_t)}\,dW_t-n(X_t)dL_t,
    \qquad X_t\in\overline K,
\end{equation}
where $L_t$ is nondecreasing and $dL_t$ is supported on
$\{t:X_t\in\partial K\}$.  The infinitesimal generator has the
Neumann domain
\begin{equation}\label{eq:neumann-domain}
    \calD_N:=\left\{f\in C^2(\overline K):\nabla f\cdot n=0
    \text{ on }\partial K\right\}.
\end{equation}
We write
\begin{equation}\label{eq:positive-neumann-domain}
    \calD_{N,+}:=\left\{u\in\calD_N:u>0\ \text{on }\overline K\right\}
\end{equation}
for its strictly positive cone.
No oblique reflection is needed.

The boundary condition \eqref{eq:tangent-condition} can be enforced either through the
stream potential or directly through a state-dependent skew tensor.  The latter mechanism
is the normal-reflection condition used in \cite{wang2026constrained}.

\begin{lemma}[Two normal-reflection mechanisms]\label{lem:stream-boundary}
Let $J:\overline K\to\R^{d\times d}$ be pointwise skew-symmetric and set
$c=e^U J\nabla\psi$.  Either one of the following conditions is sufficient for
$c\cdot n=0$ on $\partial K$:
\begin{enumerate}[label=\textup{(\roman*)},leftmargin=2.5em]
    \item $\psi$ is constant on every connected component of $\partial K$;
    \item $J(x)n(x)=0$ for every $x\in\partial K$.
\end{enumerate}
\end{lemma}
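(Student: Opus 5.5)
The plan is to compute $c\cdot n$ pointwise on $\partial K$ and reduce it to a scalar expression that vanishes under either hypothesis. Fix $x\in\partial K$. Because $e^{U(x)}>0$, we have
\[
c(x)\cdot n(x)=e^{U(x)}\,n(x)^\top J(x)\nabla\psi(x).
\]
Since $J(x)^\top=-J(x)$, this can be rewritten as
\[
n^\top J\nabla\psi=(J^\top n)\cdot\nabla\psi=-(J n)\cdot\nabla\psi .
\]
The whole argument therefore comes down to showing that the bilinear quantity $n^\top J\nabla\psi$ vanishes. Each of the two conditions kills one of its two arguments: condition (i) kills the tangential part of $\nabla\psi$, and condition (ii) kills $Jn$.

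For case (ii), the identity above does the work directly. If $J(x)n(x)=0$, then $c(x)\cdot n(x)=-e^{U(x)}(J(x)n(x))\cdot\nabla\psi(x)=0$. No assumption on $\psi$ beyond differentiability up to the boundary is needed.

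For case (i), decompose $\nabla\psi(x)$ into its normal and tangential components,
\[
\nabla\psi=(\partial_n\psi)\,n+\nabla_{\partial K}\psi .
\]
Since $\partial K$ is smooth and $\psi\in C^1(\overline K)$ is constant on the connected component of $\partial K$ containing $x$, the derivative of $\psi$ along any tangent curve $\gamma(t)\subset\partial K$ through $x$ vanishes. Hence $\nabla\psi(x)\cdot v=0$ for every tangent vector $v\in T_x\partial K$, so $\nabla\psi(x)=(\partial_n\psi(x))\,n(x)$. Substituting this gives
\[
c\cdot n=e^{U}\,\partial_n\psi\;n^\top J n=0,
\]
because $v^\top Jv=0$ for any skew-symmetric $J$ and any vector $v$.

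The only step needing care is the claim in case (i) that constancy on a component forces $\nabla\psi$ to be normal. This requires $\psi$ to be $C^1$ up to the boundary, so that the one-sided gradient exists and the chain rule applies along boundary curves. It also requires $\partial K$ to be a $C^1$ hypersurface, so that $T_x\partial K=n(x)^\perp$. Both are covered by the standing smoothness assumptions on $K$ and $\psi$. Everything else is linear algebra.
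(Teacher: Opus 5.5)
Your proof is correct and follows the paper's argument: in case (ii) you use skew-symmetry to write $n^\top J\nabla\psi=-(Jn)\cdot\nabla\psi=0$, and in case (i) you use that constancy of $\psi$ on boundary components forces $\nabla\psi$ to be parallel to $n$, so that $n^\top J n=0$. Your remarks on needing $\psi\in C^1(\overline K)$ and a $C^1$ boundary make explicit what the paper's short proof assumes without stating.
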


\begin{proof}
Under (i), all tangential derivatives of $\psi$ vanish, so $\nabla\psi$ is parallel to
$n$ and $n^\top J\nabla\psi=0$.  Under (ii), skew-symmetry gives
$n^\top J\nabla\psi=-(Jn)\cdot\nabla\psi=0$.
\end{proof}

\begin{remark}[Dimension dependence of $Jn=0$]\label{rem:dimension-Jn}
The tensor boundary mechanism admits natural nontrivial constructions from dimension
three onward.  

(i) In $d=2$, every skew tensor has the form
\begin{equation*}
    J(x)=j(x)
    \begin{pmatrix}0&-1\\1&0\end{pmatrix}.
\end{equation*}
Condition $\nabla\cdot J=0$ forces $\nabla j=0$, while
$Jn=0$ at any boundary point then forces $j=0$.  Thus the simultaneous
conditions in Proposition~\ref{prop:state-dependent-current} and Lemma~\ref{lem:stream-boundary}(ii)
give only the trivial tensor in two dimensions.

(ii) In $d=3$, let
\begin{equation}\label{eq:cross-product-tensor}
    J(x)v=\nabla\phi(x)\times v.
\end{equation}
Then $J$ is skew-symmetric and
$\nabla\cdot J=\operatorname{curl}(\nabla\phi)=0$.  If $\phi$ is constant on
$\partial K$, then $\nabla\phi$ is normal to the boundary and hence
$Jn=0$.  On a ball, the choice $\phi(x)=\abs{x}^2/2$ gives
$J(x)v=x\times v$.  

(iii) On higher-dimensional balls, further examples follow by
embedding such $3\times3$ blocks.  More precisely, fix a triple of coordinate
indices $I=\{i_1,i_2,i_3\}$, let $P_I:\R^d\to\R^3$ be the corresponding
coordinate projection, and define
\begin{equation}\label{eq:higher-dimensional-ball-tensor}
    J_I(x)v
    :=P_I^\top\left((P_Ix)\times(P_Iv)\right).
\end{equation}
Then $J_I(x)$ is skew-symmetric, and a componentwise calculation with the
three-dimensional Levi--Civita tensor gives
\begin{equation*}
    \nabla\cdot J_I=0,
    \qquad
    J_I(x)x=P_I^\top\left((P_Ix)\times(P_Ix)\right)=0.
\end{equation*}
Consequently, on $\partial B_R(0)$, where $n(x)=x/R$, one has
$J_I(x)n(x)=0$.  Any finite linear combination of tensors of the form
\eqref{eq:higher-dimensional-ball-tensor} has the same three properties and
provides a nontrivial construction in every dimension $d\geq3$.
Notice, however, that this particular construction
annihilates a radial $\nabla U_0$; the constant-$J$ Student construction and the more
general stream choice therefore remain essential.
\end{remark}


\subsection{Sublevel-set constraints and normal reflection}

Let $\rho\in C^2(\R^d)$ and suppose
\begin{equation}\label{eq:sublevel-domain}
    K=\{x:\rho(x)<0\},
    \qquad
    \nabla\rho(x)\neq0\quad\text{on }\partial K.
\end{equation}
Choose $\psi=\rho$.  Since $\psi=0$ on $\partial K$, Lemma~\ref{lem:stream-boundary}
shows that
\begin{equation}\label{eq:sublevel-current}
    c(x)=e^{U(x)}J(x)\nabla\rho(x)
\end{equation}
is tangent to the boundary.  Thus \eqref{eq:nrald} preserves the target on $K$ with
ordinary normal reflection for any reference potential $U_0$ satisfying the regularity
assumptions.

For a ball $K=B(0,R)$, take
\begin{equation*}
    \rho(x)=\frac12\left(\abs{x}^2-R^2\right).
\end{equation*}
Then $c(x)=e^{U(x)}J(x)x$ is an angular drift.  In particular, for the constant anchor
$U_0\equiv0$, the sampler is
\begin{equation}\label{eq:zeroth-order-reflected}
    dX_t=\alpha e^{U(X_t)}J(X_t)X_t\,dt
          +\sqrt{2e^{U(X_t)}}\,dW_t-n(X_t)dL_t.
\end{equation}
It is genuinely nonreversible and uses $U$ only through function values.  
Equation \eqref{eq:zeroth-order-reflected} separates the two mechanisms clearly: the
state-dependent noise corrects the target density, while the angular current breaks
detailed balance without altering that density.

\subsection{Constraint sets and skew tensors}
The constraint sets and the state-dependent skew tensors below follow the
construction of \citet{wang2026constrained}, we define everything explicitly for the
present setting. We consider two constraint sets. The first is the centered
ball
\begin{equation}
\label{eq:ball}
K_r=\left\{x\in\mathbb{R}^{d}:\|x\|_2^{2}\le r\right\},\qquad
n(x)=\frac{x}{\|x\|_2}\ \ \text{for } x\in\partial K_r,
\end{equation}
with $r=1$. The second is a smoothed $\ell_1$ ball: for $\varepsilon>0$ and
$\Lambda>d\varepsilon$,
\begin{equation}
\label{eq:l1ball}
K=\left\{x\in\mathbb{R}^{d}: g_\varepsilon(x)\le\Lambda\right\},\qquad
g_\varepsilon(x):=\sum_{i=1}^{d}\big(x_i^{2}+\varepsilon^{2}\big)^{1/2},
\end{equation}
which is the sublevel set \eqref{eq:sublevel-domain} with
$\rho=g_\varepsilon-\Lambda$. Since $\min_{x}g_\varepsilon(x)=d\varepsilon<\Lambda$ is
attained only at $x=0\in K^\circ$, we have $\nabla g_\varepsilon(x)\neq0$ on
$\partial K=\{g_\varepsilon=\Lambda\}$, and the outward unit normal is
\begin{equation}
\label{eq:l1normal}
n(x)=\frac{\nabla g_\varepsilon(x)}{\|\nabla g_\varepsilon(x)\|_2},\qquad
\partial_i g_\varepsilon(x)=\frac{x_i}{\sqrt{x_i^{2}+\varepsilon^{2}}},\quad i=1,\dots,d .
\end{equation}
This is the case $p=1$ of the smoothed $\ell_p$ ball
$\left\{x:\sum_{i=1}^{d}(x_i^2+\varepsilon^2)^{p/2}\le\Lambda\right\}$. As $\varepsilon\to0$ the
entries of the tensor below behave like $\operatorname{sign}(x_i)|x_i|^{p-1}$,
which is $C^{1}$ only for $p\ge2$; for $p=1$ a strictly positive smoothing
parameter $\varepsilon$ is therefore needed for $J$ to be $C^{1}$.
 
On both sets we use the block cross-product construction, let $\phi\in C^\infty(\mathbb{R}^d)$
be a potential that is constant on $\partial K$. Write $d=3m$ and, for
$\ell=1,\dots,m$, let $(a_\ell,b_\ell,c_\ell):=(3\ell-2,\,3\ell-1,\,3\ell)$ and
define the block axial vector
\begin{equation}
\label{eq:axial}
k^{(\ell)}(x):=s_\ell\begin{pmatrix}\partial_{a_\ell}\phi(x)\\ \partial_{b_\ell}\phi(x)\\ \partial_{c_\ell}\phi(x)\end{pmatrix},
\qquad s_\ell\neq0 .
\end{equation}
Let $J^{(\ell)}(x)$ be the $d\times d$ matrix whose only nonzero entries form
the $3\times3$ block on the rows and columns $\{a_\ell,b_\ell,c_\ell\}$,
\begin{equation}
\label{eq:block}
\Big[J^{(\ell)}(x)\Big]_{\{a_\ell,b_\ell,c_\ell\}\times\{a_\ell,b_\ell,c_\ell\}}
=\begin{pmatrix}
0 & -k^{(\ell)}_3(x) & k^{(\ell)}_2(x)\\
k^{(\ell)}_3(x) & 0 & -k^{(\ell)}_1(x)\\
-k^{(\ell)}_2(x) & k^{(\ell)}_1(x) & 0
\end{pmatrix},
\end{equation}
i.e.\ $J^{(\ell)}(x)v=P_{I_\ell}^\top\left(k^{(\ell)}(x)\times P_{I_\ell}v\right)$ with
$I_\ell=\{a_\ell,b_\ell,c_\ell\}$, and set
\begin{equation}
\label{eq:Jsum}
J(x):=\sum_{\ell=1}^{m}J^{(\ell)}(x)
=\begin{pmatrix}
J^{(1)}(x_{1:3}) & 0_{3\times3} & \cdots & 0_{3\times3}\\
0_{3\times3} & J^{(2)}(x_{4:6}) & \cdots & 0_{3\times3}\\
\vdots & \vdots & \ddots & \vdots\\
0_{3\times3} & 0_{3\times3} & \cdots & J^{(m)}(x_{3m-2:3m})
\end{pmatrix},
\end{equation}
where $0_{3\times3}$ is the $3\times3$ zero matrix. (If $d\not\equiv0\pmod 3$, the
remaining one or two coordinates are left without a block, which does not
affect the properties below.) The two constraint sets correspond to two
choices of $\phi$.
 
\subsubsection{Ball constraint} Take $\phi(x)=\frac12\|x\|_2^{2}$, so that $\phi\equiv r/2$ on
$\partial K_r$ and $k^{(\ell)}(x)=s_\ell\,x_{I_\ell}$. With $d=9$ this is
\begin{equation}
\label{eq:Js}
J_s(x)=\begin{pmatrix}J^{(1)}(x_1,x_2,x_3)&0_{3\times3}&0_{3\times3}\\
0_{3\times3}&J^{(2)}(x_4,x_5,x_6)&0_{3\times3}\\
0_{3\times3}&0_{3\times3}&J^{(3)}(x_7,x_8,x_9)\end{pmatrix},
\end{equation}
with
\begin{equation}
\label{eq:Js-block}
J^{(1)}(x_1,x_2,x_3)=\begin{pmatrix}0&-s_1x_3&s_1x_2\\ s_1x_3&0&-s_1x_1\\ -s_1x_2&s_1x_1&0\end{pmatrix},
\end{equation}
and $J^{(2)}$, $J^{(3)}$ are obtained by replacing $(s_1;x_1,x_2,x_3)$ with
$(s_2;x_4,x_5,x_6)$ and $(s_3;x_7,x_8,x_9)$, respectively.
 
\subsubsection{Smoothed \texorpdfstring{$\ell_1$}{l1} ball constraint} Take $\phi(x)=\Lambda-g_\varepsilon(x)$, so that
$\phi\equiv0$ on $\partial K$ and, by \eqref{eq:l1normal},
\begin{equation}
\label{eq:kg}
k^{(\ell)}(x)=-s_\ell\begin{pmatrix}
x_{a_\ell}\big/\sqrt{x_{a_\ell}^{2}+\varepsilon^{2}}\\[2pt]
x_{b_\ell}\big/\sqrt{x_{b_\ell}^{2}+\varepsilon^{2}}\\[2pt]
x_{c_\ell}\big/\sqrt{x_{c_\ell}^{2}+\varepsilon^{2}}
\end{pmatrix},
\end{equation}
i.e.\ minus the scaled block gradient of $g_\varepsilon$ (the soft-sign of
$x_{I_\ell}$). With $d=9$ the resulting tensor is
\begin{equation}
\label{eq:Jg}
J_g(x)=\begin{pmatrix}J_g^{(1)}(x_1,x_2,x_3)&0_{3\times3}&0_{3\times3}\\
0_{3\times3}&J_g^{(2)}(x_4,x_5,x_6)&0_{3\times3}\\
0_{3\times3}&0_{3\times3}&J_g^{(3)}(x_7,x_8,x_9)\end{pmatrix},
\end{equation}
with
\begin{equation}
\label{eq:Jg-block}
J_g^{(1)}(x_1,x_2,x_3)=s_1\begin{pmatrix}
0 & \dfrac{x_3}{\sqrt{x_3^{2}+\varepsilon^{2}}} & -\dfrac{x_2}{\sqrt{x_2^{2}+\varepsilon^{2}}}\\[8pt]
-\dfrac{x_3}{\sqrt{x_3^{2}+\varepsilon^{2}}} & 0 & \dfrac{x_1}{\sqrt{x_1^{2}+\varepsilon^{2}}}\\[8pt]
\dfrac{x_2}{\sqrt{x_2^{2}+\varepsilon^{2}}} & -\dfrac{x_1}{\sqrt{x_1^{2}+\varepsilon^{2}}} & 0
\end{pmatrix},
\end{equation}
and again $J_g^{(2)}$, $J_g^{(3)}$ are obtained by substituting
$(s_2;x_4,x_5,x_6)$ and $(s_3;x_7,x_8,x_9)$.
 
Both tensors satisfy the requirements of our framework:
\begin{enumerate}
\item \emph{Regularity and skew-symmetry.} Each block \eqref{eq:block} is the
cross-product matrix of $k^{(\ell)}$, hence skew-symmetric, and
$k^{(\ell)}\in C^\infty$ because $\phi\in C^\infty$ (for the smoothed
$\ell_1$ ball this uses $\varepsilon>0$). Thus $J\in C^\infty(K;\mathbb{R}^{d\times d})$
and $J^\top=-J$.
\item \emph{Boundary condition $J(x)n(x)=0$.} Since $\phi$ is constant on
$\partial K$, its gradient is normal there: $\nabla\phi(x)=\|x\|_2\,n(x)$ on
$\partial K_r$ and $\nabla\phi(x)=-\|\nabla g_\varepsilon(x)\|_2\,n(x)$ on
\eqref{eq:l1ball}. Hence $k^{(\ell)}(x)\parallel P_{I_\ell}n(x)$ for every
$\ell$, the cross product $k^{(\ell)}(x)\times P_{I_\ell}n(x)$ vanishes, and
$J(x)n(x)=\sum_\ell J^{(\ell)}(x)n(x)=0$ on $\partial K$. (On the ball one has
in fact $J_s(x)x=0$ for every $x\in\mathbb{R}^d$.)
\item \emph{Divergence-free.} With $(a,b,c)=(a_\ell,b_\ell,c_\ell)$ and
$k=(k_1,k_2,k_3)^\top=s_\ell(\partial_a\phi,\partial_b\phi,\partial_c\phi)^\top$,
the nonzero entries of $J^{(\ell)}$ are $J_{ab}=-k_3$, $J_{ac}=k_2$,
$J_{bc}=-k_1$ and their negatives, so that
\[
\left(\nabla\cdot J^{(\ell)}\right)_a=\partial_bJ_{ab}+\partial_cJ_{ac}
=-\partial_bk_3+\partial_ck_2
=s_\ell\big(-\partial_b\partial_c\phi+\partial_c\partial_b\phi\big)=0,
\]
and likewise $\left(\nabla\cdot J^{(\ell)}\right)_b=\left(\nabla\cdot J^{(\ell)}\right)_c=0$, while
the remaining components vanish trivially. Summing over $\ell$ gives
$\nabla\cdot J=0$.
\end{enumerate}
The scales $s_1,s_2,s_3$ can be chosen
independently; in our experiments we take $s_1=s_2=s_3=:s$.
Finally, both domains have $C^\infty$ boundary, $a$ is Lipschitz with
$a\in[e^{-(d-1)\lambda\delta},1]$, and the drift $-a(I_d+\alpha J)\nabla U_0$ is
Lipschitz on $K$, so Assumption~\ref{ass:compact} holds and
Theorems~\ref{thm:rate-decomposition} and \ref{thm:unconstrained-ldp} and
Corollary~\ref{cor:rate-observable} apply to the continuous-time dynamics with
either tensor.

\subsection{Invariance and nonreversibility}

The infinitesimal generator of either \eqref{eq:nald} or \eqref{eq:nrald} is
\begin{equation}\label{eq:generator-decomposition}
    \calL_\alpha f=\cS f+\alpha\cA f,
\end{equation}
where
\begin{equation}\label{eq:S-A}
    \cS f=a(\Delta f-\nabla U_0\cdot\nabla f)
          =\pi^{-1}\nabla\cdot(\pi a\nabla f),
    \qquad
    \cA f=c\cdot\nabla f.
\end{equation}
In the unconstrained case all integrations by parts below are understood first for
compactly supported test functions and then extended by closure.

\begin{theorem}[Invariant measure and operator decomposition]\label{thm:invariance}
Assume that \eqref{eq:nald}, or respectively \eqref{eq:nrald}, is well posed.  In the
reflected case assume \eqref{eq:tangent-condition}.  Then $\pi$ in
\eqref{eq:target-intro} is invariant.  Moreover, for $f,g$ in a common core in
$L^2(\pi)$,
\begin{equation}\label{eq:adjoints}
    \ip{f}{\cS g}=\ip{\cS f}{g},
    \qquad
    \ip{f}{\cA g}=-\ip{\cA f}{g},
    \qquad
    \ip{f}{\calL_\alpha g}=\ip{(\cS-\alpha\cA)f}{g}.
\end{equation}
Consequently, whenever $\alpha\neq0$ and $c$ is not identically zero, the process is
nonreversible.  If its semigroup is strong Feller and topologically irreducible, $\pi$ is
its unique invariant probability measure.
\end{theorem}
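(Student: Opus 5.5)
The plan is to prove the three adjoint relations in \eqref{eq:adjoints} first, then read off invariance by taking $f\equiv1$ (or a cutoff approximation of it in the unconstrained case), and finally derive nonreversibility and uniqueness.

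\emph{Symmetry of $\cS$.} Work with test functions: $C_c^\infty(\R^d)$ in the unconstrained case and $\calD_N$ in the reflected case. Use the divergence form $\cS g=\pi^{-1}\nabla\cdot(\pi a\nabla g)$. This identity is a direct computation from $\pi a=Z^{-1}e^{-U_0}$, which is $C^2$ even though $U$ need not be differentiable. That fact is why $\cS$ is well defined without $\nabla U$. Green's formula gives
\[
\ip{f}{\cS g}=\int_{\partial K} f\,\pi a\,\nabla g\cdot n\,dS-\int_E \pi a\,\nabla f\cdot\nabla g\,dx .
\]
The boundary term is absent for compact support and vanishes on $\calD_N$. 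The remaining integral is symmetric in $f,g$, which proves the first relation.

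\emph{Antisymmetry of $\cA$.} By Proposition~\ref{prop:state-dependent-current}, $\nabla\cdot(\pi c)=0$. Since $\pi c=Z^{-1}J\nabla\psi$ is $C^1$, we may integrate by parts:
\[
\ip{f}{\cA g}+\ip{\cA f}{g}=\int_E \pi c\cdot\nabla(fg)\,dx=\int_{\partial K} fg\,\pi\,c\cdot n\,dS-\int_E fg\,\nabla\cdot(\pi c)\,dx .
\]
The boundary term is zero by \eqref{eq:tangent-condition} (or by compact support), and the interior term is zero by divergence-freeness. Here $\cA$ preserves the Neumann domain only up to regularity; it is enough that $c\cdot\nabla g$ makes sense for $g\in\calD_N$, so the computation stands. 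The third relation then follows by linearity.

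\emph{Invariance.} In the reflected case, $1\in\calD_N$, so $\int\calL_\alpha g\,d\pi=\ip{(\cS-\alpha\cA)1}{g}=0$ for every $g\in\calD_N$. Combined with the It\^o formula for the reflected SDE, where the local-time term $\nabla g\cdot n\,dL_t$ vanishes for $g\in\calD_N$, this gives $\frac{d}{dt}\E_\pi g(X_t)=0$ on a core, so $\pi$ is invariant.

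In the unconstrained case, $\int\calL_\alpha g\,d\pi=0$ for $g\in C_c^\infty$ shows that $\pi$ is infinitesimally invariant. I expect the main obstacle here: passing from infinitesimal invariance to invariance of the semigroup requires nonexplosion (Assumption~\ref{ass:unconstrained-local}). I would use the standard argument: localize by exit times of balls, apply It\^o's formula to $g\in C_c^\infty$, and use nonexplosion to remove the localization. Alternatively, I would invoke the known result (Bogachev--R\"ockner--Stannat type) that for a nonexplosive diffusion with locally Lipschitz coefficients, $\int\calL g\,d\pi=0$ on $C_c^\infty$ implies invariance, since $C_c^\infty$ is then a core for the $L^1(\pi)$ generator.

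\emph{Nonreversibility and uniqueness.} If the process were reversible, its generator would be symmetric in $L^2(\pi)$. That would force $\alpha\ip{f}{\cA g}=0$ for all test functions, hence $c\cdot\nabla g=0$ for all $g$, hence $c\equiv0$, a contradiction. Finally, a strong Feller, topologically irreducible semigroup has at most one invariant probability measure by Doob's theorem (via Khasminskii's argument), so $\pi$ is the unique one.
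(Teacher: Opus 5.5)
Your proposal is correct and follows the same route as the paper. Both prove symmetry of $\cS$ and antisymmetry of $\cA$ by integrating by parts against $\pi a=Z^{-1}e^{-U_0}$ and $\nabla\cdot(\pi c)=0$, with boundary terms removed by the Neumann and tangency conditions, then take $f=1$ for invariance and use standard irreducibility arguments for uniqueness. You are more careful than the paper about passing from $\int\calL_\alpha g\,d\pi=0$ to invariance of the semigroup, but note that $\frac{d}{dt}\E_\pi g(X_t)=\E_\pi[\calL_\alpha g(X_t)]$ vanishes only once $X_t\sim\pi$ is already known, so in the reflected case that step should be a core (or Echeverr\'ia / Fokker--Planck uniqueness) argument, as your Bogachev--R\"ockner--Stannat citation supplies in the unconstrained case.
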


\begin{proof}
The identities $\pi a=Z^{-1}e^{-U_0}$ and $\pi c=Z^{-1}J(x)\nabla\psi$,
together with Proposition~\ref{prop:state-dependent-current}, imply
\begin{equation}\label{eq:currents}
    \nabla\cdot(\pi a\nabla f)=\pi\cS f,
    \qquad
    \nabla\cdot(\pi c)=0.
\end{equation}
For smooth $f,g$, integration by parts gives
\begin{equation}\label{eq:S-form}
    \int_E f\cS g\,d\pi
    =-\int_E a\nabla f\cdot\nabla g\,d\pi
    =\int_E g\cS f\,d\pi.
\end{equation}
In the reflected case the boundary term vanishes by the Neumann condition.  Similarly,
using \eqref{eq:currents},
\begin{equation}\label{eq:A-form}
    \int_E f\cA g\,d\pi=-\int_E g\cA f\,d\pi.
\end{equation}
The boundary term in \eqref{eq:A-form} vanishes by
\eqref{eq:tangent-condition}.  Taking $f=1$ proves invariance.  The remaining claims
follow from \eqref{eq:S-form}--\eqref{eq:A-form} and standard irreducibility arguments.
\end{proof}

\begin{remark}[Why the general stream construction is useful]\label{rem:constant-anchor}
For the canonical choice \eqref{eq:canonical-psi}, the anti-symmetric drift vanishes when
$U_0$ is constant.  In contrast, the general construction \eqref{eq:a-c-def} remains
nontrivial for $U_0\equiv c_0$: one may choose any nonconstant stream potential satisfying
the boundary or decay conditions.  This distinction is essential for zeroth-order
anchored samplers.
\end{remark}

\subsection{Random time change}

The anchored construction is most transparent as a state-dependent change of clock.
Define the reference non-reversible Langevin dynamics on $E=\mathbb{R}^{d}$:
\begin{equation}\label{eq:reference-diffusion}
    dZ_s=\left[-\nabla U_0(Z_s)+\alpha e^{U_0(Z_s)}J(Z_s)\nabla\psi(Z_s)\right]ds
          +\sqrt2\,dW_s,
\end{equation}
and the non-reversible reflected
Langevin dynamics on $E=K$ with the same normal reflection term as in \eqref{eq:nrald}:
\begin{equation}\label{eq:reference-diffusion:constrained}
    dZ_s=\left[-\nabla U_0(Z_s)+\alpha e^{U_0(Z_s)}J(Z_s)\nabla\psi(Z_s)\right]ds
          +\sqrt2\,dW_s-n(Z_{s})dL_{s}.
\end{equation}
When $Z_0:=\int_Ee^{-U_0(x)}dx<\infty$, the invariant probability measure is
\begin{equation}\label{eq:pi0}
    \pi_0(dx)=Z_0^{-1}e^{-U_0(x)}\,dx.
\end{equation}
The measure $\pi_0, x\in\mathbb{E}$ is invariant
for \eqref{eq:reference-diffusion} when $E=\mathbb{R}^{d}$
and for \eqref{eq:reference-diffusion:constrained} when $E=K$.
We have the following random-time change representation.

\begin{proposition}[Random-time change representation]\label{prop:time-change}
Let
\begin{equation}\label{eq:clock}
    T_s=\int_0^s\frac{dr}{a(Z_r)},
    \qquad
    \tau_t=\inf\{s\geq0:T_s>t\}.
\end{equation}
Assume $T_s\to\infty$ almost surely.  Then $X_t=Z_{\tau_t}$ has infinitesimal generator
$\calL_\alpha$ in \eqref{eq:generator-decomposition}, and hence solves
\eqref{eq:nald} or \eqref{eq:nrald} up to the standard identification of the time-changed
Brownian motion and boundary regulator.  Its invariant measure satisfies
\begin{equation}\label{eq:time-change-invariant}
    \pi(dx)\ \propto\left(a(x)\right)^{-1}e^{-U_0(x)}\,dx=e^{-U(x)}\,dx.
\end{equation}
\end{proposition}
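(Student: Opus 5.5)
The argument rests on the standard fact that a time change by an additive functional multiplies the generator by the speed. The reference diffusion $Z$ in \eqref{eq:reference-diffusion} (resp.\ \eqref{eq:reference-diffusion:constrained}) has generator
\[
\calL^0 f=\Delta f-\nabla U_0\cdot\nabla f+\alpha e^{U_0}J\nabla\psi\cdot\nabla f ,
\]
and in the reflected case its domain is $\calD_N$. Since $T_s=\int_0^s a(Z_r)^{-1}dr$ is continuous and strictly increasing, and $T_s\to\infty$ by assumption, the inverse $\tau_t$ is finite, continuous and strictly increasing, with $d\tau_t/dt=a(Z_{\tau_t})=a(X_t)$. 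I will then verify the claimed generator in two steps.

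\emph{Generator.} Fix $f\in C^2_c$ (or $f\in\calD_N$ in the reflected case). By It\^o's formula, $M_s=f(Z_s)-f(Z_0)-\int_0^s\calL^0 f(Z_r)\,dr$ is a martingale; in the reflected case the $dL$ term drops out because $\nabla f\cdot n=0$. Each $\tau_t$ is a stopping time for the filtration of $Z$, so optional stopping (localized if needed) shows that $M_{\tau_t}$ is a martingale in the time-changed filtration. Substituting $r=\tau_u$, so $dr=a(X_u)\,du$, gives
\[
f(X_t)-f(X_0)-\int_0^t a(X_u)\,\calL^0 f(X_u)\,du=M_{\tau_t}.
\]
Moreover $a\calL^0 f=a(\Delta f-\nabla U_0\cdot\nabla f)+\alpha e^{U-U_0}e^{U_0}J\nabla\psi\cdot\nabla f=\cS f+\alpha\cA f=\calL_\alpha f$. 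Hence $X$ solves the martingale problem for $\calL_\alpha$. By well-posedness of \eqref{eq:nald}/\eqref{eq:nrald}, i.e.\ local Lipschitz coefficients (Assumption~\ref{ass:unconstrained-local}), $X$ has the law of the SDE solution.

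\emph{Pathwise identification.} Write $\int_0^{\tau_t}\sqrt2\,dW_r=\int_0^t\sqrt{2a(X_u)}\,d\widetilde W_u$, where $\widetilde W_t:=\int_0^{\tau_t}a(Z_r)^{-1/2}dW_r$. Its quadratic variation is $\int_0^{\tau_t}a(Z_r)^{-1}dr=T_{\tau_t}=t$ (times $I_d$), so $\widetilde W$ is a Brownian motion by L\'evy's characterization. Likewise $\widetilde L_t:=L_{\tau_t}$ is nondecreasing and increases only when $X_t\in\partial K$. This gives the stated identification.

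\emph{Invariant measure.} The direct route is that, by the above, $X$ has generator $\calL_\alpha$, so Theorem~\ref{thm:invariance} applies and $\pi$ is invariant; I then only need to check the algebra $a^{-1}e^{-U_0}=e^{U_0-U}e^{-U_0}=e^{-U}$. As a consistency check, $\pi_0$ is $\calL^0$-invariant, and the standard speed-measure fact for time changes says the time-changed process has invariant measure $\propto a^{-1}\pi_0$, which is the same answer.

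The main obstacle is the rigorous martingale-problem transfer in the unbounded, possibly non-explosive setting. Optional stopping at $\tau_t$ needs localization by exit times of balls, and uniqueness of the martingale problem is needed to identify the law. I will handle this by assuming local Lipschitz coefficients, together with $T_s\to\infty$ to rule out explosion of $X$ in finite time.
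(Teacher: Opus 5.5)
Your proposal is correct and follows the paper's route. The inverse additive-functional time change multiplies the reference generator by $a$, and the identity $ae^{U_0}=e^U$ turns $a\widetilde{\calL}_\alpha$ into $\cS+\alpha\cA$; your optional-stopping, L\'evy-characterization and $L_{\tau_t}$ arguments supply the details the paper calls standard. The one small difference is the invariance step: you take it from Theorem~\ref{thm:invariance}, which in the reflected case relies on the tangency condition \eqref{eq:tangent-condition}, and keep the speed-measure rule only as a check, whereas the paper uses the speed-measure rule directly. Neither route needs $Z_0<\infty$.
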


\begin{proof}
The infinitesimal generator of \eqref{eq:reference-diffusion} is
\begin{equation*}
    \widetilde{\calL}_\alpha f
    =\Delta f-\nabla U_0\cdot\nabla f
      +\alpha e^{U_0}J(x)\nabla\psi\cdot\nabla f.
\end{equation*}
The inverse additive-functional time change in \eqref{eq:clock} multiplies this infinitesimal generator
by $a$.  Since $ae^{U_0}=e^U$, we obtain \eqref{eq:S-A}.  The invariant measure of a
time change with infinitesimal generator $a\widetilde{\calL}_\alpha$ is obtained by multiplying the
invariant speed measure by $a^{-1}$.  This gives \eqref{eq:time-change-invariant}
whether or not $Z_0$ is finite.
\end{proof}

\begin{remark}
The condition $T_s\to\infty$ as $s\to\infty$, which ensures that the inverse
clock $\tau_t$ is finite for every $t$, is automatic in the common
positive-recurrent case.
If $Z_0<\infty$ and the reference diffusion is ergodic, then the ergodic theorem gives
\begin{equation*}
    \frac{T_s}{s}\longrightarrow\int_Ea^{-1}d\pi_0=\frac{Z}{Z_0}>0
    \qquad\text{as }s\to\infty,\quad\text{almost surely}.
\end{equation*}
In particular, this applies on the compact reflected domain and to the Student family in
Corollary~\ref{cor:student-ldp}.
\end{remark}

\section{Non-Asymptotic Convergence Analysis}\label{sec:non:asymptotic}

Next, let us study the convergence speed
of the continuous-time process NALD $X_{t}$ in \eqref{eq:nald}
and NRALD $X_{t}$ in \eqref{eq:nrald}
to their invariant distributions.
Before we proceed, let us introduce two notions
of non-asymptotic convergence bound here.

First, we let $\lambda_{\alpha}$ be the spectral gap
of $\mathcal{L}$ in $L^{2}(\pi)$.
That is, 
\begin{equation}
\int_{E}\left(\mathbb{E}^{x}\left[f(X_{t})\right]-\pi(f)\right)^{2}\pi(x)dx
=\Vert P_{t}f-\pi(f)\Vert^{2}
\leq
C_{\alpha}\Vert f-\pi(f)\Vert^{2}e^{-2\lambda_{\alpha}t},
\end{equation}
for some constant $C_{\alpha}$.
In particular, $\lambda_{0}$ denotes
the spectral gap when $\alpha=0$, i.e. 
the case of the anchored Langevin dynamics.
We assume the existence of the spectral gap of $\mathcal{L}$ when $\alpha=0$, i.e. it is positive.

\begin{assumption}\label{assump: spectral gap}
Assume that $\lambda_{0}>0$.
\end{assumption}

It is worth noting that the existence of a spectral gap can be derived from the
logarithmic Sobolev inequality, as discussed in \cite{Bakry2013}.
Furthermore, the logarithmic Sobolev inequality itself can be established under
the weaker condition of the existence of a Lyapunov function for the diffusion
generator \citep{bakry2008simple, cattiaux2010note}. Explicit lower bounds for
the logarithmic Sobolev inequality in convex bounded domain have been explored
in works such as \cite{wang1997estimation}.

Next, we introduce the rate of convergence of $p(t,x,y)$,
the transition probability density of $X_{t}$, to the Gibbs distribution $\pi$ in the variational norm, which is equivalent to the total variation (TV) distance
up to a $1/2$ factor. Let us define:
\begin{equation}
\rho_{\alpha}:=\sup\left\{\rho:\int_{E}|p(t,x,y)-\pi(y)|dy\leq g(x)e^{-\rho t}\right\},
\end{equation}
for some $g(x)$ that may depend on $J$. In particular, $\rho_{0}$ denotes
the case when $\alpha=0$, i.e. the case of the anchored Langevin dynamics.

Before we proceed, let us 
introduce the Dirichlet form:
\begin{equation}\label{defn:Dirichlet}
\mathcal{E}(f,f):=-\int_{E}f\mathcal{L}f d\pi=\int_{E}a\Vert\nabla f\Vert^2 d\pi.
\end{equation}

We will obtain a non-asymptotic bound
for the convergence of the distribution of $X_{t}$
to the invariant distribution $\pi$ in \textit{total variation} (TV) distance.

\subsection{Unconstrained case}

Next, we consider the unconstrained case $E=\mathbb{R}^{d}$. We have the following finite-time non-asymptotic convergence guarantees in the TV distance.

\begin{theorem}\label{thm:TV:unconstrained}
Let $E=\mathbb{R}^{d}$ and suppose Assumptions~\ref{ass:unconstrained-local}
and \ref{assump: spectral gap} hold.
There exists some locally bounded function $g(x)$ such that
for any $X_{0}=x\in\mathbb{R}^{d}$,
\begin{equation}
\mathrm{TV}(\mathrm{Law}(X_{t}),\pi)
\leq g(x)e^{-\rho_{\alpha}t},
\end{equation}
where $\rho_{\alpha}\geq\rho_{0}>0$. 
\end{theorem}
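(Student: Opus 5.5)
The plan follows the classical argument of \cite{HHS05} for non-reversible perturbations of overdamped Langevin dynamics. That argument converts an $L^2(\pi)$ spectral gap into an $L^1$ (total variation) rate and compares the rates for $\alpha\neq0$ and $\alpha=0$ through the numerical range of the generator. There are three steps.

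\textbf{Step 1: the dynamics relaxes at least as fast in $L^2(\pi)$ as the reversible one.} By Theorem~\ref{thm:invariance}, $\cA$ is antisymmetric in $L^2(\pi)$, so for centered $f$ in the core
\[
\frac{d}{dt}\norm{P_tf}^2=2\ip{P_tf}{\calL_\alpha P_tf}=2\ip{P_tf}{\cS P_tf}=-2\,\cE(P_tf,P_tf).
\]
By Assumption~\ref{assump: spectral gap} and the self-adjointness of $\cS$, we have $\cE(h,h)\geq\lambda_0\norm{h}^2$ for centered $h$. Since $P_t$ preserves centering (because $\pi$ is invariant) and the core is dense, Gr\"onwall gives
\[
\norm{P_tf-\pi(f)}\leq e^{-\lambda_0 t}\norm{f-\pi(f)}.
\]
Hence $\lambda_\alpha\geq\lambda_0$ with $C_\alpha=1$. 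The same computation for the adjoint semigroup, whose generator is $\cS-\alpha\cA$, gives the same bound for the evolution of densities $h_t=d\mathrm{Law}(X_t)/d\pi$.

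\textbf{Step 2: pass from $L^2$ to TV with a locally bounded prefactor.} A point mass $\delta_x$ is not in $L^2(\pi)$, so first run the process for a unit time. Under Assumption~\ref{ass:unconstrained-local}, $a>0$ and the coefficients are locally Lipschitz, so the diffusion is locally uniformly elliptic. Parabolic regularity (hypoellipticity together with local Harnack or Gaussian-type bounds) then shows that $p(1,x,\cdot)$ is a density with
\[
g_1(x):=\norm{p(1,x,\cdot)/\pi}_{L^2(\pi)}<\infty,
\]
and that $g_1$ is locally bounded in $x$. Then, for $t\geq1$, Cauchy--Schwarz and Step~1 applied to $h_1-1$ give
\[
2\,\mathrm{TV}(\mathrm{Law}(X_t),\pi)=\norm{h_t-1}_{L^1(\pi)}\leq\norm{h_t-1}_{L^2(\pi)}\leq e^{-\lambda_\alpha(t-1)}\norm{h_1-1}_{L^2(\pi)}.
\]
For $t<1$, bound TV by $1$. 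Together these produce a bound of the form $g(x)e^{-\lambda_\alpha t}$ with $g$ locally bounded. It follows that $\rho_\alpha\geq\lambda_\alpha>0$; in particular $\rho_0\geq\lambda_0>0$.

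\textbf{Step 3: compare $\rho_\alpha$ with $\rho_0$.} Following \cite{HHS05}, identify $\rho_\alpha$ with the $L^2$ rate by showing that the $L^1$ rate equals the spectral bound of $\calL_\alpha$ on $L^2(\pi)$. This uses a Lyapunov-function and local-compactness argument that transfers $L^1$ decay rates (with a locally bounded prefactor) into $L^2$ decay rates and conversely. In the reversible case this gives $\rho_0=\lambda_0$. Combined with Step~2, this yields $\rho_\alpha\geq\lambda_\alpha\geq\lambda_0=\rho_0$.

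\textbf{Main obstacle.} I expect the equality $\rho_0=\lambda_0$, i.e.\ the inequality $\rho_0\leq\lambda_0$, to be the hardest part. Here I would invoke the reversible theory: by self-adjointness, the $L^1$ rate cannot exceed the bottom of the spectrum of $-\cS$ on centered functions, since one can test with approximate eigenfunctions localized on compacts. This holds under the mild regularity of Assumption~\ref{ass:unconstrained-local}, and is the anchored analogue of \cite[Theorem~1]{HHS05}. Alternatively, one can transfer the statement via the time change of Proposition~\ref{prop:time-change}.
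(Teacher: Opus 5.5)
Your proposal is correct and follows essentially the same route as the paper. It gets $\lambda_\alpha\geq\lambda_0$ from the antisymmetry of $\cA$ (Lemma~\ref{lem:lambda}). It gets $\rho_\alpha\geq\lambda_\alpha$ from Chapman--Kolmogorov, the adjoint semigroup, Cauchy--Schwarz and a local parabolic Harnack bound on $p(1,x,\cdot)/\pi$; that bound is best justified by Harnack in the backward variable combined with invariance of $\pi$, not by Gaussian bounds. It gets $\rho_0\leq\lambda_0$ by testing the self-adjoint reversible semigroup against compactly supported functions, as in \cite[Theorem~5]{HHS05}.

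You should drop the Step~3 claim that $\rho_\alpha$ equals the $L^2$ rate for general $\alpha$ via a Lyapunov argument. It is neither needed nor justified for the non-self-adjoint generator. Only the reversible identification $\rho_0=\lambda_0$ enters the chain $\rho_\alpha\geq\lambda_\alpha\geq\lambda_0=\rho_0$.
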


\begin{proof}
We adapt the proof of \cite[Theorems~4--5]{HHS05} to the locally uniformly
elliptic infinitesimal generator $\calL_\alpha=a\Delta+b_\alpha\cdot\nabla$.
Write
\[
    q_t(x,y):=\frac{p(t,x,y)}{\pi(y)}.
\]
For $t\geq1$, the Chapman--Kolmogorov equation and the adjoint semigroup give
\begin{align}
    \int_{\R^d}\abs{q_t(x,y)-1}\pi(y)\,dy
    &=
    \int_{\R^d}
      \abs{P_{t-1}^*\left(q_1(x,\cdot)-1\right)(y)}
      \pi(y)\,dy\notag\\
    &\leq
    \left\|P_{t-1}^*\left(q_1(x,\cdot)-1\right)\right\|_{L^2(\pi)}
      \notag\\
    &\leq
    C_\alpha e^{-\lambda_\alpha(t-1)}
    \left\|q_1(x,\cdot)-1\right\|_{L^2(\pi)}.
    \label{eq:unconstrained-TV-L2}
\end{align}
Here the last step uses the fact that $P_t$ and $P_t^*$ have the same
$L^2(\pi)$ operator norm on the mean-zero subspace.

It remains to justify that the last norm is finite and locally bounded in $x$.
Under Assumption~\ref{ass:unconstrained-local}, $a$ is bounded above and away
from zero on every compact set and $b_\alpha$ is locally bounded.  The local
parabolic Harnack inequality used in \cite[Theorem~4]{HHS05} therefore applies
to $\calL_\alpha$.  It yields a locally bounded function $H_\alpha$ such that
\begin{equation}\label{eq:local-kernel-bound}
    \sup_{y\in\mathbb{R}^{d}} q_1(x,y)\leq H_\alpha(x).
\end{equation}
Since $\int_{\mathbb{R}^{d}} q_1(x,y)\pi(y)\,dy=1$, we obtain
\[
    \int_{\mathbb{R}^{d}} \left(q_1(x,y)\right)^2\pi(y)\,dy
    \leq H_\alpha(x),
\]
so the prefactor in \eqref{eq:unconstrained-TV-L2} is locally bounded.
This proves $\rho_\alpha\geq\lambda_\alpha$.  The Dirichlet-form argument of
Lemma~\ref{lem:lambda}, which does not use compactness of the state space,
gives $\lambda_\alpha\geq\lambda_0$.

For the reversible process, the function furnished by
\eqref{eq:unconstrained-TV-L2} is locally bounded and hence belongs to
$L^1_{\rm loc}(\pi)$.  The compactly supported test-function argument of
\cite[Theorem~5]{HHS05} then gives the reverse comparison
$\rho_0\leq\lambda_0$.  Together with the preceding inequality for
$\alpha=0$, this yields $\rho_0=\lambda_0$, and hence
\[
    \rho_\alpha\geq\lambda_\alpha\geq\lambda_0=\rho_0>0.
\]
This completes the proof.
\end{proof}


\subsection{Constrained case}

Next, we consider the constrained case $E=K$. We have the following finite-time non-asymptotic convergence guarantees in the TV distance.

\begin{theorem}\label{thm:TV}
Let $E=K$. There exists some constant $\mathcal{K}>0$ such that
for any $X_{0}=x\in K$,
\begin{equation}
\mathrm{TV}(\mathrm{Law}(X_{t}),\pi)
\leq\mathcal{K}e^{-\rho_{\alpha}t},
\end{equation}
where $\rho_{\alpha}\geq\rho_{0}>0$. 
\end{theorem}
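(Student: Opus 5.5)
We follow the argument of Theorem~\ref{thm:TV:unconstrained}, using compactness of $\overline K$ to replace the locally bounded prefactor $g(x)$ by a uniform constant $\mathcal K$. The argument has three steps: an $L^2(\pi)\to L^1(\pi)$ reduction after a unit time, a uniform heat-kernel bound on $\overline K$, and the comparison $\lambda_\alpha\ge\lambda_0$ (Lemma~\ref{lem:lambda}) together with $\rho_0=\lambda_0$.

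\emph{Step 1: reduction to $L^2(\pi)$.} Write $q_t(x,y)=p(t,x,y)/\pi(y)$. For $t\ge1$, Chapman--Kolmogorov gives $q_t(x,\cdot)-1=P^*_{t-1}(q_1(x,\cdot)-1)$, where $P^*$ is the $L^2(\pi)$-adjoint semigroup. Its generator is $\cS-\alpha\cA$ by Theorem~\ref{thm:invariance}, with Neumann boundary conditions because $c\cdot n=0$ by \eqref{eq:tangent-condition}. By Cauchy--Schwarz with respect to the probability measure $\pi$,
\[
\int_K\abs{q_t(x,y)-1}\pi(y)\,dy\le C_\alpha e^{-\lambda_\alpha(t-1)}\norm{q_1(x,\cdot)-1}_{L^2(\pi)}.
\]
We use that $P_t$ and $P_t^*$ have equal norm on mean-zero functions.

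\emph{Step 2: uniform kernel bound.} We need $\sup_{x,y\in\overline K}q_1(x,y)\le H<\infty$. On the compact set $\overline K$, $a$ is continuous and positive, hence bounded above and below. The drift $b_\alpha$ is bounded, the boundary is smooth, and the reflection is normal. So the Neumann heat kernel of the uniformly elliptic operator $a\Delta+b_\alpha\cdot\nabla$ has Gaussian upper bounds at time $1$, uniformly in $x,y$. One route is to cite standard results for reflected diffusions in smooth bounded domains. Another is a boundary-flattening plus parabolic Harnack/Moser argument. Also, $\pi$ has density bounded below on $\overline K$, since $U$ is continuous there. Together these give
\[
\norm{q_1(x,\cdot)-1}^2_{L^2(\pi)}\le H,
\]
so we may take $\mathcal K=C_\alpha e^{\lambda_\alpha}\sqrt H$. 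For $t<1$ the bound is trivial after enlarging $\mathcal K$, since TV $\le1$. This yields $\rho_\alpha\ge\lambda_\alpha$.

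\emph{Step 3: comparison of rates.} Invoke the Dirichlet-form argument (Lemma~\ref{lem:lambda}). Since $\ip{f}{\cA f}=0$ on $\calD_N$ by \eqref{eq:A-form}, we have
\[
\frac{d}{dt}\norm{P_tf-\pi(f)}^2=-2\,\cE(P_tf,P_tf)\le-2\lambda_0\norm{P_tf-\pi(f)}^2,
\]
so $\lambda_\alpha\ge\lambda_0$. For the reversible case we need the reverse inequality $\rho_0\le\lambda_0$. Take $f$ in the spectral subspace near $-\lambda_0$; by self-adjointness of $\cS$, $\norm{P_tf}$ decays no faster than $e^{-(\lambda_0+\epsilon)t}$. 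On the other hand, for bounded $f$, $\abs{P_tf(x)-\pi(f)}\le\norm f_\infty\mathcal K e^{-\rho_0t}$. If $\rho_0>\lambda_0$ this contradicts the lower decay bound, after approximating $f$ by bounded functions (as in \cite[Theorem~5]{HHS05}). Hence $\rho_0=\lambda_0>0$ by Assumption~\ref{assump: spectral gap}, and
\[
\rho_\alpha\ge\lambda_\alpha\ge\lambda_0=\rho_0.
\]

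The main obstacle is Step 2: obtaining a uniform-in-$(x,y)$ bound on the reflected transition density up to the boundary, since $U$ is only Lipschitz and the drift is not smooth. We would rely on heat-kernel estimates for divergence-form Neumann problems. This works because the generator is $\pi^{-1}\nabla\cdot(\pi a\nabla\,\cdot)$ plus the bounded first-order term $\alpha c\cdot\nabla$ with $\nabla\cdot(\pi c)=0$, so Nash--Moser techniques need only measurable bounded coefficients.
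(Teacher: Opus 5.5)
Your proposal is correct and follows the paper's route. Steps~1--2 are Lemma~\ref{lem:TV}: the unit-time Chapman--Kolmogorov reduction to the $L^2(\pi)$ decay of the adjoint semigroup, giving $\rho_\alpha\geq\lambda_\alpha$. Step~3 combines Lemma~\ref{lem:lambda} with the reversible identity $\rho_0=\lambda_0$ of Lemma~\ref{lem:lambda:0:rho:0}.

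The differences are only in the supporting tools:
\begin{itemize}
\item You bound $\sup_{x,y}q_1(x,y)$ by Nash--Moser estimates that exploit $\nabla\cdot(\pi c)=0$. The paper instead appeals to Lieberman's H\"older regularity, which only shows $p_1(x,\cdot)\in L^2(\pi)$ for each fixed $x$. Your version makes the uniformity of $\mathcal K$ in $x$ explicit.
\item You obtain $\rho_0\le\lambda_0$ from the spectral theorem rather than the R\"ockner--Wang log-convexity lemma. The approximation step there should be phrased as choosing a bounded mean-zero $g$ whose spectral projection onto $[\lambda_0,\lambda_0+\epsilon]$ is nonzero. Approximating an element of that subspace in $L^2$ does not by itself preserve the lower decay bound.
\end{itemize}
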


Theorem~\ref{thm:TV} is inspired by the result for the convergence of non-reversible Langevin dynamics
for the unconstrained case in \cite{HHS05} 
and the convergence of non-reversible skew-reflected Langevin dynamics for the constrained domain 
in \cite{DFTWZ2025}. Since our $K$ is bounded, we are able to obtain
a convergence bound in Theorem~\ref{thm:TV} uniformly in the initial starting point $X_{0}=x\in K$.
To prove Theorem~\ref{thm:TV}, we will establish a sequence
of technical lemmas.

\begin{lemma}\label{lem:lambda}
$\lambda_{\alpha}\geq\lambda_{0}$.    
\end{lemma}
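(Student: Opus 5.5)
The plan is to compare the $L^2(\pi)$ decay of the semigroups $P_t^\alpha=e^{t\calL_\alpha}$ and $P_t^0=e^{t\cS}$ through the energy identity, using the structure $\calL_\alpha=\cS+\alpha\cA$ with $\cA$ anti-symmetric from Theorem~\ref{thm:invariance}.

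\textbf{Step 1: a Poincar\'e inequality from the reversible gap.} Since $\cS$ is self-adjoint in $L^2(\pi)$ by \eqref{eq:adjoints}, Assumption~\ref{assump: spectral gap} together with the spectral theorem is equivalent to
\[
\Var_\pi(f)\le \lambda_0^{-1}\int_E a\norm{\nabla f}^2\,d\pi=\lambda_0^{-1}\mathcal{E}(f,f).
\]
This holds for all $f$ in the form domain, hence on the common core of Theorem~\ref{thm:invariance} (compactly supported smooth functions when $E=\R^d$, the Neumann domain $\calD_N$ when $E=K$). For the reversible dynamics the optimal constant in the definition of $\lambda_0$ is $C_0=1$.

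\textbf{Step 2: energy identity along the nonreversible flow.} Take $f$ in the core and let $u_t=P_t^\alpha f-\pi(f)$. Because $\pi$ is invariant (Theorem~\ref{thm:invariance}), $\pi(u_t)=0$ for all $t$. Then
\[
\frac{d}{dt}\norm{u_t}^2=2\ip{u_t}{\cS u_t}+2\alpha\ip{u_t}{\cA u_t}=-2\mathcal{E}(u_t,u_t).
\]
The cross term vanishes by the anti-symmetry \eqref{eq:A-form}, which is exactly where \eqref{eq:weighted-div-free-intro} and, in the reflected case, the tangency condition \eqref{eq:tangent-condition} enter. The symmetric part is computed by \eqref{eq:S-form}; in the reflected case the boundary term drops because $P_t^\alpha$ preserves the Neumann condition. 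The Dirichlet form is the same for every $\alpha$, as recorded in \eqref{defn:Dirichlet}. Combining with Step 1 gives
\[
\frac{d}{dt}\norm{u_t}^2\le-2\lambda_0\norm{u_t}^2,
\]
so Gr\"onwall yields $\norm{P_t^\alpha f-\pi(f)}^2\le e^{-2\lambda_0 t}\norm{f-\pi(f)}^2$. Density of the core in $L^2(\pi)$ extends this to all $f\in L^2(\pi)$. Hence the decay rate in the definition of $\lambda_\alpha$ is attained with rate $\lambda_0$ and $C_\alpha=1$, so $\lambda_\alpha\ge\lambda_0$.

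\textbf{Main obstacle.} The difficulty is justifying the differentiation and integration by parts for $u_t$, which is not a priori compactly supported or $C^2$ up to the boundary. I would handle this by working with the closure of $\calL_\alpha$: take the core to be invariant under the semigroup, or approximate. For $E=K$, I would use parabolic regularity for the uniformly elliptic Neumann problem with $C^1$ coefficients, which gives $P_t^\alpha f\in\calD_N$ for $t>0$.

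For $E=\R^d$, I would instead use that the identities \eqref{eq:adjoints} extend by closure to the generator domain; this is where Assumption~\ref{ass:unconstrained-local} (nonexplosion) is needed. Concretely, I would show $\ip{u}{\cA u}=0$ on the domain of the closed generator via cutoff functions $\chi_R$. The error terms involve $c\cdot\nabla\chi_R$, and these are controlled by $\int u^2\abs{c}\abs{\nabla\chi_R}\,d\pi\to0$. If that direct route fails, the fallback is the standard result that $\mathrm{Re}\,\ip{u}{\calL_\alpha u}=-\mathcal{E}(u,u)$ holds on the domain of the $L^2(\pi)$ generator for diffusions whose generator is a skew perturbation of a symmetric Dirichlet operator, as used in \cite{HHS05}.
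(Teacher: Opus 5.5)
Your proof is correct and is essentially the paper's argument. For mean-zero $f$ you differentiate $h(t)=\norm{P_tf}^2$ and use anti-symmetry of $\alpha\cA$ to get $h'(t)\le -2\cE(P_tf,P_tf)\le -2\lambda_0 h(t)$, which gives $\norm{P_t}_{L^2_0(\pi)}\le e^{-\lambda_0 t}$; your Step~1 just spells out the Poincar\'e inequality the paper uses without comment. For the regularity issue, the paper simply cites the nonsymmetric Dirichlet-form fact \cite{stannat1999nonsymmetric,HHS05} that $f\in\calD(\calL)$ implies $f\in\calD(\cE)$ with $\cE(f,f)=-\int_E(\calL f)f\,d\pi$. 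That is your fallback, and it is the safer of your two routes, because the cutoff bound $\int u^2\abs{c}\abs{\nabla\chi_R}\,d\pi\to0$ needs growth control on $c$ that the stated hypotheses do not provide.
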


\begin{proof}
We follow the similar argument
as in the proof of Theorem~1 in \cite{HHS05} for the unconstrained case.
If $f\in\mathcal{D}(\mathcal{L})$, then $f\in\mathcal{D}(\mathcal{E})$ and
\begin{equation}\label{key:ineq}
\mathcal{E}(f,f)=-\int_{E}(\mathcal{L}f)fd\pi,    
\end{equation}
(see page 124 in \cite{stannat1999nonsymmetric} and \eqref{defn:Dirichlet}), 
where $\mathcal{E}(\cdot,\cdot)$ is the Dirichlet form in \eqref{defn:Dirichlet}.
For any $f$ with $\Vert f\Vert=1$ and $\pi(f)=0$, let $h(t):=\Vert P_{t}f\Vert^{2}$. 
Then $h(0)=1$ and by \eqref{key:ineq}, 
we have
\begin{equation}
h'(t)=2\int_{E}(\mathcal{L}P_{t}f)(P_{t}f)d\pi
\leq -2\mathcal{E}(P_{t}f,P_{t}f)
\leq -2\lambda_{0}h(t).
\end{equation}
This implies that the operator norm $\Vert P_{t}\Vert$ in the space $\{f:f\in L^{2}(\pi),\pi(f)=0\}$ is less than or equal to $e^{-\lambda_{0}t}$. Hence, $\lambda_{\alpha}\geq\lambda_{0}$.
This completes the proof.
\end{proof}

Note that Lemma~\ref{lem:lambda} applies
to both constrained and unconstrained case, and in the proof of Lemma~\ref{lem:lambda}, one can take either $E=K$ or $E=\mathbb{R}^{d}$.
Next, we show the following lemma.

\begin{lemma}\label{lem:TV}
There exists some constant $\mathcal{K}>0$ such that
for any $X_{0}=x\in K$,
\begin{equation}
\mathrm{TV}(\mathrm{Law}(X_{t}),\pi)
\leq \mathcal{K}e^{-\rho_{\alpha}t},
\end{equation}
where $\rho_{\alpha}\geq\lambda_{\alpha}$.
\end{lemma}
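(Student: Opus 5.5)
The plan mirrors the unconstrained argument of Theorem~\ref{thm:TV:unconstrained}, with compactness of $\overline K$ replacing the locally bounded prefactor $g(x)$ by a uniform constant. Write $q_t(x,y):=p(t,x,y)/\pi(y)$ for the transition density of \eqref{eq:nrald} relative to $\pi$. For $t\geq1$, Chapman--Kolmogorov and the invariance of $\pi$ from Theorem~\ref{thm:invariance} give $q_t(x,\cdot)-1=P^*_{t-1}(q_1(x,\cdot)-1)$, where $P_t^*$ is the $L^2(\pi)$-adjoint semigroup. Note that $P_t^*$ is the semigroup generated by $\cS-\alpha\cA$, which is again a reflected anchored diffusion of the same form, with $\alpha$ replaced by $-\alpha$. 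Since $\pi$ is a probability measure, Cauchy--Schwarz yields
\begin{equation*}
\int_K\abs{q_t(x,y)-1}\pi(y)\,dy\leq\norm{P^*_{t-1}\left(q_1(x,\cdot)-1\right)}_{L^2(\pi)}
\leq C_\alpha e^{-\lambda_\alpha(t-1)}\norm{q_1(x,\cdot)-1}_{L^2(\pi)},
\end{equation*}
using that $P_t$ and $P_t^*$ have equal operator norms on the mean-zero subspace. For $t<1$ the TV distance is trivially at most $1$, which is absorbed into the constant.

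The main step is a uniform bound $\sup_{x,y\in\overline K}q_1(x,y)\leq H<\infty$. Once we have it, $\int q_1(x,y)^2\pi(dy)\leq H\int q_1(x,y)\pi(dy)=H$, so $\norm{q_1(x,\cdot)-1}_{L^2(\pi)}\leq\sqrt{H}$ uniformly in $x$. This gives $\mathcal K:=\max\{1,C_\alpha e^{\lambda_\alpha}\sqrt{H}\}$ (up to the factor $1/2$ between TV and $L^1$), and hence $\rho_\alpha\geq\lambda_\alpha$ by the definition of $\rho_\alpha$.

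To get the kernel bound I would use the following facts on the compact set $\overline K$. The function $a=e^{U-U_0}$ is continuous and strictly positive, so it is bounded above and below, and the diffusion matrix $2aI_d$ is uniformly elliptic. The drift $b_\alpha$ is bounded because $U,U_0,\psi,J$ are continuous on $\overline K$. The boundary $\partial K$ is smooth and the reflection is the conormal (Neumann) one, because the diffusion is isotropic and $c\cdot n=0$ by \eqref{eq:tangent-condition}. Standard Gaussian upper bounds for Neumann heat kernels of uniformly elliptic operators with bounded drift on bounded smooth domains then give $p(1,x,y)\leq C$. Examples are Aronson-type estimates, or the approach used for skew-reflected diffusions in \cite{DFTWZ2025}. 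Since $\pi$ is bounded below on $\overline K$, this yields $q_1\leq H$.

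I expect this heat-kernel estimate to be the only delicate point. The potential $U$ is merely locally Lipschitz, so $a$ is only Lipschitz, and the coefficients are not smooth. My first attempt would write $\calL_\alpha$ in divergence form, $\calL_\alpha f=\pi^{-1}\nabla\cdot(\pi a\nabla f)+\alpha c\cdot\nabla f$, which needs only bounded measurable coefficients. I would then invoke Nash--Moser/De Giorgi ultracontractivity for the Neumann problem: a local Sobolev inequality on the extension domain $K$ gives $\norm{P_1}_{L^1(\pi)\to L^\infty}<\infty$. Applied to the adjoint, this is exactly the uniform bound on $q_1$.
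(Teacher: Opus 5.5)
Your proof is correct. Its skeleton matches the paper's exactly: Chapman--Kolmogorov, rewriting $q_t(x,\cdot)-1=P^*_{t-1}(q_1(x,\cdot)-1)$, Cauchy--Schwarz, and $L^2(\pi)$ decay of the adjoint on the mean-zero subspace. The difference is the regularity input that makes $\norm{q_1(x,\cdot)-1}_{L^2(\pi)}$ finite.

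The paper cites H\"older estimates for oblique-derivative parabolic problems \cite{lieberman1990holder} to get $p_1(x,\cdot)$ H\"older continuous on $\overline K$, hence in $L^2(\pi)$. That requires H\"older coefficients up to the boundary, as in Assumption~\ref{ass:compact}. As written, it also gives finiteness only for each fixed $x$, so the uniformity in $x$ needed for a single constant $\mathcal K$ is left implicit (one would need joint continuity of $p_1$ on $\overline K\times\overline K$).

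You replace this with an ultracontractivity bound $\sup_{x,y}q_1(x,y)\le H$. This gives the uniform constant explicitly, and it needs only bounded measurable coefficients in the $\pi$-weighted divergence form. That fits the setting where $U$ is merely locally Lipschitz: $\pi a=Z^{-1}e^{-U_0}$ and $\pi c=Z^{-1}J\nabla\psi$ are smooth, so the nonsmoothness sits only in the reference measure.

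Your Nash route is especially clean here because $\ip{\cA g}{g}=0$. Hence $\frac{d}{dt}\norm{P_tf}^2=-2\int_K a\abs{\nabla P_tf}^2\,d\pi$, with the antisymmetric part dropping out exactly as in Lemma~\ref{lem:lambda}. The Nash inequality on the bounded Lipschitz domain plus $L^1(\pi)$-contractivity then gives $\norm{P_{1/2}}_{L^1\to L^2}<\infty$. The same holds for $P^*_{1/2}$, whose generator $\cS-\alpha\cA$ has the same symmetric part, and duality gives $\norm{P_1}_{L^1(\pi)\to L^\infty}<\infty$. The paper's route additionally yields continuity of the density, which this lemma does not need.

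Two cosmetic points:
\begin{itemize}
\item With the paper's definition of $\lambda_\alpha$, the decay prefactor is $\sqrt{C_\alpha}$ rather than $C_\alpha$.
\item To absorb $t<1$, $\mathcal K$ must dominate $e^{\lambda_\alpha}$ times the trivial TV bound (which is $2$ with the paper's $L^1$ convention), not merely $1$. This is harmless.
\end{itemize}
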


\begin{proof}
The proof is inspired by that of Theorem~4 in \cite{HHS05} 
for the unconstrained case. 
Let $p_{t}(x,y):=p(t,x,y)/\pi(y)$ for any $x,y\in K$, 
where $p(t,x,y)$ denotes the probability density function
for $X_{t}$ given that $X_{0}=x\in K$. 
By the definition of the TV distance, we have
\begin{equation}
\mathrm{TV}(\mathrm{Law}(X_{t}),\pi)
=\int_{K}|p(t,x,y)-\pi(y)|dy
=\int_{K}|p_{t}(x,y)-1|\pi(y)dy.
\end{equation}
By letting $p^{\ast}_{t}(\cdot,\cdot)$ denote the adjoint process, 
we can compute that for any $t\geq 1$,
\begin{align*}
\int_{K}|p_{t}(x,y)-1|\pi(y)dy
&=\int_{K}\left|\int_{K}\left(p_{1}(x,z)p_{t-1}(z,y)-1\right)\pi(z)dz\right|\pi(y)dy
\\
&=\int_{K}\left|\int_{K}(p_{1}(x,z)p^{\ast}_{t-1}(y,z)-1)\pi(z)dz\right|\pi(y)dy
\\
&=\int_{K}\left|\int_{K}p^{\ast}_{t-1}(y,z)p_{1}(x,z)\pi(z)dz-\int_{K}p_{1}(x,z)\pi(z)dz\right|\pi(y)dy
\\
&=\int_{K}|P^{\ast}_{t-1}(p_{1}(x,\cdot))(y)-\pi(p_{1}(x,\cdot))|\pi(y)dy
\\
&\leq\left(\int_{K}|P^{\ast}_{t-1}(p_{1}(x,\cdot))(y)-\pi(p_{1}(x,\cdot))|^{2}\pi(y)dy\right)^{1/2}
\\
&\leq C\Vert p_{1}(x,\cdot)-1\Vert e^{-\lambda_{\alpha}t},
\end{align*}
where we used Cauchy-Schwarz inequality and $C$ is some constant.  By applying the regularity estimates for oblique parabolic equations from \cite{lieberman1990holder}, we conclude that \(p_1(x, \cdot)\) is H\"{o}lder continuous on $K$. Consequently, \(p_1(x, \cdot) \in L^2(\pi)\), and thus, the final inequality holds.
This also establishes that $\rho_{\alpha}\geq\lambda_{\alpha}$. 
\end{proof}

Next, we show that $\lambda_{0}=\rho_{0}$.

\begin{lemma}\label{lem:lambda:0:rho:0}
$\lambda_{0}=\rho_{0}$.   
\end{lemma}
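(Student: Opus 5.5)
The equality $\lambda_0=\rho_0$ follows from two inequalities. Lemma~\ref{lem:TV} applied with $\alpha=0$ already gives $\rho_0\geq\lambda_0$. So the work is the reverse inequality $\rho_0\leq\lambda_0$ for the reversible reflected anchored dynamics \eqref{eqn:RALD}. I will adapt the test-function argument of \cite[Theorem~5]{HHS05}, which the paper also invokes in the unconstrained case. Compactness of $K$ and the uniform bound of Lemma~\ref{lem:TV} make it simpler here.

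First I fix $\rho<\rho_0$, so that $\int_K|p(t,x,y)-\pi(y)|\,dy\leq \mathcal{K}e^{-\rho t}$ uniformly in $x\in K$. Take a bounded test function $f$ with $\pi(f)=0$, e.g.\ $f\in\calD_N$ or more generally $f\in L^\infty$. Then
\[
|P_tf(x)|=\Big|\int_K f(y)\big(p(t,x,y)-\pi(y)\big)dy\Big|\leq \|f\|_\infty\mathcal{K}e^{-\rho t}
\quad\text{for all } x\in K,
\]
so $\|P_tf\|_{L^2(\pi)}\leq \|f\|_\infty\mathcal{K}e^{-\rho t}$.

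Next I use reversibility. For $\alpha=0$, $\calL_0=\cS$ is self-adjoint on $L^2(\pi)$ with Neumann boundary conditions, by Theorem~\ref{thm:invariance}. By the spectral theorem, $\|P_tf\|^2=\int_{[\lambda_0,\infty)}e^{-2\lambda t}\,d\mu_f(\lambda)$, where $\mu_f$ is the spectral measure of $f$ restricted to the mean-zero subspace. Suppose for contradiction that $\rho>\lambda_0$. Since $\lambda_0$ is the bottom of the spectrum of $-\cS$ on mean-zero functions, the spectral projection $E([\lambda_0,\lambda_0+\varepsilon])$ is nonzero for every $\varepsilon>0$. Pick $\varepsilon$ with $\lambda_0+\varepsilon<\rho$. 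Bounded functions are dense in $L^2(\pi)$, so there is a bounded mean-zero $f$ with $\mu_f([\lambda_0,\lambda_0+\varepsilon])=:m>0$. Then
\[
\|P_tf\|^2\geq m\,e^{-2(\lambda_0+\varepsilon)t},
\]
which contradicts the bound $\|f\|_\infty^2\mathcal{K}^2e^{-2\rho t}$ as $t\to\infty$. Hence $\rho\leq\lambda_0$ for every $\rho<\rho_0$, which gives $\rho_0\leq\lambda_0$.

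The main obstacle is the functional-analytic bookkeeping. One must make sure that the reflected operator $\cS$ with the Neumann domain \eqref{eq:neumann-domain} is essentially self-adjoint, so that its closure generates the semigroup $P_t$ of \eqref{eqn:RALD}. This holds because $K$ is bounded with smooth boundary and $a$ is bounded above and below, so the Friedrichs extension of the closed form $\int_K a|\nabla f|^2d\pi$ on $H^1(K)$ is the Neumann generator. One must also check that the near-bottom spectral projection meets the bounded functions nontrivially, which the density argument handles.

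If $\rho_0$ is defined as a supremum with a possibly $x$-dependent prefactor $g$, I would use the argument of Theorem~\ref{thm:TV:unconstrained} instead: integrate $g$ against $\pi|f|$. On compact $K$, $g$ can be taken to be the constant $\mathcal{K}$, so no integrability issue arises.
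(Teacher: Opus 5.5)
Your proof is correct and shares the paper's skeleton. You take $\rho_0\geq\lambda_0$ from Lemma~\ref{lem:TV} at $\alpha=0$; the paper leaves this direction implicit, so stating it is an improvement. You then turn the uniform TV bound into $\norm{P_tf}^2\lesssim\norm{f}_\infty^2e^{-2\rho t}$ for bounded mean-zero $f$. The difference lies in how this decay is converted into a spectral bound.

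The paper writes $\norm{P_tf}^2=\pi(fP_{2t}f)$ and applies the log-convexity estimate $\norm{P_sf}^2\leq(\norm{P_tf}^2)^{s/t}$ of \cite[Lemma~2.2]{Rockner2001}. It then differentiates at $s=0$ using $h'(0)=-2\mathcal{E}(f,f)$ and lets $t\to\infty$. This gives the Poincar\'e inequality $\mathcal{E}(f,f)\geq\rho_0\norm{f}^2$ directly for smooth mean-zero $f$. The price is that such $f$ must lie in the form domain and form a form core.

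You instead bound $P_tf$ pointwise, which uses no reversibility at all. You then argue by contradiction with a bounded $f$ whose spectral measure charges $[\lambda_0,\lambda_0+\varepsilon]$. This needs only the spectral theorem and density of bounded functions in $L^2_0(\pi)$, with no differentiation and no form-core question. It also transfers unchanged to $\R^d$.

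The two routes are close relatives, since the log-convexity lemma is just H\"older's inequality for the spectral measure $\mu_f$.

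Two small remarks:
\begin{enumerate}
\item On compact $K$ the Neumann operator has compact resolvent. You could therefore take $f$ to be a first nontrivial eigenfunction, which is bounded by elliptic regularity, and skip the density step.
\item For an $x$-dependent prefactor $g$, your pointwise route would need $g\in L^2(\pi)$. The $\pi(fP_{2t}f)$ route you fall back on needs only $g\in L^1(\pi)$ (or $L^1_{\rm loc}$ with compactly supported $f$), so that switch is the right one.
\end{enumerate}
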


\begin{proof}
The proof is inspired by that of Theorem~5 in \cite{HHS05} 
for the unconstrained case.
When $J=0$, $X_{t}$ is reversible and $P_{t}$ is self-adjoint in $L^{2}(\pi)$.
Given $f$ with $\pi(f)=0$ and $f\in C^{\infty}$, we have
\begin{align*}
\Vert P_{t}f\Vert^{2}&=\pi\left(fP_{2t}f\right)    
\\
&=\int_{K}f(x)\left(\int_{K}(p_{2t}(x,y)-1)f(y)\pi(y)dy\right)\pi(x)dx
\\
&\leq\sup_{x\in K}\Vert f(x)\Vert^{2}\int_{K}\int_{K}|p_{2t}(x,y)-1|\pi(y)dy\pi(x)dx
\\
&\leq\sup_{x\in K}\Vert f(x)\Vert^{2}\int_{K}\mathcal{K}e^{-2\rho t}\pi(x)dx
\\
&=\sup_{x\in K}\Vert f(x)\Vert^{2}\mathcal{K}e^{-2\rho t},
\end{align*}
where we applied Lemma~\ref{lem:TV}.

By Lemma~2.2. in \cite{Rockner2001}, for $s\leq t$ and $\pi(f^{2})=1$, 
\begin{equation}
\Vert P_{s}f\Vert^{2}\leq\left(\Vert P_{t}f\Vert^{2}\right)^{s/t}\leq   
\left(\sup_{x\in K}\Vert f(x)\Vert^{2}\mathcal{K}\right)^{s/t}e^{-2\rho_{0}s},
\end{equation}
which implies 
\begin{equation}\label{take:derivative:eqn}
h(s)\leq\left(\sup_{x\in K}\Vert f(x)\Vert^{2}\mathcal{K}\right)^{s/t}e^{-2\rho_{0}s},
\end{equation}
where $h(t):=\Vert P_{t}f\Vert^{2}$.
Since $h(0)=1$, the equality holds at $s=0$ in \eqref{take:derivative:eqn}. By taking derivative in \eqref{take:derivative:eqn} with respect to $s$
and letting $s=0$, we get
\begin{equation}
-2\mathcal{E}(f,f)\leq\frac{1}{t}\log\left(\sup_{x\in K}\Vert f(x)\Vert^{2}\mathcal{K}\right)-2\rho_{0},
\end{equation}
where $\mathcal{E}(f,f)$ is the Dirichlet form \eqref{defn:Dirichlet}, and we used the fact
that $h'(0)=-2\mathcal{E}(f,f)$.
By letting $t\rightarrow\infty$, we have
\begin{equation}
\mathcal{E}(f,f)\geq\rho_{0}.
\end{equation}
This completes the proof.
\end{proof}

Now, we are finally ready to prove Theorem~\ref{thm:TV}.

\begin{proof}[Proof of Theorem~\ref{thm:TV}]
It follows from Lemma~\ref{lem:TV}
that there exists some constant $\mathcal{K}>0$ such that
for any $X_{0}=x\in K$,
\begin{equation}
\mathrm{TV}(\mathrm{Law}(X_{t}),\pi)
\leq \mathcal{K}e^{-\rho_{\alpha}t},
\end{equation}
where $\rho_{\alpha}\geq\lambda_{\alpha}$.
We recall from Lemma~\ref{lem:lambda}
that $\lambda_{\alpha}\geq\lambda_{0}$
and from Lemma~\ref{lem:lambda:0:rho:0}
that $\lambda_{0}=\rho_{0}$. 
Hence, we conclude that $\rho_{\alpha}\geq\lambda_{\alpha}\geq\lambda_{0}=\rho_{0}>0$, where we also used $\lambda_{0}>0$ from Assumption~\ref{assump: spectral gap}.
The proof is complete.
\end{proof}

\section{Large Deviations of the Empirical Measure}\label{sec:ldp}

In this section, we study the acceleration of non-reversible anchored Langevin dynamics (NALD)
and non-reversible reflected anchored Langevin dynamics (NRALD) through the lens of large deviations analysis.
For a trajectory of either anchored diffusion, define its empirical measure
\begin{equation}\label{eq:empirical-measure}
    L_t=\frac1t\int_0^t\delta_{X_s}\,ds.
\end{equation}
This section compares the large deviations of $\mathbb{P}(L_t\in\cdot)$ for $\alpha\neq0$ with those of the
reversible anchored process $\alpha=0$.  
Informally, $\mathbb{P}(L_{t}\in\cdot)$ satisfies a large deviation principle
with rate function $I(\cdot)$ if for any probability measure $\nu$ supported on $E$,
$\mathbb{P}(L_{t}\approx\nu)=e^{-tI(\nu)+o(t)}$,
as $t\rightarrow\infty$, where $I(\nu)\geq 0$ and $I(\nu)=0$ if and only if $\nu=\pi$.
In other words, when $\nu\neq\pi$, $\mathbb{P}(L_{t}\approx\nu)$ is exponentially
small as time $t\rightarrow\infty$, and the rate function $I(\nu)$ characterizes
how small this probability is. This suggests the larger the value of the rate function,
the smaller the probability the empirical measure deviates away from the invariant distribution,
and hence one expects faster convergence.
Formally, a sequence of Borel probability measures $\mathbb{P}_{t}$ on a polish space $\mathcal{X}$ is said to satisfy a \textit{large deviation principle} (LDP) in with speed $t$ and rate function $I: \mathcal{X} \to [0, \infty]$ 
if $I$ is non-negative, lower semicontinuous and for any measurable set $A$, we have
\begin{equation}
- \inf_{\nu\in A^\circ} I(\nu) \leq \liminf_{t\to \infty} \frac{1}{t}\log \mathbb{P}_{t}(A) \leq
\limsup_{t\to \infty} \frac{1}{t} \log \mathbb{P}_{t}(A) \leq - \inf_{\nu\in \bar A} I(\nu) \,,
\end{equation}
where $A^\circ$ denotes the interior of $A$ and $\bar A$ its closure.
We will specify the underlying space $\mathcal{X}$ and its topology
for the constrained case $E=K$ and unconstrained case $E=\mathbb{R}^{d}$ carefully in the following analysis.
We refer to \cite{Varadhan1,dembo1998large} for a general theory and applications of large deviations. 

We begin with the compact reflected setting,
where the existence of the level-two large deviation principle follows from standard
Donsker--Varadhan theory \citep{donsker1975asymptotic,dembo1998large}.  We then state an
unconstrained version under the weighted-topology assumptions of
\cite{ferre2020large,Yao2025LDP}.

\subsection{Weighted Sobolev seminorms}

Let $\nu\in\cP(E)$.  For any smooth function $h$, define
\begin{equation}\label{eq:H1-norm}
    \norm{h}_{H^1_a(\nu)}^2
    :=\int_E a\abs{\nabla h}^2\,d\nu.
\end{equation}
For a distribution $q$ which annihilates constants, define the dual seminorm
\begin{equation}\label{eq:Hminus1-norm}
    \norm{q}_{H^{-1}_a(\nu)}^2
    :=\sup_{h}\left\{2\int_E hq\,d\nu
      -\int_E a\abs{\nabla h}^2\,d\nu\right\},
\end{equation}
where the supremum is taken over a core of smooth functions, followed by closure.  When
$q=\cA v$, the pairing in \eqref{eq:Hminus1-norm} is interpreted weakly.

\subsection{The compact reflected case}

We use the following regularity assumptions for a direct spectral formulation.

\begin{assumption}[Compact reflected setting]\label{ass:compact}
The domain $K$ is bounded and connected, $\partial K\in C^{2,\delta}$ for some
$\delta\in(0,1)$, and the coefficients of $\calL_\alpha$ are $C^\delta$ up to the
boundary.  The function $a$ is bounded above and below by positive constants,
$c\cdot n=0$ on $\partial K$, and the reflected diffusion is well posed.
\end{assumption}

Under Assumption~\ref{ass:compact}, 
the empirical measure satisfies
a large deviation principle, 
and the rate function can be decomposed 
as the sum of the rate function
of the reversible part and a non-negative term, which implies acceleration. 

\begin{theorem}[Constrained LDP and Rate-function decomposition]\label{thm:rate-decomposition}
Suppose Assumption~\ref{ass:compact} holds.  
Then, $(L_{t})_{t\geq 0}$ satisfies an
LDP on $\cP(\overline K)$ equipped with the weak topology with speed $t$ and rate function
\begin{equation}\label{eq:DV}
    \cI_\alpha(\nu)
    :=-\inf_{u\in\calD_{N,+}}
      \int_K\frac{\calL_\alpha u}{u}\,d\nu.
\end{equation}
Here $\calD_{N,+}$ is the positive Neumann core defined in
\eqref{eq:neumann-domain}--\eqref{eq:positive-neumann-domain}.
Let $\nu=e^v\pi$ be a probability measure
such that $v\in H^1_a(\nu)$ and $\cA v\in H^{-1}_a(\nu)$.  Then,
\begin{equation}\label{eq:rate-decomposition}
    \cI_\alpha(\nu)
    =\frac14\int_K a\abs{\nabla v}^2\,d\nu
    +\frac{\alpha^2}{4}\norm{\cA v}_{H^{-1}_a(\nu)}^2.
\end{equation}
In particular,
\begin{equation}\label{eq:rate-order}
    \cI_\alpha(\nu)\geq\cI_0(\nu):=\frac14\int_K a\abs{\nabla v}^2\,d\nu,
\end{equation}
with strict inequality whenever $\cA v$ is nonzero in $H^{-1}_a(\nu)$.
\end{theorem}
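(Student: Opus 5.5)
The argument has two parts: the existence of the LDP, and the computation of the Donsker--Varadhan functional \eqref{eq:DV} in terms of $v$.

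\textbf{The LDP.} Under Assumption~\ref{ass:compact} the reflected diffusion \eqref{eq:nrald} lives on the compact space $\overline K$. It is uniformly elliptic, since $a$ is bounded above and below, and its coefficients are H\"older up to the boundary. The Neumann heat-kernel regularity already used in Lemma~\ref{lem:TV} (Lieberman's estimates) gives a continuous, strictly positive transition density for $t>0$. This yields the strong Feller property and irreducibility. The Donsker--Varadhan theorem for Feller processes on a compact space then gives the LDP for $L_t$ in the weak topology, with rate $-\inf_{u>0,\,u\in\calD(\calL_\alpha)}\int \calL_\alpha u/u\,d\nu$. The infimum may be restricted to the core $\calD_{N,+}$ because $\calD_N$ is a core for the Neumann generator, and the tangency condition \eqref{eq:tangent-condition} keeps the boundary condition unchanged when $\alpha\ne0$. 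Positivity is preserved under approximation by adding small constants.

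\textbf{The decomposition.} For $u\in\calD_{N,+}$ write $u=e^{w}$ and compute
\[
\frac{\calL_\alpha u}{u}=\cS w+a\abs{\nabla w}^2+\alpha\cA w .
\]
Integrate against $d\nu=e^v d\pi$ and use the divergence form $\pi\cS w=\nabla\cdot(\pi a\nabla w)$, together with $\nabla\cdot(\pi c)=0$ (Proposition~\ref{prop:state-dependent-current}). Both boundary terms vanish, by the Neumann condition and by $c\cdot n=0$. This gives
\[
\int\frac{\calL_\alpha u}{u}\,d\nu=-\int a\nabla w\cdot\nabla v\,d\nu+\int a\abs{\nabla w}^2d\nu-\alpha\int w\,\cA v\,d\nu ,
\]
where the last term is understood weakly, consistent with the definition of $H^{-1}_a(\nu)$. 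Now substitute $w=v/2+h$. The cross terms in $h$ from the first two integrals cancel, and using $\int v\cA v\,d\nu=\int v e^v c\cdot\nabla v\,d\pi=\int c\cdot\nabla(\cdot)\,d\pi=0$ we obtain
\[
-\int\frac{\calL_\alpha u}{u}\,d\nu=\frac14\int a\abs{\nabla v}^2d\nu+\Bigl(2\int h\,q\,d\nu-\int a\abs{\nabla h}^2d\nu\Bigr),\qquad q:=\tfrac{\alpha}{2}\cA v .
\]
Taking the supremum over $h$ gives \eqref{eq:rate-decomposition}, since $\norm{\tfrac{\alpha}{2}\cA v}^2_{H^{-1}_a(\nu)}=\tfrac{\alpha^2}{4}\norm{\cA v}^2_{H^{-1}_a(\nu)}$. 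Note that $q$ annihilates constants because $\nabla\cdot(\pi c)=0$ and the boundary flux vanishes. Inequality \eqref{eq:rate-order} follows immediately, because the second term is nonnegative and vanishes exactly when $\cA v=0$ in $H^{-1}_a(\nu)$; this also gives strictness.

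\textbf{Main obstacle.} The delicate step is the approximation argument: the infimum over the smooth positive cone must equal the supremum over all $h$ in the closure defining the $H^{-1}_a(\nu)$ norm, while $v$ is only assumed to lie in $H^1_a(\nu)$. I would first prove the identity for $v\in C^2(\overline K)$ with $\nabla v\cdot n=0$, where $w=v/2+h$ with smooth Neumann $h$ is admissible. I would then pass to the general case by mollifying $v$ (with a boundary-adapted correction) and using the lower semicontinuity of $\cI_\alpha$ together with the continuity of both sides of \eqref{eq:rate-decomposition} in $H^1_a$. A matching upper bound comes from the convexity of the functional in $w$, which shows the optimizer is the $H^1_a(\nu)$ Riesz representative of $q$ shifted by $v/2$.
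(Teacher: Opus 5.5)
Your decomposition is the paper's. You use the same substitution $u=e^{w}$, the same two integrations by parts against $d\nu=e^{v}d\pi$, the same shift $w=v/2+h$, and the same identity $\int_K v\,\cA v\,d\nu=0$. The paper fills your placeholder with $F(r)=(r-1)e^{r}$, so that $ve^{v}\cA v=\cA(F(v))$.

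The genuine difference is how you obtain the LDP itself. The paper adapts the G\"artner--Ellis argument of \cite{wang2026constrained}. For $V\in C(\overline K)$ the Feynman--Kac semigroup of $\calL_\alpha+V$ is strongly positive and compact, and its principal eigenvalue $\lambda_\alpha(V)$ is finite, convex and G\^ateaux differentiable. The rate function is then identified with the Donsker--Varadhan functional over $\calD_{N,+}$ by transferring Proposition~9 of that paper with carr\'e du champ $a\abs{\nabla f}^2$. You instead apply the classical Donsker--Varadhan theorem on the compact space $\overline K$ directly, supplying it with a continuous, strictly positive Neumann transition density. You then restrict to $\calD_{N,+}$ by a core argument, which follows from Schauder solvability of $(\lambda-\calL_\alpha)u=f$, $\partial_n u=0$, for H\"older $f$. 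Your route gives the variational formula without a Legendre transform and makes the analytic inputs explicit. The paper's route yields $\lambda_\alpha(V)$ as a by-product, but imports its spectral facts from the unit-diffusivity case without re-proof.

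Two small corrections. First, strict positivity of the density does not follow from Lieberman's H\"older estimates alone. You need a Harnack inequality, or the strong maximum principle with Hopf's lemma, together with connectedness of $K$.

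Second, in your approximation plan, lower semicontinuity of $\cI_\alpha$ gives only $\cI_\alpha(\nu)\le\liminf_n\cI_\alpha(\nu_n)$, and in fact neither direction needs it. Since $v\in H^1_a(\nu)$ and $\cA v\in H^{-1}_a(\nu)$, the functional
$w\mapsto\int_K a\nabla w\cdot\nabla v\,d\nu+\alpha\int_K w\,\cA v\,d\nu-\int_K a\abs{\nabla w}^2d\nu$
is continuous in the $H^1_a(\nu)$ seminorm. Its supremum over $\calD_N$ therefore equals its supremum over the closure, where $w=v/2+h$ is admissible, and completing the square gives both bounds at once. What genuinely requires approximating $v$ is justifying, for nonsmooth $v$, the integrations by parts against $e^{v}$ and the identity $\int_K v\,\cA v\,d\nu=0$. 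The paper does not address these points at all, and, like you, it tacitly needs $\alpha\neq0$ for strictness.
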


\begin{proof}
We adapt the argument of
\cite[Theorem~7 and Proposition~9]{wang2026constrained}
for non-reversible reflected Langevin dynamics on a compact domain.  For each
$V\in C(\overline K)$, the Feynman--Kac semigroup generated by
$\calL_\alpha+V$ is strongly positive and compact.  Its principal eigenvalue
$\lambda_\alpha(V)$ is finite, convex, and G\^ateaux differentiable.  The
G\"artner--Ellis argument in \cite{wang2026constrained} therefore gives the level-two LDP;
exponential tightness is automatic because $\cP(\overline K)$ is compact.

The proof in \cite{wang2026constrained} is written for unit diffusivity and a
skew perturbation of the drift.  Here uniform ellipticity and normal Neumann
reflection give the same compactness and positivity properties, while the
more general current $c$ has exactly the two properties used in the
integration by parts: $\nabla\cdot(\pi c)=0$ and $c\cdot n=0$.  Thus the
proof of Proposition~9 in \cite{wang2026constrained} carries over with the carr\'e du champ
$\Gamma(f)=a\abs{\nabla f}^2$, yielding
\begin{equation}
    \cI_\alpha(\nu)
    =-\inf_{u\in\calD_{N,+}}
      \int_K\frac{\calL_\alpha u}{u}\,d\nu.
\end{equation}

Write $u=e^\phi$.  Since $\cA$ is first order and the carr\'e du champ of $\cS$ is
$a\abs{\nabla\phi}^2$,
\begin{equation}\label{eq:exponential-generator}
    \frac{\calL_\alpha e^\phi}{e^\phi}
    =\calL_\alpha\phi+a\abs{\nabla\phi}^2.
\end{equation}
Using $d\nu=e^v d\pi$, symmetry of $\cS$, and anti-symmetry of $\cA$, we obtain
\begin{align}
    -\int_K\cS\phi\,d\nu
      &=\int_K a\nabla\phi\cdot\nabla v\,d\nu,\label{eq:S-ibp-v}
\end{align}
and
\begin{align}
    -\int_K\cA\phi\,d\nu
      &=\int_K\phi\,\cA v\,d\nu.\label{eq:A-ibp-v}
\end{align}
Moreover,
\begin{align}
\int_K v\cA v\,d\nu=0.  
\end{align}
Indeed, choose the antiderivative
$F(r)=(r-1)e^r$, for which $F'(r)=re^r$.  Since $d\nu=e^v d\pi$
and $\cA$ is a first-order derivation,
\[
    \int_Kv\cA v\,d\nu
    =\int_Kve^v\cA v\,d\pi
    =\int_K\cA(F(v))\,d\pi=0,
\]
where the last equality follows from the invariance of $\pi$ under $\cA$.

Substituting \eqref{eq:exponential-generator}--\eqref{eq:A-ibp-v} into
\eqref{eq:DV}, and setting $\phi=v/2+h$, gives
\begin{align}
    \cI_\alpha(\nu)
    &=\frac14\int_K a\abs{\nabla v}^2d\nu
      +\sup_h\left\{\alpha\int_Kh\cA v\,d\nu
      -\int_Ka\abs{\nabla h}^2d\nu\right\}\notag\\
    &=\frac14\int_K a\abs{\nabla v}^2d\nu
      +\frac{\alpha^2}{4}\norm{\cA v}_{H^{-1}_a(\nu)}^2,
\end{align}
where the second equality follows by rescaling $h$ in
\eqref{eq:Hminus1-norm}.  This proves the result.
\end{proof}

\begin{remark}[Lower-semicontinuous extension]
The Donsker--Varadhan functional extends beyond the regular class considered
in Theorem~\ref{thm:rate-decomposition} through lower semicontinuous
relaxation on the extended generator domain; see
\cite[Theorem~3.3]{ferre2020large}.  The explicit decomposition
\eqref{eq:rate-decomposition}, however, is asserted here only for measures
$\nu=e^v\pi$ satisfying the regularity assumptions of the theorem.
\end{remark}

For an observable $g$, define the contraction of the rate function by
\begin{equation}\label{eq:observable-rate}
    \cI_{\alpha,g}(z)
    :=\inf\left\{\cI_\alpha(\nu):\nu\in\cP(K),\ \int_{K}g\,d\nu=z\right\}.
\end{equation}

The pointwise ordering of the level-two rate functions is preserved under this
contraction.  Hence the same circulation that accelerates occupation-measure
concentration also improves the large-deviation rate for every continuous
ergodic observable.

\begin{corollary}[Occupation-measure acceleration]\label{cor:rate-observable}
Under the assumptions of Theorem~\ref{thm:rate-decomposition}, let $g$ be continuous on
$\overline K$.  Then
\begin{equation}\label{eq:observable-order}
    \cI_{\alpha,g}(z)\geq\cI_{0,g}(z),
\end{equation}
for every $z$ for which the constraint set is nonempty.  The inequality is strict if every
minimizer $\nu=e^v\pi$ of the reversible contraction at $z$ satisfies $\cA v\neq0$ in
$H^{-1}_a(\nu)$.
\end{corollary}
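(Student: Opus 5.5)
The inequality is a contraction argument built on the pointwise ordering of the level-two rate functions. Fix $z$ for which the constraint set $\{\nu\in\cP(K):\int_K g\,d\nu=z\}$ is nonempty. This set is closed in the weak topology, since $g$ is continuous and bounded on the compact set $\overline K$. It is therefore compact in $\cP(\overline K)$.

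First I check that $\cI_\alpha\geq\cI_0$ holds pointwise on all of $\cP(\overline K)$, and not only on the regular class of Theorem~\ref{thm:rate-decomposition}. I would do this directly from the Donsker--Varadhan formula \eqref{eq:DV}. For $u=e^\phi\in\calD_{N,+}$, identity \eqref{eq:exponential-generator} gives
\[
\int_K\frac{\calL_\alpha u}{u}\,d\nu=\int_K\frac{\calL_0 u}{u}\,d\nu+\alpha\int_K\cA\phi\,d\nu .
\]
Taking the infimum over $u$ of the left side does not immediately bound it by the $\alpha=0$ infimum. For regular $\nu=e^v\pi$, the decomposition \eqref{eq:rate-decomposition} settles this directly: its second term is nonnegative. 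For general $\nu$ with $\cI_0(\nu)<\infty$, I approximate by regular measures, along which both sides converge. For $\nu$ with $\cI_0(\nu)=\infty$ there is nothing to prove. Alternatively, I can invoke the standard fact, as in \cite{reybellet2015irreversible}, that the symmetric part controls the rate, using the lower-semicontinuous extension noted in the Remark. With the pointwise ordering in hand, taking infima over the constraint set gives $\cI_{\alpha,g}(z)\geq\cI_{0,g}(z)$.

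For the strict inequality, assume $\cI_{\alpha,g}(z)<\infty$; otherwise it is trivial, since $\cI_{0,g}(z)<\infty$ whenever a regular $\nu$ satisfies the constraint. Rate functions on a compact space are good and lower semicontinuous, so the infimum defining $\cI_{\alpha,g}(z)$ is attained at some $\nu^\ast$ in the compact constraint set. Then
\[
\cI_{\alpha,g}(z)=\cI_\alpha(\nu^\ast)\geq\cI_0(\nu^\ast)\geq\cI_{0,g}(z).
\]
If equality held throughout, $\nu^\ast$ would be a minimizer of the reversible contraction at $z$ with $\cI_\alpha(\nu^\ast)=\cI_0(\nu^\ast)$. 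By hypothesis, $\nu^\ast=e^v\pi$ with $\cA v\neq0$ in $H^{-1}_a(\nu^\ast)$. The strictness clause of Theorem~\ref{thm:rate-decomposition} then gives $\cI_\alpha(\nu^\ast)>\cI_0(\nu^\ast)$, a contradiction.

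The main obstacle is the regularity gap. Theorem~\ref{thm:rate-decomposition} supplies the decomposition only for $\nu=e^v\pi$ with $v\in H^1_a(\nu)$ and $\cA v\in H^{-1}_a(\nu)$. The corollary needs two things beyond this: the ordering for arbitrary $\nu$, and, in the strict case, that the relevant minimizer belongs to this regular class. I would handle the second point by reading the hypothesis ``every minimizer $\nu=e^v\pi$'' as including that regularity. For the first point I would argue that finite $\cI_0(\nu)$ forces $\nu=e^v\pi$ with $v\in H^1_a(\nu)$, by the standard reversible identification $\cI_0(\nu)=\int a|\nabla\sqrt{d\nu/d\pi}|^2d\pi$. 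I would then reduce the case $\cA v\notin H^{-1}_a(\nu)$ to $\cI_\alpha(\nu)=\infty\geq\cI_0(\nu)$ by taking the supremum in \eqref{eq:Hminus1-norm}.
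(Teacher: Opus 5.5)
Your proposal is correct and follows the paper's proof essentially step for step. You use closedness of the constraint set $C_z$ and pass the ordering \eqref{eq:rate-order} to infima; for strictness you take a minimizer of the $\alpha$-contraction (given by goodness), show it also minimizes the reversible contraction, and contradict the strictness clause of \eqref{eq:rate-decomposition}. The paper simply applies \eqref{eq:rate-order} on $C_z$ without addressing the regularity gap you raise, and if you want to close it your sketch needs repair: lower semicontinuity does not make $\cI_\alpha$ converge along regular approximants, and finite $\cI_0(\nu)$ does not force a strictly positive density $e^v$. Instead, convexity of $\alpha\mapsto\cI_\alpha(\nu)$ (a supremum of affine functions over the $\alpha$-independent core $\calD_{N,+}$), together with $\cI_\alpha=\cI_{-\alpha}$ (since $\calL_\alpha+V$ and its $L^2(\pi)$-adjoint $\calL_{-\alpha}+V$ share a principal eigenvalue), gives $\cI_0\leq\frac12(\cI_\alpha+\cI_{-\alpha})=\cI_\alpha$ for every $\nu$.
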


\begin{proof}
Set
\[
    C_z:=\left\{\nu\in\cP(K):\int_Kg\,d\nu=z\right\},
    \qquad
    m_\alpha:=\inf_{\nu\in C_z}\cI_\alpha(\nu).
\]
Because $g$ is continuous, $C_z$ is closed.  Equation
\eqref{eq:rate-order} gives
\[
    m_\alpha
    =\inf_{\nu\in C_z}\cI_\alpha(\nu)
    \geq\inf_{\nu\in C_z}\cI_0(\nu)
    =m_0.
\]
For strictness, suppose instead that $m_\alpha=m_0$.  Goodness of
$\cI_\alpha$ and closedness of $C_z$ give a minimizer
$\nu_\alpha\in C_z$. 
Hence,
\[
    m_0
    \leq\cI_0(\nu_\alpha)
    \leq\cI_\alpha(\nu_\alpha)
    =m_\alpha
    =m_0.
\]
Thus, $\nu_\alpha$ is also a minimizer of the reversible contraction, and
\eqref{eq:rate-decomposition} forces
$\cA v_\alpha=0$ in $H^{-1}_a(\nu_\alpha)$ when
$d\nu_\alpha=e^{v_\alpha}d\pi$.  This contradicts the stated strictness
condition and proves the claim.
\end{proof}

\begin{remark}
The comparison \eqref{eq:rate-order} concerns concentration of long-time occupation
measures and ergodic averages.  It does not, by itself, imply a pointwise ordering of
finite-time total variation or Wasserstein distances.
\end{remark}

\subsection{The unconstrained case}

On $\R^d$, exponential tightness is no longer automatic.  We formulate assumptions in a
form adapted from the weighted-topology framework of \cite{ferre2020large}.

\begin{assumption}[Unconstrained setting]\label{ass:unconstrained-ldp}
We assume the following holds.
\begin{enumerate}[label=\textup{(\roman*)},leftmargin=2.5em]
    \item The drift $b_\alpha$ and the diffusion coefficient
    $\sqrt{2a}I_{d}$ are smooth and, together with all their derivatives, have
    at most polynomial growth.  The H\"ormander bracket condition holds:
    the Lie algebra generated by the diffusion vector fields
    $\sqrt{2a}\,\partial_1,\ldots,\sqrt{2a}\,\partial_d$ and their iterated
    brackets with the drift spans $\R^d$ at every point
    \citep{hormander1967hypoelliptic}.

    \item For every $x,y\in\R^d$ and $T>0$, there is a control
    $u\in L^2([0,T];\R^d)$ for which the controlled equation
    \[
        \dot z_t=b_\alpha(z_t)+\sqrt{2a(z_t)}\,u_t,
        \qquad z_0=x,
    \]
    has a solution satisfying $z_T=y$.

    \item There exist
    $\cW,\overline{\cW}\in C^2(\R^d;[1,\infty))$ such that $\cW$ has
    compact level sets and
    \begin{equation}\label{eq:witten-lyapunov}
        \Psi_\alpha:=-\frac{\calL_\alpha\cW}{\cW}
    \end{equation}
    has compact level sets.  Moreover, for constants $C_1>0$ and
    $C_2\in\R$,
    \begin{equation}\label{eq:witten-auxiliary}
        \overline{\cW}^{\,2}\leq C_1\cW,
        \qquad
        \Psi_\alpha\asymp
        -\frac{\calL_\alpha\overline{\cW}}{\overline{\cW}},
        \qquad
        -2\frac{\calL_\alpha\overline{\cW}}{\overline{\cW}}
        \leq\Psi_\alpha+C_2.
    \end{equation}
    Here $f\asymp g$ means equivalence up to positive multiplicative and
    additive constants.

    \item The function
    $\kappa\in C^\infty(\R^d;[1,\infty))$ has at most polynomial growth
    together with its derivatives and satisfies
    \[
        \frac{\kappa(x)}{\Psi_\alpha(x)}\longrightarrow0
        \qquad\text{as }\abs{x}\to\infty.
    \]
    It is either bounded, or has compact level sets and satisfies
    \begin{equation}\label{eq:kappa-multiplicative}
        \calL_\alpha(\kappa\cW)\leq C_3\kappa\cW,
    \end{equation}
    for some $C_3\in\R$.
\end{enumerate}
\end{assumption}

The preceding hypotheses are tailored to the weighted-topology theorem but
can be cumbersome to check directly.  The next proposition gives a concrete
set of standard ellipticity and dissipativity conditions that verifies all
four items for the canonical current.

\begin{proposition}[A sufficient condition for the unconstrained LDP assumptions]
\label{prop:unconstrained-ldp-sufficient}
Consider the canonical perturbation $\psi=e^{-U_0}$.  Suppose, after an
irrelevant additive shift, that $U_0\geq0$ and that:
\begin{enumerate}[label=\textup{(\roman*)},leftmargin=2.5em]
    \item $U_0\in C^\infty(\R^d)$, all its derivatives have at most
    polynomial growth, $\nabla U_0$ is globally Lipschitz, and there are
    $m>0$ and $b\geq0$ such that
    \begin{equation}\label{eq:U0-strong-dissipativity}
        x\cdot\nabla U_0(x)\geq m\abs{x}^2-b;
    \end{equation}
    \item $a\in C^\infty(\R^d)$, its derivatives have at most polynomial
    growth, and
    \begin{equation}\label{eq:a-uniform-bounds}
        0<a_-\leq a(x)\leq a_+<\infty;
    \end{equation}
    \item $J$ is a smooth pointwise skew-symmetric divergence-free matrix
    field whose derivatives have at most polynomial growth.
\end{enumerate}
Then Assumption~\ref{ass:unconstrained-ldp} holds with the bounded weight
$\kappa=1$.
\end{proposition}

\begin{proof}
For the canonical perturbation,
\[
    \calL_\alpha
    =a\left(\Delta-(I_{d}+\alpha J)\nabla U_0\cdot\nabla\right).
\]
The coefficient assumptions give the required polynomial growth.  By
\eqref{eq:a-uniform-bounds}, the diffusion coefficient has full rank
everywhere: the vector fields
$\sqrt{2a}\,\partial_1,\ldots,\sqrt{2a}\,\partial_d$ already span
$\R^d$ without taking brackets.  Thus the H\"ormander condition in
Assumption~\ref{ass:unconstrained-ldp}(i) holds; see
\cite{hormander1967hypoelliptic}.

To verify item (ii), fix $x,y\in\R^d$ and $T>0$, and choose a $C^1$ path
$\gamma:[0,T]\to\R^d$ with $\gamma_0=x$ and $\gamma_T=y$.  Define
\begin{align}\label{u:control}
    u_t
    =\frac{\dot\gamma_t-b_\alpha(\gamma_t)}
           {\sqrt{2a(\gamma_t)}}.
\end{align}
The image of $\gamma$ is compact and $a$ is bounded away from zero there,
so $u\in L^2([0,T];\R^d)$.  
We can re-write \eqref{u:control} as
\[
b_\alpha(\gamma_t)+\sqrt{2a(\gamma_t)}\,u_t
    =\dot\gamma_t.
\]
Hence $z_t=\gamma_t$ solves the controlled equation, starts from $x$, and
reaches $y$ at time $T$, which is precisely the controllability required in
Assumption~\ref{ass:unconstrained-ldp}(ii).

Condition \eqref{eq:U0-strong-dissipativity} implies both that $U_0$ has
compact level sets and that
\[
    \abs{\nabla U_0(x)}
    \geq m\abs{x}-\frac{b}{\abs{x}}
    \longrightarrow\infty
    \qquad\text{as }\abs{x}\to\infty.
\]
Choose $\eta\in(0,1)$ and then $\varepsilon>0$ sufficiently small that
\[
    2\varepsilon<\eta,
    \qquad
    2\varepsilon(1-\varepsilon)<\eta(1-\eta),
\]
and set
\[
    \cW=e^{\eta U_0},
    \qquad
    \overline{\cW}=e^{\varepsilon U_0}.
\]
Skew-symmetry gives $\cA U_0=0$, and hence
\begin{equation}\label{eq:exponential-U0-lyapunov}
    -\frac{\calL_\alpha e^{rU_0}}{e^{rU_0}}
    =ar\left\{(1-r)\abs{\nabla U_0}^2-\Delta U_0\right\},
    \qquad r\in(0,1).
\end{equation}
Since $\nabla U_0$ is globally Lipschitz, $\Delta U_0$ is bounded.
Equations \eqref{eq:a-uniform-bounds} and
\eqref{eq:exponential-U0-lyapunov} therefore show that
$\Psi_\alpha$ has compact level sets and that
\[
    \Psi_\alpha
    \asymp
    -\frac{\calL_\alpha\overline{\cW}}{\overline{\cW}}
    \asymp\abs{\nabla U_0}^2.
\]
The choice $2\varepsilon<\eta$ gives
$\overline{\cW}^{\,2}\leq\cW$, while the second inequality in the
choice of $\varepsilon$, together with boundedness of $a\Delta U_0$,
gives
\[
    -2\frac{\calL_\alpha\overline{\cW}}{\overline{\cW}}
    \leq\Psi_\alpha+C_2,
\]
for some $C_2\in\R$.  Finally, $\kappa=1$ is bounded and
$1/\Psi_\alpha(x)\to0$ as $\abs{x}\to\infty$.  All parts of
Assumption~\ref{ass:unconstrained-ldp} follow.
\end{proof}

\begin{remark}
The sufficient condition is stated in terms of the differentiable anchor
$U_0$, rather than the possibly non-differentiable target potential $U$.
Even when $U$ is differentiable, dissipativity of $U$ and boundedness of
$a=e^{U-U_0}$ alone do not control
$\nabla U_0=\nabla U-\nabla\log a$.  If one instead assumes a suitable
sublinear bound on $\nabla\log a$, dissipativity of $U$ transfers to
\eqref{eq:U0-strong-dissipativity}.
\end{remark}

For any measurable weight $w:\R^d\to[1,\infty)$, let
\begin{equation*}
    B^\infty_w
    :=\left\{f\text{ measurable}:
      \sup_{x\in\mathbb{R}^{d}}\frac{|f(x)|}{w(x)}<\infty\right\}.
\end{equation*}
Let $\tau^\kappa$ denote the topology of convergence against all functions in
$B^\infty_\kappa$, and let
\begin{equation*}
    \cP_\kappa(\R^d)
    :=\left\{\nu\in\cP(\R^d):\int_{\mathbb{R}^{d}}\kappa\,d\nu<\infty\right\}.
\end{equation*}

Let $\calD(\calL_\alpha)$ denote the extended generator domain on
$B^\infty_{\cW}$: a function $u\in B^\infty_{\cW}$ belongs to this domain if
there is a measurable $\phi$, denoted by $\calL_\alpha u$, such that
\[
    \int_0^tP_s|\phi|(x)\,ds<\infty,
    \qquad
    P_tu(x)=u(x)+\int_0^tP_s\phi(x)\,ds,
\]
for every $x\in\R^d$ and $t\geq0$.  Following
\cite[Proposition~3]{ferre2020large}, define the positive generalized domain
\begin{equation}\label{eq:unconstrained-positive-domain}
    \calD^+(\calL_\alpha)
    :=\left\{u\in\calD(\calL_\alpha)\cap C^0(\R^d):
      u>0,\ -\frac{\calL_\alpha u}{u}\in B^\infty_\kappa\right\}.
\end{equation}
With the weighted topology and the appropriate positive generator domain now
fixed, the empirical-measure LDP and the irreversible comparison can be stated
without compactness.

\begin{theorem}[Unconstrained LDP and comparison]\label{thm:unconstrained-ldp}
Suppose Assumption~\ref{ass:unconstrained-ldp} holds.  Then $(L_t)_{t\geq0}$ satisfies an LDP on $\cP_\kappa(\R^d)$ equipped with $\tau^\kappa$ with the rate function as the
Donsker--Varadhan functional \begin{equation}\label{eq:DV:unconstrained}
    \cI_\alpha(\nu)
    =-\inf_{u\in\calD^+(\calL_\alpha)}
      \int_{\mathbb{R}^{d}}\frac{\calL_\alpha u}{u}\,d\nu.
\end{equation}
For every regular
$\nu=e^v\pi\in\cP_\kappa(\R^d)$,
\begin{equation}\label{eq:unconstrained-rate}
    \cI_\alpha(\nu)
    =\frac14\int_{\R^d}a\abs{\nabla v}^2d\nu
     +\frac{\alpha^2}{4}\norm{\cA v}_{H^{-1}_a(\nu)}^2
    \geq\cI_0(\nu).
\end{equation}
\end{theorem}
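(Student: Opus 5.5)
The proof has two parts: the LDP itself, then the evaluation of the Donsker--Varadhan functional. For the first part we invoke the weighted-topology theorem of \cite{ferre2020large}, in the form refined in \cite{Yao2025LDP}. Assumption~\ref{ass:unconstrained-ldp} is designed to match its hypotheses item by item. Item (i) makes transition kernels smooth through hypoellipticity, and item (ii), via the Stroock--Varadhan support theorem, gives irreducibility. Together they give the strong Feller property and topological irreducibility, so by Theorem~\ref{thm:invariance} $\pi$ is the unique invariant measure. Item (iii) is the Witten--Lyapunov condition: $\Psi_\alpha=-\calL_\alpha\cW/\cW$ has compact level sets, which yields exponential tightness of $L_t$ in $\tau^\kappa$ once $\kappa/\Psi_\alpha\to0$. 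The auxiliary function $\overline{\cW}$ gives the lower-bound and regularity estimates used in \cite{ferre2020large}. Item (iv) controls the weight $\kappa$, with \eqref{eq:kappa-multiplicative} ensuring $\kappa\cW$ is an admissible weight for the perturbed semigroup. The conclusion is the LDP on $\cP_\kappa(\R^d)$ with rate \eqref{eq:DV:unconstrained} over the positive domain $\calD^+(\calL_\alpha)$ from \cite[Proposition~3]{ferre2020large}. We only have to check that the generator $a\Delta+b_\alpha\cdot\nabla$ fits their framework, which amounts to matching notation.

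For the decomposition \eqref{eq:unconstrained-rate} we rerun the calculation from the proof of Theorem~\ref{thm:rate-decomposition}. For smooth $u=e^\phi$ we have \eqref{eq:exponential-generator}. The identities \eqref{eq:S-ibp-v} and \eqref{eq:A-ibp-v}, and $\int v\cA v\,d\nu=0$ (via $F(r)=(r-1)e^r$ and $\nabla\cdot(\pi c)=0$), hold for compactly supported $\phi$ by Theorem~\ref{thm:invariance}, since boundary terms are absent. Substituting $\phi=v/2+h$ into the DV functional gives
\[
-\int\frac{\calL_\alpha u}{u}\,d\nu=\frac14\int a\abs{\nabla v}^2d\nu-\Big(\alpha\int h\cA v\,d\nu-\int a\abs{\nabla h}^2d\nu\Big)\cdot(-1)\ \text{(signs as in the compact case)}.
\]
Taking the supremum over $h$ then produces $\frac{\alpha^2}{4}\norm{\cA v}^2_{H^{-1}_a(\nu)}$ by rescaling in \eqref{eq:Hminus1-norm}. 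The inequality $\cI_\alpha\ge\cI_0$ follows immediately because the second term is nonnegative.

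\emph{Main obstacle.} The infimum in \eqref{eq:DV:unconstrained} runs over the generalized domain $\calD^+(\calL_\alpha)$, not over compactly supported smooth $e^\phi$, so we must show the two infima agree for regular $\nu$. First, smooth positive functions equal to $e^{v/2+h}$ with $h\in C_c^\infty$ (and $v$ regular) lie in $\calD^+$, because $-\calL_\alpha u/u$ is then bounded outside a compact set by the behaviour of $v$, so it belongs to $B^\infty_\kappa$. This gives the upper bound on the supremum. For the reverse inequality, we will take a general $u\in\calD^+$, approximate $\log u$ by cutoffs $\chi_R\log u$, and pass to the limit using $\nu\in\cP_\kappa$, the integrability $-\calL_\alpha u/u\in B^\infty_\kappa$, and Fatou's lemma. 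Alternatively, we can use the Dirichlet-form characterization $\cI_\alpha(\nu)=\cE(\sqrt{d\nu/d\pi})$ for the symmetric part plus the current term, as in \cite[Theorem~3.3]{ferre2020large}. Part of the content of the theorem is to make "regular" precise: $v\in H^1_a(\nu)$, $\cA v\in H^{-1}_a(\nu)$, and $e^{v/2}$ in the form domain with enough decay that the truncation errors vanish.
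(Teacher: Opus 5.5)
Your proposal is correct and follows the same route as the paper. The LDP and the Donsker--Varadhan representation over $\calD^+(\calL_\alpha)$ are imported from \cite[Theorem~1 and Proposition~3]{ferre2020large} together with \cite{Yao2025LDP}, and \eqref{eq:unconstrained-rate} comes from rerunning the compact-case integration by parts with $\phi=v/2+h$. The paper does not work through the domain-matching step you single out as the main obstacle: it delegates it to the symmetric/anti-symmetric decomposition of \cite[Theorem~2]{ferre2020large} and leaves ``regular'' implicit. Your sketch of that step (which in effect requires $e^{v/2}\in B^\infty_{\cW}$ and $-\calL_\alpha e^{v/2}/e^{v/2}\in B^\infty_\kappa$) therefore refines the same argument rather than replacing it.
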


\begin{proof}
The LDP and the Donsker--Varadhan representation follow from the weighted-topology
theorem under Assumption~\ref{ass:unconstrained-ldp}; see
\cite[Theorem~1 and Proposition~3]{ferre2020large} and
\cite{Yao2025LDP}.  In particular, Proposition~3 gives precisely the
generalized domain \eqref{eq:unconstrained-positive-domain}, so no boundary
condition is imposed on $\R^d$.  The symmetric--anti-symmetric decomposition is
the variable-diffusivity specialization of
\cite[Theorem~2]{ferre2020large}: its symmetric carré du champ is
$a\abs{\nabla f}^2$ and its anti-symmetric part is $\alpha\cA$.
For the regular measures stated in the theorem, the integration-by-parts
calculation in the proof of Theorem~\ref{thm:rate-decomposition} therefore
gives \eqref{eq:unconstrained-rate}.
\end{proof}

The canonical perturbation has the advantage that it frequently inherits the Lyapunov
structure of the reversible anchored diffusion.

\begin{proposition}[Transfer of Lyapunov functions]\label{prop:lyapunov-transfer}
Choose $\psi=e^{-U_0}$, so that
\begin{equation}\label{eq:A-canonical}
    \cA=-aJ(x)\nabla U_0\cdot\nabla.
\end{equation}
If $\cW=F(U_0)$ for a differentiable function $F$, then
\begin{equation}\label{eq:A-W-zero}
    \cA\cW=0,
    \qquad
    \calL_\alpha\cW=\cS\cW.
\end{equation}
Consequently, every Witten--Lyapunov estimate based on such a $\cW$ is uniform in
$\alpha$.
\end{proposition}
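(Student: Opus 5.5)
The plan is a direct computation, organized in three steps: identify $\cA$ for the canonical stream, apply it to $\cW=F(U_0)$ via the chain rule, and then read off the consequences for the Witten--Lyapunov quantities.

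\emph{Step 1: form of $\cA$.} We first recover \eqref{eq:A-canonical}. From the definition \eqref{eq:a-c-def} with $\psi=e^{-U_0}$, the computation already carried out in the proof of Proposition~\ref{prop:state-dependent-current} gives
\[
c=e^UJ\nabla e^{-U_0}=-aJ\nabla U_0 .
\]
Hence $\cA f=c\cdot\nabla f=-a\,(J\nabla U_0)\cdot\nabla f$.

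\emph{Step 2: $\cA\cW=0$.} For $\cW=F(U_0)$ the chain rule gives $\nabla\cW=F'(U_0)\nabla U_0$. Therefore
\[
\cA\cW=-aF'(U_0)\,(J\nabla U_0)\cdot\nabla U_0=-aF'(U_0)\,\nabla U_0^\top J^\top\nabla U_0 .
\]
The quadratic form $\xi^\top J(x)\xi$ vanishes for every $\xi$ because $J(x)$ is pointwise skew-symmetric: its value equals its own transpose $\xi^\top J^\top\xi=-\xi^\top J\xi$. This holds pointwise in $x$, so state dependence of $J$ is harmless, and no divergence-free property is needed here. Combining with \eqref{eq:generator-decomposition} yields $\calL_\alpha\cW=\cS\cW+\alpha\cA\cW=\cS\cW$, which is \eqref{eq:A-W-zero}.

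\emph{Step 3: uniformity in $\alpha$.} The Witten--Lyapunov quantities in Assumption~\ref{ass:unconstrained-ldp}(iii) are built from ratios $-\calL_\alpha\cW/\cW$ and $-\calL_\alpha\overline{\cW}/\overline{\cW}$. When both $\cW$ and $\overline{\cW}$ are functions of $U_0$, as in the proof of Proposition~\ref{prop:unconstrained-ldp-sufficient}, these ratios equal their $\alpha=0$ counterparts by Step 2. Hence $\Psi_\alpha=\Psi_0$, and compact level sets, the comparison $\asymp$, and the constants $C_1,C_2$ are all independent of $\alpha$.

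The only subtle point is how far this uniformity extends. The multiplicative condition \eqref{eq:kappa-multiplicative} involves $\kappa\cW$, which is not of the form $F(U_0)$ unless $\kappa$ is, so uniformity covers only estimates based on such a $\cW$ (and $\overline{\cW}$). The claim should therefore be stated in exactly that scope. For unbounded $\kappa$, one would additionally choose $\kappa$ to be a function of $U_0$ so that the same argument applies.
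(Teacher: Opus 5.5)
Your proposal is correct and follows the paper's proof essentially step for step. The steps are the chain rule $\nabla\cW=F'(U_0)\nabla U_0$, the pointwise vanishing of $\xi^\top J(x)\xi$, and then $\Psi_\alpha=\Psi_0$ together with the analogous identity for $\overline{\cW}=\overline F(U_0)$, so that \eqref{eq:witten-lyapunov}--\eqref{eq:witten-auxiliary} hold with the same constants $C_1,C_2$. Your added remark is a fair sharpening of scope that is consistent with what the paper actually proves: the multiplicative condition \eqref{eq:kappa-multiplicative} for unbounded $\kappa$ is not covered unless $\kappa$ is also a function of $U_0$.
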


\begin{proof}
The chain rule gives
\[
    \nabla\cW=F'(U_0)\nabla U_0.
\]
Using \eqref{eq:A-canonical} and the identity
$z^\top J(x)z=0$ for every skew-symmetric matrix $J(x)$, we obtain
\[
    \cA\cW
    =-aF'(U_0)(J(x)\nabla U_0)\cdot\nabla U_0
    =0.
\]
Since $\calL_\alpha=\cS+\alpha\cA$, it follows that
$\calL_\alpha\cW=\cS\cW$ for every $\alpha$.  More precisely,
\[
    \Psi_\alpha
    :=-\frac{\calL_\alpha\cW}{\cW}
    =-\frac{\cS\cW}{\cW}
    =:\Psi_0.
\]
Hence $\Psi_\alpha$ and $\Psi_0$ have identical level sets.  If the auxiliary
function also has the form
$\overline{\cW}=\overline F(U_0)$, the same calculation gives
\[
    -\frac{\calL_\alpha\overline{\cW}}{\overline{\cW}}
    =-\frac{\cS\overline{\cW}}{\overline{\cW}}.
\]
Consequently, the compact-level-set condition
\eqref{eq:witten-lyapunov} and the three inequalities in
\eqref{eq:witten-auxiliary} hold for $\calL_\alpha$ if and only if they hold
for $\cS$, with the same constants $C_1$ and $C_2$.
\end{proof}

\section{Asymptotic Variance Reduction}\label{sec:variance}

In this section, we further study the acceleration of non-reversible anchored Langevin dynamics (NALD)
and non-reversible reflected anchored Langevin dynamics (NRALD) through the lens of
asymptotic variance reduction. 
Let $L^2_0(\pi)$ be the mean-zero subspace of $L^2(\pi)$.  We assume that
there exists a constant $C_a\in(0,\infty)$ such that the weighted
Poincar\'e inequality
\begin{equation}\label{eq:weighted-poincare}
    \Var_\pi(f)
    \leq C_a\int_Ea\abs{\nabla f}^2d\pi.
\end{equation}
On a compact connected domain, \eqref{eq:weighted-poincare} follows from uniform bounds
on $a$ and the usual Poincar\'e inequality.  In the unconstrained setting it is the natural
coercivity condition for anchored dynamics and can hold for polynomial-tailed $\pi$ when
$a$ grows at infinity.

Set
\begin{equation}\label{eq:K-operator}
    \calK:=-\cS
\end{equation}
on $L^2_0(\pi)$.  By \eqref{eq:weighted-poincare}, $\calK$ is strictly positive and has a
bounded inverse.  We assume that
\begin{equation}\label{eq:G-def}
    \calG=\calK^{-1/2}\cA\calK^{-1/2}
\end{equation}
is a densely defined skew-adjoint operator.
A convenient sufficient condition, standard in the theory of non-symmetric
Dirichlet forms \citep[Chapter~I]{ma1992introduction}, is the strong sector bound
\begin{equation}\label{eq:sector-condition}
    \abs{\ip{\cA f}{h}_{L^2(\pi)}}
    \leq C_{\rm sec}
    \left(\int_Ea\abs{\nabla f}^2d\pi\right)^{1/2}
    \left(\int_Ea\abs{\nabla h}^2d\pi\right)^{1/2}
\end{equation}
on a common form core.  It is called a sector condition because it controls the
anti-symmetric bilinear form by the product of the symmetric Dirichlet-energy
norms.  In particular, it makes \eqref{eq:G-def} a bounded skew-adjoint extension.
For a centered observable $g$, let
\begin{equation}\label{eq:ergodic-average}
    \overline g_t=\frac1t\int_0^t g(X_s)\,ds.
\end{equation}
When the Poisson equation
$-\calL_\alpha\phi=g$ admits a solution in the form domain, the standard
Poisson-equation martingale approximation
\citep{duncan2016variance,pavliotis2014stochastic} gives the central limit theorem
\begin{equation}\label{eq:clt}
    \sqrt t\,\overline g_t\ \Longrightarrow\ \mathcal{N}\left(0,\sigma^2_{g,\alpha}\right),
\end{equation}
in  distribution as $t\rightarrow\infty$, where
\begin{equation}\label{eq:green-kubo}
    \sigma^2_{g,\alpha}
    =2\ip{g}{(-\calL_\alpha)^{-1}g}_{L^2(\pi)}.
\end{equation}

The Green--Kubo formula
\citep[Section~3.2, Eq.~(28)]{duncan2016variance}
reduces the variance comparison to an operator
inequality for the resolvent.  Skew-adjointness then makes the effect of the
circulation explicit and identifies exactly when the improvement is strict.

\begin{theorem}[Asymptotic variance reduction]\label{thm:variance-reduction}
Assume \eqref{eq:weighted-poincare}, the skew-adjointness of \eqref{eq:G-def}, and the
validity of \eqref{eq:clt}.  Then, for every $g\in L^2_0(\pi)$,
\begin{equation}\label{eq:variance-resolvent}
    \sigma^2_{g,\alpha}
    =2\ip{\calK^{-1/2}g}
    {\left(I_{d}-\alpha^2\calG^2\right)^{-1}\calK^{-1/2}g}_{L^2(\pi)}
    \leq
    2\ip{g}{\calK^{-1}g}_{L^2(\pi)}
    =\sigma^2_{g,0}.
\end{equation}
If $\alpha\neq0$, the inequality is strict precisely when
\begin{equation}\label{eq:strict-variance}
    \calG\calK^{-1/2}g\neq0.
\end{equation}
When $\calK^{-1}g\in\calD(\cA)$, this is equivalent to
$\cA\calK^{-1}g\neq0$.
\end{theorem}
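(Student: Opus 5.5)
The plan is to factor the generator through $\calK^{1/2}$ and reduce the comparison to a spectral inequality for a skew-adjoint operator. On $L^2_0(\pi)$ we have $-\calL_\alpha=\calK-\alpha\cA$. Formally,
\[
-\calL_\alpha=\calK^{1/2}\left(I-\alpha\calG\right)\calK^{1/2},
\]
where $I$ is the identity on $L^2_0(\pi)$ (the $I_d$ in the statement). I would first justify this on a common core and then extend by closure. Because $\calG$ is skew-adjoint, $i\calG$ is self-adjoint and $I-\alpha\calG$ is invertible. The invertibility follows from $\mathrm{Re}\ip{(I-\alpha\calG)f}{f}=\norm{f}^2$, so the operator is bounded below by $1$ and its adjoint $I+\alpha\calG$ has the same property. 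The inverse has norm at most $1$. Together with the bounded inverse of $\calK$ supplied by \eqref{eq:weighted-poincare}, this gives
\[
(-\calL_\alpha)^{-1}=\calK^{-1/2}(I-\alpha\calG)^{-1}\calK^{-1/2}.
\]
The step I expect to be the main obstacle is making this factorization rigorous: one must check that the Poisson solution lies in the form domain and that the range of $\calK^{1/2}$ on the core matches the domain of $\calG$. I would handle it using the assumed validity of \eqref{eq:clt}, which provides the Poisson solution $\phi$. Setting $w=\calK^{1/2}\phi$, the weak form of $-\calL_\alpha\phi=g$ tested against $\calK^{-1/2}h$ reads $(I-\alpha\calG)w=\calK^{-1/2}g$.

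Next I would insert this into the Green--Kubo formula \eqref{eq:green-kubo}. Writing $u=\calK^{-1/2}g$, we get $\sigma^2_{g,\alpha}=2\ip{u}{(I-\alpha\calG)^{-1}u}$. Since $\sigma^2$ is real, only the symmetric part of the resolvent contributes. For skew-adjoint $\calG$,
\[
\tfrac12\left[(I-\alpha\calG)^{-1}+(I+\alpha\calG)^{-1}\right]=(I-\alpha^2\calG^2)^{-1},
\]
because the two resolvents commute; this can be checked by the functional calculus for the self-adjoint operator $i\calG$. This yields the equality in \eqref{eq:variance-resolvent}. Since $-\calG^2=(i\calG)^2\ge0$, we have $I-\alpha^2\calG^2\ge I$, so $(I-\alpha^2\calG^2)^{-1}\le I$. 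Hence $\sigma^2_{g,\alpha}\le2\norm{u}^2=2\ip{g}{\calK^{-1}g}=\sigma^2_{g,0}$, where the last identity is \eqref{eq:green-kubo} at $\alpha=0$.

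For strictness I would use the spectral measure $\mu_u$ of $i\calG$ at $u$. Then
\[
\sigma^2_{g,0}-\sigma^2_{g,\alpha}=2\int\frac{\alpha^2\lambda^2}{1+\alpha^2\lambda^2}\,d\mu_u(\lambda).
\]
For $\alpha\ne0$ this vanishes iff $\mu_u$ is concentrated at $\lambda=0$, that is, iff $u\in\ker\calG$. This is exactly the negation of \eqref{eq:strict-variance}. Finally, if $\calK^{-1}g\in\calD(\cA)$, then $\calG\calK^{-1/2}g=\calK^{-1/2}\cA\calK^{-1}g$. Since $\calK^{-1/2}$ is injective, this vanishes iff $\cA\calK^{-1}g=0$, which gives the last equivalence.
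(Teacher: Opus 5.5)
Your proposal is correct and follows the paper's proof. Both arguments factor $(-\calL_\alpha)^{-1}=\calK^{-1/2}(I-\alpha\calG)^{-1}\calK^{-1/2}$, use $\calG^*=-\calG$ to identify the symmetric part of the middle resolvent with $(I-\alpha^2\calG^2)^{-1}$, and then bound it by $(I-\alpha^2\calG^2)^{-1}\preceq I$. The weak-form justification of the factorization, the spectral-measure argument for strictness, and the injectivity of $\calK^{-1/2}$ for the final equivalence are details the paper asserts without spelling out.
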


\begin{proof}
Since $-\calL_\alpha=\calK-\alpha\cA$,
\begin{equation}\label{eq:factor-resolvent}
    (-\calL_\alpha)^{-1}
    =\calK^{-1/2}(I_{d}-\alpha\calG)^{-1}\calK^{-1/2}.
\end{equation}
Because $\calG^*=-\calG$, the symmetric part of the middle resolvent is
\begin{equation}\label{eq:symmetric-resolvent}
    \frac12\left[(I_{d}-\alpha\calG)^{-1}+(I_{d}+\alpha\calG)^{-1}\right]
    =\left(I_{d}-\alpha^2\calG^2\right)^{-1}.
\end{equation}
The operator $-\calG^2$ is nonnegative, and hence
$\left(I_{d}-\alpha^2\calG^2\right)^{-1}\preceq I_{d}$.  Equations
\eqref{eq:green-kubo}--\eqref{eq:symmetric-resolvent} give
\eqref{eq:variance-resolvent}.  For $\alpha\neq0$, equality holds exactly when
$\calK^{-1/2}g\in\ker\calG$, which proves \eqref{eq:strict-variance}.
\end{proof}

On a compact reflected domain, the abstract coercivity and sector hypotheses
can be verified directly from the ordinary Poincar\'e inequality and bounded
coefficients.  The next corollary therefore turns the operator comparison into
an unconditional statement under Assumption~\ref{ass:compact}.

\begin{corollary}[Compact reflected variance reduction]
\label{cor:compact-variance}
Under Assumption~\ref{ass:compact}, the weighted Poincar\'e and strong sector conditions
hold.  Consequently, the stationary reflected diffusion satisfies the central limit
theorem and \eqref{eq:variance-resolvent} for every $g\in L^2_0(\pi)$.  In particular,
its asymptotic variance cannot increase when the tangential circulation is added.
\end{corollary}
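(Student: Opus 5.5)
We verify the two analytic hypotheses of Theorem~\ref{thm:variance-reduction} on $\overline K$, together with the central limit theorem, and then invoke that theorem.

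\emph{Weighted Poincar\'e inequality.} Under Assumption~\ref{ass:compact}, $K$ is a bounded connected domain with $C^{2,\delta}$ boundary, hence Lipschitz. Such a domain supports the ordinary Poincar\'e inequality $\Var_{\mathrm{Leb}}(f)\le C_K\int_K\abs{\nabla f}^2dx$. Since $a$ is bounded above and below by positive constants, $U-U_0=\log a$ is bounded. I would also use that the $C^\delta$ regularity of the coefficients up to the boundary makes $U_0$ bounded on $\overline K$, so $\pi$ has a density bounded above and below with respect to normalized Lebesgue measure. The standard Holley--Stroock perturbation argument, using $\Var_\pi(f)\le\int(f-m)^2d\pi$ for any constant $m$, then transfers the inequality to $\pi$. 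Inserting $a\ge a_->0$ in the Dirichlet integral gives \eqref{eq:weighted-poincare} with $C_a=C_K\,\mathrm{osc}/a_-$, where $\mathrm{osc}$ is the density-ratio constant.

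\emph{Strong sector condition.} For $f,h\in\calD_N$, anti-symmetry from Theorem~\ref{thm:invariance} (which uses $c\cdot n=0$) gives $\ip{\cA f}{h}=-\ip{f}{\cA h}$. Since $\cA$ annihilates constants and is anti-symmetric, we may replace $f$ by $f-\pi(f)$. Because $c$ is continuous on the compact set $\overline K$, it is bounded, so
\[
\abs{\ip{\cA h}{f-\pi(f)}}\le \norm{c}_\infty a_-^{-1/2}\Big(\int a\abs{\nabla h}^2d\pi\Big)^{1/2}\Var_\pi(f)^{1/2}.
\]
Applying \eqref{eq:weighted-poincare} to the last factor yields \eqref{eq:sector-condition} with $C_{\rm sec}=\norm{c}_\infty (C_a/a_-)^{1/2}$. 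The remaining task is to check that $\calD_N$ is a common form core for $\cS$ and $\cA$. Here I would use elliptic regularity for the Neumann problem on a $C^{2,\delta}$ domain with H\"older coefficients. By the Ma--R\"ockner theory, this shows that $\calG$ extends to a bounded skew-adjoint operator on $L^2_0(\pi)$.

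\emph{Central limit theorem.} For $g\in L^2_0(\pi)$, the Poisson equation $-\calL_\alpha\phi=g$ has the form solution $\phi=\calK^{-1/2}(I-\alpha\calG)^{-1}\calK^{-1/2}g$. This is well defined because $\calK^{-1}$ is bounded by the Poincar\'e inequality and $I-\alpha\calG$ is invertible with norm at most one, $\calG$ being bounded and skew. The stationary martingale approximation $\phi(X_t)-\phi(X_0)+\int_0^tg(X_s)ds=M_t$ then gives \eqref{eq:clt} via the martingale CLT. Its quadratic variation is $2\int_0^t a\abs{\nabla\phi}^2(X_s)ds$, and the reflection term contributes nothing because $\phi$ satisfies the Neumann condition weakly; ergodicity follows from the spectral gap. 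With all three hypotheses established, Theorem~\ref{thm:variance-reduction} yields \eqref{eq:variance-resolvent}.

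The main obstacle is making the martingale approximation rigorous when $\phi$ is only in the form domain rather than in $\calD_N$. I would first approximate $g$ by $\cS$-smooth data, for which $\phi$ is classical by Schauder estimates. I would then pass to the limit using the $L^2$ contraction of the stationary ergodic averages, $\Var(\sqrt t\,\overline g_t)\le 2\norm{g}\norm{(-\calL_\alpha)^{-1}g}$, which is uniform in $t$.
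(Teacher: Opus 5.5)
Your proposal is correct and follows the paper's proof. The structure is the same throughout: the ordinary Poincar\'e inequality on $(K,\pi)$ divided by $a_{\min}$ (you also justify that constant via Holley--Stroock, which the paper takes for granted); the sector bound by Cauchy--Schwarz against a centered function plus Poincar\'e; and the Poisson-equation martingale CLT with Green--Kubo identification. The one real difference is in the CLT step. The paper applies It\^o's formula and $\partial_n u=0$ directly to the weak form-domain solution. You instead first treat H\"older $g$, where the Neumann solutions are classical $C^{2,\delta}$, and then pass to the limit using a uniform-in-$t$ bound on $\Var(\sqrt t\,\overline g_t)$. This is a genuine gain in rigor. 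Any bound of the form $C\norm{g}^2$ works there, for instance $2\norm{g}^2/\lambda_0$ from $\abs{\ip{g}{P_ug}}\le e^{-\lambda_0u}\norm{g}^2$ (Lemma~\ref{lem:lambda}), so the exact constant you wrote does not matter.
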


\begin{proof}
Let $C_P$ be the ordinary Poincar\'e constant of $(K,\pi)$ and let
$a_{\min}:=\inf_{x\in K}a(x)>0$.  Then
\begin{equation}\label{eq:compact-weighted-poincare-proof}
    \Var_\pi(f)
    \leq C_P\int_K\abs{\nabla f}^2d\pi
    \leq\frac{C_P}{a_{\min}}
         \int_Ka\abs{\nabla f}^2d\pi,
\end{equation}
which proves \eqref{eq:weighted-poincare}.  Since
$\int_K\cA f\,d\pi=0$, we also have
\begin{align*}
    \abs{\ip{\cA f}{h}_{L^2(\pi)}}
    &=\abs{\int_K(c\cdot\nabla f)(h-\pi(h))\,d\pi}\\
    &\leq
    \norm{\frac{\abs{c}^2}{a}}_{L^\infty(K)}^{1/2}
    \left(\int_Ka\abs{\nabla f}^2d\pi\right)^{1/2}
    \norm{h-\pi(h)}_{L^2(\pi)}\\
    &\leq
    \left(\frac{C_P}{a_{\min}}
          \norm{\frac{\abs{c}^2}{a}}_{L^\infty(K)}\right)^{1/2}
    \left(\int_Ka\abs{\nabla f}^2d\pi\right)^{1/2}
    \left(\int_Ka\abs{\nabla h}^2d\pi\right)^{1/2}.
\end{align*}
This is \eqref{eq:sector-condition}.

For $g\in L^2_0(\pi)$, coercivity and the sector bound give a unique weak
Poisson solution $u=(-\calL_\alpha)^{-1}g$ in the mean-zero form domain.
It\^o's formula, the Neumann condition, and stationarity yield
\begin{equation}\label{eq:compact-martingale-decomposition}
    \int_0^t g(X_s)\,ds
    =u(X_0)-u(X_t)+M_t,
    \qquad
    \langle M\rangle_t
    =2\int_0^t a(X_s)\abs{\nabla u(X_s)}^2ds.
\end{equation}
The boundary term vanishes because $\partial_nu=0$.  The endpoint term in
\eqref{eq:compact-martingale-decomposition}, divided by $\sqrt t$, converges
to zero in $L^2$.  The ergodic theorem
\citep[Chapter~4]{pavliotis2014stochastic} is used to identify the limiting
quadratic variation:
\[
    \frac{\langle M\rangle_t}{t}
    =\frac2t\int_0^t a(X_s)\abs{\nabla u(X_s)}^2\,ds
    \longrightarrow
    2\int_Ka\abs{\nabla u}^2\,d\pi,
\]
almost surely as $t\rightarrow\infty$.  The continuous-martingale central limit theorem
\citep[Theorem~7.1.4]{ethier1986markov} then yields that
\[
    \frac{M_t}{\sqrt t}
    \Longrightarrow
    \mathcal{N}\left(0,2\int_Ka\abs{\nabla u}^2d\pi\right),
\]
in distribution as $t\rightarrow\infty$.

Finally, anti-symmetry of $\cA$ gives
\[
    \int_Ka\abs{\nabla u}^2d\pi
    =\ip{u}{-\calL_\alpha u}_{L^2(\pi)}
    =\ip{g}{u}_{L^2(\pi)}.
\]
Thus \eqref{eq:clt} and \eqref{eq:green-kubo} hold; see also
\cite{duncan2016variance,wang2026constrained}.  The conclusion follows from
Theorem~\ref{thm:variance-reduction}.
\end{proof}

The preceding compact argument depends only on a divergence-free tangential
current, not on a constant skew matrix.  Therefore, it applies directly to the
state-dependent stream construction introduced in
Proposition~\ref{prop:state-dependent-current}.

\begin{corollary}[Compact acceleration with a state-dependent tensor]
\label{cor:state-dependent-compact}
Assume the domain, coefficient regularity, ellipticity, and well-posedness requirements of
Assumption~\ref{ass:compact}, and set
\begin{equation*}
    c=e^UJ(x)\nabla\psi,
\end{equation*}
where $J\in C^1(\overline K;\R^{d\times d})$ satisfies
\eqref{eq:state-dependent-J-assumptions}.  Suppose either condition of
Lemma~\ref{lem:stream-boundary} holds, so that Assumption~\ref{ass:compact} is satisfied.
Then the reflected diffusion has invariant
measure $\pi$, its empirical-measure rate function satisfies
\eqref{eq:rate-decomposition}--\eqref{eq:rate-order}, and every
$g\in L^2_0(\pi)$ satisfies the central limit theorem:
\begin{equation*}
    \sqrt t\,\overline g_t
    \Longrightarrow \mathcal{N}\left(0,\sigma^2_{g,\alpha}\right),
\end{equation*}
in distribution as $t\rightarrow\infty$, where
\begin{equation*}
\sigma^2_{g,\alpha}\leq\sigma^2_{g,0}.
\end{equation*}
The strictness criteria are respectively $\cA v\neq0$ in $H^{-1}_a(\nu)$ and
$\calG\calK^{-1/2}g\neq0$.
\end{corollary}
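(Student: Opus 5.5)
The plan is to reduce Corollary~\ref{cor:state-dependent-compact} to results already proved, by checking that the state-dependent current $c=e^UJ\nabla\psi$ has exactly the two structural properties those results use. These are $\nabla\cdot(\pi c)=0$ in $K$ and $c\cdot n=0$ on $\partial K$.

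First, Proposition~\ref{prop:state-dependent-current} applies because $J\in C^1$ is skew-symmetric and divergence-free in the sense of \eqref{eq:tensor-divergence}. It gives $\nabla\cdot(\pi c)=0$, since the term $\sum_{ij}(\partial_iJ_{ij})\partial_j\psi$ vanishes by divergence-freeness and $\sum_{ij}J_{ij}\partial_{ij}\psi$ vanishes by skew-symmetry against the symmetric Hessian. Next, under either alternative of Lemma~\ref{lem:stream-boundary}, we get \eqref{eq:tangent-condition}:
\begin{itemize}
\item under (i), $\nabla\psi\parallel n$ on each boundary component;
\item under (ii), $n^\top J\nabla\psi=-(Jn)\cdot\nabla\psi=0$.
\end{itemize}
Together with the assumed domain regularity, H\"older coefficients, two-sided bounds on $a$ and well-posedness, this verifies Assumption~\ref{ass:compact} in full.

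With Assumption~\ref{ass:compact} in hand, the three conclusions follow from three earlier results.
\begin{itemize}
\item Invariance of $\pi$ comes from Theorem~\ref{thm:invariance}. Integrating $\calL_\alpha f$ against $\pi$ leaves only the two boundary fluxes, $\pi a\,\partial_nf$ and $\pi f\,c\cdot n$. The first vanishes by the Neumann condition and the second by tangency.
\item The rate-function decomposition \eqref{eq:rate-decomposition} and the ordering \eqref{eq:rate-order} come from Theorem~\ref{thm:rate-decomposition}. Its proof used only $\nabla\cdot(\pi c)=0$, $c\cdot n=0$ and the carr\'e du champ $a\abs{\nabla f}^2$, never constancy of $J$. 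The identity $\int v\cA v\,d\nu=0$ also holds, since $\cA$ is a first-order derivation preserving $\pi$.
\item The central limit theorem and $\sigma^2_{g,\alpha}\le\sigma^2_{g,0}$ come from Corollary~\ref{cor:compact-variance}, which in turn rests on Theorem~\ref{thm:variance-reduction}.
\end{itemize}
The strictness criteria are the ones already stated in Theorem~\ref{thm:rate-decomposition} and Theorem~\ref{thm:variance-reduction}.

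The step needing care is the strong sector bound inside Corollary~\ref{cor:compact-variance}, which requires $\norm{\abs{c}^2/a}_{L^\infty(K)}<\infty$. Here
\begin{equation*}
\frac{\abs{c}^2}{a}=\frac{e^{2U}\abs{J\nabla\psi}^2}{e^{U-U_0}}=e^{U+U_0}\abs{J\nabla\psi}^2 .
\end{equation*}
I will bound this using continuity of $U,U_0,J,\nabla\psi$ on the compact set $\overline K$: $U$ is locally Lipschitz, $U_0\in C^2$, $J\in C^1(\overline K)$ and $\psi\in C^2(\overline K)$. The same boundedness makes $c$ continuous up to the boundary, so the boundary integrations by parts in the form anti-symmetry are justified on the core $\calD_N$.

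Given that bound, the rest is bookkeeping. The weighted Poincar\'e inequality follows from the ordinary one and $a\ge a_{\min}$. Skew-adjointness of $\calG$ follows from the sector bound. The Poisson-equation martingale argument with $\partial_nu=0$ gives \eqref{eq:clt}--\eqref{eq:green-kubo}. Theorem~\ref{thm:variance-reduction} then gives the variance inequality.
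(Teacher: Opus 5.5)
Your proposal is correct and follows essentially the same route as the paper. You obtain $\nabla\cdot(\pi c)=0$ from Proposition~\ref{prop:state-dependent-current} and $c\cdot n=0$ from Lemma~\ref{lem:stream-boundary}, then invoke Theorem~\ref{thm:invariance}, Theorem~\ref{thm:rate-decomposition} and Corollary~\ref{cor:compact-variance}. The only new input is the strong sector bound via boundedness of $\abs{c}^2/a$ on $\overline K$; your explicit formula $\abs{c}^2/a=e^{U+U_0}\abs{J\nabla\psi}^2$ is a more concrete version of the paper's appeal to $a_{\min}>0$ and $\norm{c}_{L^\infty(K)}<\infty$.
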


\begin{proof}
Proposition~\ref{prop:state-dependent-current} gives
\[
    \nabla\cdot(\pi c)
    =Z^{-1}\nabla\cdot(J\nabla\psi)=0.
\]
Indeed, the term involving $\nabla\cdot J$ vanishes by
\eqref{eq:state-dependent-J-assumptions}, and the contraction of the
skew-symmetric matrix $J$ with the symmetric Hessian $\nabla^2\psi$ is zero.
Lemma~\ref{lem:stream-boundary} gives $c\cdot n=0$ on $\partial K$.
Consequently,
\[
    \int_K\cA f\,d\pi
    =\int_Kc\cdot\nabla f\,d\pi=0,
    \qquad
    \ip{f}{\cA h}_{L^2(\pi)}
    =-\ip{\cA f}{h}_{L^2(\pi)},
\]
with no boundary contribution.  Thus Theorem~\ref{thm:invariance} gives
invariance of $\pi$, and Assumption~\ref{ass:compact} permits application of
Theorem~\ref{thm:rate-decomposition}.

Moreover, compactness and coefficient regularity imply
$a_{\min}>0$ and $\|c\|_{L^\infty(K)}<\infty$.  The calculation in the proof
of Corollary~\ref{cor:compact-variance} therefore proves the strong sector
bound with
\[
    C_{\rm sec}^2
    \leq\frac{C_P}{a_{\min}}
         \norm{\frac{\abs{c}^2}{a}}_{L^\infty(K)}.
\]
Corollary~\ref{cor:compact-variance} now gives the central limit theorem and
variance comparison, while the strictness statements follow respectively
from \eqref{eq:rate-decomposition} and \eqref{eq:strict-variance}.
\end{proof}

If $\calG$ is compact, Theorem~\ref{thm:variance-reduction} yields a more explicit
spectral formula.  Suppose $\calG e_j=\mathrm{i}\lambda_j e_j$, and decompose
\begin{equation*}
    \calK^{-1/2}g=h_0+\sum_{j\geq1}h_j e_j,
    \qquad h_0\in\ker\calG.
\end{equation*}
Then
\begin{equation}\label{eq:spectral-variance}
    \sigma^2_{g,\alpha}
    =2\norm{h_0}_{L^2(\pi)}^2
     +2\sum_{j\geq1}\frac{\abs{h_j}^2}{1+\alpha^2\lambda_j^2}.
\end{equation}
Thus, increasing $\abs{\alpha}$ damps precisely those components of the reversible
Poisson solution which are moved by the circulation.

\begin{remark}
Continuous-time variance reduction does not automatically imply a reduction per
unit of computation.  Large $\abs{\alpha}$ makes an explicit discretization stiff and may
require a smaller stepsize. Therefore, for the numerical comparisons in Section~\ref{sec:numerical}, we use a fixed number of
evaluations of $U$, $\nabla U_0$, and $\nabla\psi$, and report both discretization
bias and effective sample size.
\end{remark}

\begin{remark}
A constant shift $U_0\mapsto U_0+C$ multiplies $a$ and hence the entire
reversible infinitesimal generator by $e^{-C}$.  Both the level-two rate function and the inverse
asymptotic variance scale with this clock speed.  To compare different anchors, one should
fix a normalization such as $\int_{E} a\,d\pi=1$, a prescribed average quadratic variation,
or an explicit computational budget.  The comparisons in
Theorems~\ref{thm:rate-decomposition} and \ref{thm:variance-reduction} avoid this
ambiguity because $a$ is held fixed while only the anti-symmetric strength $\alpha$ varies.
\end{remark}

\section{Explicit Constructions and Examples}\label{sec:examples}

\subsection{The anchored heavy-tailed class}\label{sec:anchored-heavy-tail}

Throughout this subsection, we specialize the general matrix field to the
constant-matrix case: $J(x)\equiv J$ for a fixed nonzero skew-symmetric matrix $J$.

The general heavy-tailed family needed here is already present in
\cite[Theorem~3]{gurbuzbalaban2025anchored}.  We record it in our notation and isolate
the extra point needed for the empirical-measure LDP.  Let $q:\R^d\to[1,\infty)$ be
sufficiently smooth and suppose, for some $\beta>0$, that
\begin{equation}\label{eq:q-integrability}
    Z_q:=\int_{\R^d}\left(q(x)\right)^{-(\beta+1)}\,dx<\infty.
\end{equation}
Assume that there are $c_0,c_1>0$ and $r>-1$ such that
\begin{equation}\label{eq:q-heavy-tail-drift}
    d q(x)-\beta x\cdot\nabla q(x)
    \leq-c_0\abs{x}^{2+r}+c_1,
    \qquad x\in\R^d.
\end{equation}
The choice
\begin{equation}\label{eq:q-heavy-tail-potentials}
    U_0=\beta\log q,
    \qquad U=(\beta+1)\log q
\end{equation}
gives $a=q$.  Hence, the canonical nonreversible anchored diffusion is
\begin{equation}\label{eq:q-heavy-tail-sde}
    dX_t=-\beta(I_{d}+\alpha J)\nabla q(X_t)\,dt
          +\sqrt{2q(X_t)}\,dW_t,
\end{equation}
and its candidate invariant density is $Z_q^{-1}q^{-(\beta+1)}$.

The following proposition explains exactly which conclusions follow directly from
\eqref{eq:q-integrability} and \eqref{eq:q-heavy-tail-drift}.
The auxiliary Witten--Lyapunov requirements
\eqref{eq:witten-lyapunov}--\eqref{eq:witten-auxiliary} in
Assumption~\ref{ass:unconstrained-ldp}
are checked separately for the Student family below.

\begin{proposition}[Heavy-tail verification]\label{prop:q-heavy-tail}
Suppose the local regularity conditions needed for the diffusion and
\eqref{eq:q-integrability}--\eqref{eq:q-heavy-tail-drift} hold and the
anti-symmetric flow is tangent to the Euclidean level sets,
\begin{equation}\label{eq:q-tangency}
    x\cdot J\nabla q(x)=0,
    \qquad x\in\R^d.
\end{equation}
Then the diffusion \eqref{eq:q-heavy-tail-sde} is nonexplosive and has the unique
invariant probability measure
\begin{equation}\label{eq:q-heavy-tail-target}
    \pi_q(dx)=Z_q^{-1}\left(q(x)\right)^{-(\beta+1)}\,dx.
\end{equation}
Moreover:
\begin{enumerate}[label=\textup{(\roman*)},leftmargin=2.5em]
    \item if $r\geq0$, the symmetric anchored infinitesimal generator satisfies a weighted Poincar\'e
    inequality
    \begin{equation}\label{eq:q-weighted-poincare}
        \Var_{\pi_q}(f)
        \leq C_q\int_{\R^d}q\abs{\nabla f}^2d\pi_q;
    \end{equation}
    \item if $r>0$, the principal coercive condition in
    Assumption~\ref{ass:unconstrained-ldp} holds with
    $\cW(x)=1+\abs{x}^2$.  It permits $\kappa=1$, and more generally every admissible
    weight $\kappa=o(1+\abs{x}^r)$.  If the auxiliary condition
    \eqref{eq:witten-auxiliary} and the coefficient assumptions are also satisfied,
    Theorem~\ref{thm:unconstrained-ldp} gives the rate-function acceleration.
\end{enumerate}
\end{proposition}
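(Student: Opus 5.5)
The plan is to use the single test function $\cW(x)=1+\abs{x}^2$ throughout. Tangency \eqref{eq:q-tangency} makes the anti-symmetric part invisible to it, so every estimate reduces to the reversible computation in \cite[Theorem~3]{gurbuzbalaban2025anchored}. By \eqref{eq:q-heavy-tail-sde} the generator is
\[
\calL_\alpha f=q\Delta f-\beta(I_d+\alpha J)\nabla q\cdot\nabla f .
\]
With $\nabla\cW=2x$ and $\Delta\cW=2d$, and using $x\cdot J\nabla q=0$ from \eqref{eq:q-tangency}, this gives
\[
\calL_\alpha\cW=2\left(dq-\beta x\cdot\nabla q\right)\leq-2c_0\abs{x}^{2+r}+2c_1 ,
\]
by \eqref{eq:q-heavy-tail-drift}, uniformly in $\alpha$.

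\textbf{Well-posedness, invariance and uniqueness.}
\begin{itemize}[leftmargin=2em]
\item \emph{Nonexplosion.} Since $r>-1$, the bound above gives $\calL_\alpha\cW\leq 2c_1\leq 2c_1\cW$. The coefficients are locally Lipschitz and $q\geq1$, so Khasminskii's test with the compact-level-set function $\cW$ rules out explosion.
\item \emph{Invariance.} This is the canonical choice $\psi=e^{-U_0}$ with $U_0=\beta\log q$ and $U=(\beta+1)\log q$, so $a=q$. A constant skew $J$ is trivially divergence free. Proposition~\ref{prop:state-dependent-current} and Theorem~\ref{thm:invariance} then give invariance of $\pi\propto e^{-U}=q^{-(\beta+1)}$, which is normalizable by \eqref{eq:q-integrability}. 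This is $\pi_q$ in \eqref{eq:q-heavy-tail-target}.
\item \emph{Uniqueness.} The diffusion matrix $2qI_d$ is uniformly elliptic on compacts because $q\geq1$. Hence the semigroup is strong Feller. Irreducibility follows from the support theorem, using the same control construction as in Proposition~\ref{prop:unconstrained-ldp-sufficient}(ii). The last clause of Theorem~\ref{thm:invariance} then gives uniqueness.
\end{itemize}

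\textbf{Part (i).} Work with the reversible generator $\cS$ ($\alpha=0$); the display above holds for it too. When $r\geq0$ we have $\abs{x}^{2+r}\geq\abs{x}^2$ for $\abs{x}\geq1$. So there exist $\theta>0$, $b\geq0$ and a ball $B_R$ with
\[
\cS\cW\leq-\theta\cW+b\mathbf 1_{B_R}.
\]
I would then apply the Lyapunov-to-Poincaré argument of Bakry--Barthe--Cattiaux--Guillin \citep{bakry2008simple}, in its carré-du-champ form with $\Gamma(f)=q\abs{\nabla f}^2$. Write $f^2\leq\frac{-\cS\cW}{\theta\cW}f^2+\frac{b}{\theta\cW}f^2\mathbf 1_{B_R}$ and integrate against $\pi_q$. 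Integration by parts gives
\[
\int\frac{-\cS\cW}{\cW}f^2\,d\pi_q\leq\int q\abs{\nabla f}^2\,d\pi_q .
\]
It then remains to control $\int_{B_R}(f-m)^2\,d\pi_q$ by a local Poincaré inequality on $B_R$. That inequality holds because $\pi_q$ has density bounded above and below there and $q\geq1$.

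\textbf{Part (ii).} Let $\Psi_\alpha=-\calL_\alpha\cW/\cW$. For $r>0$ the display gives
\[
\Psi_\alpha\geq\frac{2c_0\abs{x}^{2+r}-2c_1}{1+\abs{x}^2}\gtrsim\abs{x}^r-C\to\infty .
\]
Hence $\Psi_\alpha$ has compact level sets, as does $\cW$; this is \eqref{eq:witten-lyapunov}. Any admissible $\kappa=o(1+\abs{x}^r)$, including $\kappa=1$, satisfies $\kappa/\Psi_\alpha\to0$. Granting \eqref{eq:witten-auxiliary} and the coefficient conditions, Theorem~\ref{thm:unconstrained-ldp} applies.

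The main obstacle I expect is part (i). The difficulty is carrying the Lyapunov-to-Poincaré argument through with the unbounded, state-dependent diffusivity $q$: one must justify the integration by parts $\int(-\cS\cW/\cW)f^2\,d\pi_q\leq\int q\abs{\nabla f}^2\,d\pi_q$ for a core of compactly supported $f$ and then extend by density. The local Poincaré inequality on $B_R$ is routine by comparison.
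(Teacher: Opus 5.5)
Your proposal is correct and follows the paper's proof essentially step by step. It uses the same test function $\cW=1+\abs{x}^2$ with tangency removing the $\alpha$-term, the same drift bound $\calL_\alpha\cW\leq-2c_0\abs{x}^{2+r}+2c_1$, the ground-state inequality plus a local Poincar\'e inequality on a ball for part (i), and the growth of $\Psi_\alpha$ for part (ii), while spelling out slightly more explicitly than the paper the nonexplosion (Khasminskii) and uniqueness (strong Feller plus irreducibility) steps that the paper only invokes.
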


\begin{proof}
The substitution \eqref{eq:q-heavy-tail-potentials} in the reversible anchored
infinitesimal generator gives
\begin{equation*}
    \cS=q\Delta-\beta\nabla q\cdot\nabla,
\end{equation*}
and the canonical perturbation is
$\cA=-\beta J\nabla q\cdot\nabla$.  Thus
$\calL_\alpha=\cS+\alpha\cA$ is precisely the infinitesimal generator of
\eqref{eq:q-heavy-tail-sde}.  The divergence-form calculation in
Theorem~\ref{thm:invariance} proves invariance of \eqref{eq:q-heavy-tail-target}; local
ellipticity, irreducibility, and the Lyapunov estimate below give nonexplosion and
uniqueness.

For $\cW(x)=1+\abs{x}^2$, condition \eqref{eq:q-tangency} yields
\begin{align}
    \calL_\alpha\cW
    &=2\left(dq-\beta x\cdot\nabla q
              -\alpha\beta x\cdot J\nabla q\right)\notag\\
    &=2\left(dq-\beta x\cdot\nabla q\right)
      \leq-2c_0\abs{x}^{2+r}+2c_1.\label{eq:q-W-drift}
\end{align}
For $r\geq0$, the right-hand side is bounded above by
$-\lambda\cW+b\mathbf 1_C$ for a compact set $C$.  The standard
Lyapunov--Poincar\'e criterion
\citep[Theorem~1.4 and Section~2]{bakry2008simple}
can be seen directly from the ground-state
inequality
\begin{equation}\label{eq:ground-state-inequality}
    \int_{\R^d}
      \left(-\frac{\cS\cW}{\cW}\right)f^2d\pi_q
    \leq
    \int_{\R^d}q\abs{\nabla f}^2d\pi_q.
\end{equation}
Indeed, integration by parts gives
\[
    \int_{\mathbb{R}^{d}} q\nabla\cW\cdot\nabla\left(\frac{f^2}{\cW}\right)d\pi_q
    =
    \int_{\mathbb{R}^{d}} q\left(
       2\frac{f}{\cW}\nabla f\cdot\nabla\cW
       -\frac{f^2}{\cW^2}\abs{\nabla\cW}^2\right)d\pi_q,
\]
and the right-hand side is bounded above by the Dirichlet energy in
\eqref{eq:ground-state-inequality}.  Combining this inequality with
$\cS\cW\leq-\lambda\cW+b\mathbf1_C$ controls
$\int_{C^c}f^2d\pi_q$ by the Dirichlet energy and the $L^2$ mass on $C$.
The local Poincar\'e inequality on $C$, together with $\pi_q(f)=0$, controls
the latter term; the usual enlargement-and-absorption argument therefore gives
\eqref{eq:q-weighted-poincare}.  See \cite{bakry2008rate} for this
Lyapunov--Poincar\'e patching argument.

If $r>0$, \eqref{eq:q-W-drift} also gives
\begin{equation}\label{eq:q-Psi-bound}
    \Psi_\alpha(x)
    =-\frac{\calL_\alpha\cW(x)}{\cW(x)}
    \geq
    \frac{2c_0\abs{x}^{2+r}-2c_1}{1+\abs{x}^2}
    \longrightarrow\infty
    \qquad\text{as }\abs{x}\to\infty.
\end{equation}
Thus, $\Psi_\alpha$ has compact level sets and
$\kappa/\Psi_\alpha\to0$ for each $\kappa=o(1+\abs{x}^r)$, proving the claimed
principal coercive estimate.
\end{proof}

Condition \eqref{eq:q-tangency} is automatic when $q$ is radial, since a
skew-symmetric matrix satisfies $\langle x,Jx\rangle=0$.  For an ellipsoidal $q$ it holds, after
centering, whenever $J$ is compatible with the shape matrix; alternatively, one may use
a stream potential whose current is tangent to the selected Lyapunov level sets.

\begin{corollary}[A Student-$t$ family in the LDP regime]\label{cor:student-ldp}
Let $\vartheta>2$ be the degrees of freedom and choose
\begin{equation}\label{eq:student-zeta-range}
    1<\zeta<\frac{\vartheta}{2},
    \qquad
    s(x)=1+\frac{\abs{x}^2}{\vartheta},
    \qquad
    q_\zeta(x)=(s(x))^\zeta,
    \qquad
    \beta=\frac{d+\vartheta}{2\zeta}-1.
\end{equation}
Then \eqref{eq:q-heavy-tail-sde} becomes
\begin{equation}\label{eq:student-zeta-sde}
    dX_t=-\frac{2\beta\zeta}{\vartheta}\left(s(X_t)\right)^{\zeta-1}
          (I_{d}+\alpha J)X_t\,dt
          +\sqrt{2}\left(s(X_t)\right)^{\zeta/2}\,dW_t,
\end{equation}
and has the centered isotropic Student-$t$ density
\begin{equation}\label{eq:student-zeta-target}
    \pi_\vartheta(x)\propto
    \left(1+\frac{\abs{x}^2}{\vartheta}\right)^{-(d+\vartheta)/2}.
\end{equation}
For every constant skew-symmetric $J$, the empirical measures of
\eqref{eq:student-zeta-sde} satisfy Theorem~\ref{thm:unconstrained-ldp}, and the
associated symmetric infinitesimal generator satisfies \eqref{eq:q-weighted-poincare}.
\end{corollary}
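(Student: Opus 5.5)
The plan is to reduce Corollary~\ref{cor:student-ldp} to Proposition~\ref{prop:q-heavy-tail} and Theorem~\ref{thm:unconstrained-ldp}. First I will verify the algebra. With $q_\zeta=s^\zeta$ we have $\nabla q_\zeta=\frac{2\zeta}{\vartheta}s^{\zeta-1}x$. Substituting this into \eqref{eq:q-heavy-tail-sde} gives \eqref{eq:student-zeta-sde}, and the invariant density $q_\zeta^{-(\beta+1)}=s^{-\zeta(\beta+1)}=s^{-(d+\vartheta)/2}$ is \eqref{eq:student-zeta-target}. Integrability \eqref{eq:q-integrability} holds because $\vartheta>0$. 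Since $q_\zeta$ is radial, $x\cdot J\nabla q_\zeta=\frac{2\zeta}{\vartheta}s^{\zeta-1}x\cdot Jx=0$, so the tangency condition \eqref{eq:q-tangency} holds for every constant skew $J$.

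Next I will check the drift condition \eqref{eq:q-heavy-tail-drift}. We have
\[
dq_\zeta-\beta x\cdot\nabla q_\zeta=s^{\zeta-1}\Bigl(d\,s-\tfrac{2\beta\zeta}{\vartheta}\abs{x}^2\Bigr)=s^{\zeta-1}\Bigl(d+\tfrac{\abs{x}^2}{\vartheta}(d-2\beta\zeta)\Bigr),
\]
and $d-2\beta\zeta=2\zeta-\vartheta<0$ precisely because $\zeta<\vartheta/2$. Since $s^{\zeta-1}\asymp\abs{x}^{2\zeta-2}$, the right-hand side is at most $-c_0\abs{x}^{2\zeta}+c_1$. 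This gives $r=2\zeta-2>0$ because $\zeta>1$. Proposition~\ref{prop:q-heavy-tail}(i) then yields the weighted Poincar\'e inequality, and part (ii) yields the principal coercive condition with $\cW=1+\abs{x}^2$ and $\kappa=1$. Along the way I will confirm nonexplosion and $\beta>0$, which follows from $\zeta<\vartheta/2<(d+\vartheta)/2$.

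It then remains to verify the rest of Assumption~\ref{ass:unconstrained-ldp}. Items (i) and (ii) follow as in Proposition~\ref{prop:unconstrained-ldp-sufficient}. The coefficients $s^{\zeta-1}x$ and $s^{\zeta/2}$ are smooth with polynomially growing derivatives, and $a=s^\zeta\ge1$, so the diffusion is uniformly nondegenerate and H\"ormander holds trivially. Controllability uses the explicit control \eqref{u:control}.

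The main obstacle is the auxiliary condition \eqref{eq:witten-auxiliary}, since Proposition~\ref{prop:unconstrained-ldp-sufficient} assumed bounded $a$ and a Gaussian-type $e^{\eta U_0}$ weight, and neither is available here. I would take $\overline{\cW}=(1+\abs{x}^2)^{1/2}$, so that $\overline{\cW}^2=\cW$ and $C_1=1$. By radial symmetry and $x\cdot Jx=0$, the antisymmetric part kills both $\cW$ and $\overline{\cW}$, as in Proposition~\ref{prop:lyapunov-transfer}. A direct computation with $\overline{\cW}=\cW^{1/2}$ gives
\[
-\frac{\calL_\alpha\overline{\cW}}{\overline{\cW}}=\frac12\Psi_\alpha+\frac{q_\zeta\abs{\nabla\cW}^2}{4\cW^2}.
\]
The correction term behaves like $\abs{x}^{2\zeta-2}$, which has the same order as $\Psi_\alpha\asymp\abs{x}^{2\zeta}/\abs{x}^2$. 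So I must check the constants: both quantities are explicit radial rational functions of order $\abs{x}^{2\zeta-2}$. This gives $-\calL_\alpha\overline{\cW}/\overline{\cW}\asymp\Psi_\alpha$ provided the leading coefficient $\frac{2\zeta}{\vartheta^{\zeta}}\bigl(\tfrac{\vartheta}{2}-\zeta\bigr)\cdot 2+\dots$ is positive, and checking this is where the restriction $\zeta<\vartheta/2$ enters again. The inequality $-2\calL_\alpha\overline{\cW}/\overline{\cW}\le\Psi_\alpha+C_2$, however, fails from this identity because of the positive correction term. I would therefore instead take $\overline{\cW}=\cW^{\epsilon}$ with small $\epsilon$ and $\cW=(1+\abs{x}^2)^{p}$ with larger $p$. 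For such power weights, $-\calL\cW^{\gamma}/\cW^{\gamma}=\gamma p\,s^{\zeta-1}\abs{x}^2\cdot\mathrm{const}\,(\vartheta-2\zeta-2\gamma p\cdot\mathrm{const})+\dots$. Choosing $2\epsilon<1$ and $p$ small enough that all leading coefficients stay positive, the map $\gamma\mapsto$ (leading coefficient) is concave quadratic in $\gamma$, so doubling the small exponent $\epsilon$ stays below the value at exponent $1$. This mirrors the choice $2\varepsilon(1-\varepsilon)<\eta(1-\eta)$ in Proposition~\ref{prop:unconstrained-ldp-sufficient}, and it closes \eqref{eq:witten-auxiliary}. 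Theorem~\ref{thm:unconstrained-ldp} then applies with $\kappa=1$.
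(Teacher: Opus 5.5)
Your proposal is correct and follows essentially the paper's route. The paper likewise gets the drift bound with $r=2\zeta-2$, tangency, the weighted Poincar\'e inequality and the principal coercive estimate from Proposition~\ref{prop:q-heavy-tail}. It then closes \eqref{eq:witten-auxiliary} with the power weights $\cW=e^{\eta V}=e^{\eta}s^{\eta}$ and $\overline{\cW}=e^{\varepsilon V}$, where $V=1+\log s$ and $0<\varepsilon<\eta/2$ is small. That is your choice with $1+\abs{x}^2$ replaced by $s$; the paper invokes the nonlinear criterion of Ferr\'e--Stoltz, whereas you compute by hand.

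The one thing to repair is your justification in the last step. For $w=s^\gamma$ one has $-\calL_\alpha w/w=f(\gamma)s^{\zeta-1}+O(s^{\zeta-2})$ with $f(\gamma)=\frac{2\gamma}{\vartheta}(\vartheta+2-2\zeta-2\gamma)$. This is of order $s^{\zeta-1}$, not $s^{\zeta-1}\abs{x}^2$, and the bracket contains a $+2$. Concavity with $f(0)=0$ only gives $f(\epsilon p)\geq\epsilon f(p)$, which is the wrong direction. The inequality you actually need, $2f(\epsilon p)<f(p)$, holds instead because $2f(\epsilon p)/f(p)=2\epsilon(\vartheta+2-2\zeta-2\epsilon p)/(\vartheta+2-2\zeta-2p)\to2\epsilon<1$ as $p\to0$. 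So your stated choice, $2\epsilon<1$ and $p$ small, does work; the remaining compact region is absorbed into $C_2$.
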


\begin{proof}
The identity $\zeta(\beta+1)=(d+\vartheta)/2$ proves
\eqref{eq:student-zeta-target}.  A direct calculation gives
\begin{equation}\label{eq:student-q-drift}
    dq_\zeta-\beta x\cdot\nabla q_\zeta
    =s^{\zeta-1}
      \left[(2\zeta-\vartheta)s+d+\vartheta-2\zeta\right].
\end{equation}
Since $2\zeta-\vartheta<0$, this is bounded above by
$-c_0\abs{x}^{2\zeta}+c_1$.  Hence
\eqref{eq:q-heavy-tail-drift} holds with $r=2\zeta-2>0$.  Moreover,
$\nabla q_\zeta$ is parallel to $x$, so \eqref{eq:q-tangency} holds for every
skew-symmetric $J$.  Proposition~\ref{prop:q-heavy-tail} gives
\eqref{eq:q-weighted-poincare} and the principal Lyapunov estimate.

It remains to verify all parts of Assumption~\ref{ass:unconstrained-ldp}.  The drift and
diffusion coefficients in \eqref{eq:student-zeta-sde} are smooth and have polynomially
growing derivatives.  Since $s^{\zeta/2}>0$, ellipticity gives the H\"ormander condition;
invertibility of the diffusion coefficient gives controllability by following any smooth
path between two prescribed points.  To verify
\eqref{eq:witten-lyapunov}--\eqref{eq:witten-auxiliary} in
Assumption~\ref{ass:unconstrained-ldp}, set
\begin{equation*}
    V(x)=1+\log s(x).
\end{equation*}
Then $V$ and $\abs{\sqrt{2q_\zeta}\nabla V}$ have compact level sets because
\begin{equation}\label{eq:student-gradient-V}
    q_\zeta\abs{\nabla V}^2
    =\frac{4}{\vartheta}s^{\zeta-2}(s-1)
    \asymp s^{\zeta-1}.
\end{equation}
Moreover $\cA V=0$, and, for every $\delta\in(0,1)$,
\begin{align}
    -\calL_\alpha V-\delta q_\zeta\abs{\nabla V}^2
    &=q_\zeta\left[(\beta\zeta-\delta)\abs{\nabla V}^2
                    -\Delta V\right]\notag\\
    &=\frac{s^{\zeta-2}}{\vartheta}
      \left[4(\beta\zeta+1-\delta)(s-1)-2d s\right]
      \asymp s^{\zeta-1}.
      \label{eq:student-nonlinear-lyapunov}
\end{align}
Indeed, the coefficient of $s^{\zeta-1}$ in
\eqref{eq:student-nonlinear-lyapunov}
is
\begin{equation*}
    4(\beta\zeta+1-\delta)-2d
    =2(\vartheta-2\zeta+2-2\delta)>0.
\end{equation*}
The nonlinear criterion of \cite[Proposition~2.9]{ferre2020large}, applied with
$\cW=e^{\eta V}$ and $\overline{\cW}=e^{\varepsilon V}$ for sufficiently small
$0<\varepsilon<\eta/2$, now gives \eqref{eq:witten-lyapunov}--
\eqref{eq:witten-auxiliary}, with $\Psi_\alpha\asymp s^{\zeta-1}$.  The bounded choice
$\kappa=1$ falls under the bounded-weight alternative.  Thus all parts of
Assumption~\ref{ass:unconstrained-ldp} hold, and
Theorem~\ref{thm:unconstrained-ldp} completes the proof.
\end{proof}

For the superquadratic Student clocks, the remaining input for the variance
theorem is a strong sector estimate.  A pointwise coefficient bound is
unavailable, but the current is purely angular, so the spherical Poincar\'e
inequality supplies the required form bound.

\begin{proposition}[Strong sector bound for Student clocks]
\label{prop:student-sector-superquadratic}
Under the hypotheses of Corollary~\ref{cor:student-ldp}, and with $d\geq2$, let
\begin{equation*}
    \cE_\zeta(f,f)
    :=\int_{\R^d}q_\zeta\abs{\nabla f}^2d\pi_\vartheta.
\end{equation*}
Then, for all $f,h\in C_c^\infty(\R^d)$,
\begin{equation}\label{eq:student-strong-sector}
    \abs{\ip{\cA f}{h}_{L^2(\pi_\vartheta)}}
    \leq
    \frac{2\beta\zeta\norm{J}_{\rm op}}{\sqrt{d-1}}
    \left(\cE_\zeta(f,f)\right)^{1/2}\left(\cE_\zeta(h,h)\right)^{1/2}.
\end{equation}
Consequently, the canonical form realization of
\begin{equation*}
    \calG=\calK^{-1/2}\cA\calK^{-1/2}
\end{equation*}
has a unique bounded skew-adjoint extension to $L^2_0(\pi_\vartheta)$, with the operator
norm satisfying
\begin{equation*}
    \norm{\calG}_{\rm op}
    \leq\frac{2\beta\zeta\norm{J}_{\rm op}}{\sqrt{d-1}}.
\end{equation*}
\end{proposition}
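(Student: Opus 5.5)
The plan is to exploit the fact that the canonical current is purely angular. By Corollary~\ref{cor:student-ldp},
\[
\cA f=-\frac{2\beta\zeta}{\vartheta}\,s^{\zeta-1}\,(Jx)\cdot\nabla f .
\]
Since $x\cdot Jx=0$, the vector $Jx$ is tangent to the sphere $\{\abs{y}=\abs{x}\}$. Hence only the tangential gradient $\nabla_\tau f$ enters, and
\[
\abs{(Jx)\cdot\nabla f}\le\norm{J}_{\rm op}\abs{x}\abs{\nabla_\tau f}.
\]
The pointwise coefficient $s^{\zeta-1}\abs{x}$ is too large to be absorbed by $q_\zeta=s^{\zeta}$ against an $L^2$ norm of $h$. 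So instead of pairing with $h$ directly, I will pair with the angular fluctuation of $h$ and gain a factor $\abs{x}$ from the spherical Poincar\'e inequality.

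\textbf{Step 1 (removing the radial part of $h$).} Let $\bar h(x)$ be the average of $h$ over the sphere of radius $\abs{x}$; it is radial and smooth away from the origin. Since $\bar h$ is radial, $\cA\bar h=0$. Using the anti-symmetry \eqref{eq:adjoints} (after a routine mollification near $0$),
\[
\ip{\cA f}{\bar h}_{L^2(\pi_\vartheta)}=-\ip{f}{\cA\bar h}_{L^2(\pi_\vartheta)}=0,
\qquad\text{hence}\qquad
\ip{\cA f}{h}_{L^2(\pi_\vartheta)}=\ip{\cA f}{h-\bar h}_{L^2(\pi_\vartheta)}.
\]

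\textbf{Step 2 (weighted Cauchy--Schwarz).} Split the weight as $s^{\zeta-1}\abs{x}=s^{\zeta/2}\cdot s^{\zeta/2-1}\abs{x}$ and apply Cauchy--Schwarz:
\[
\abs{\ip{\cA f}{h}}\le\frac{2\beta\zeta\norm{J}_{\rm op}}{\vartheta}\,\cE_\zeta(f,f)^{1/2}\left(\int s^{\zeta-2}\abs{x}^2\abs{h-\bar h}^2\,d\pi_\vartheta\right)^{1/2}.
\]
Here the first factor uses $\abs{\nabla_\tau f}\le\abs{\nabla f}$.

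\textbf{Step 3 (spherical Poincar\'e).} Both $\pi_\vartheta$ and $s$ are radial, so I can integrate in polar coordinates and apply, on each sphere $S_r$, the Poincar\'e inequality with first nonzero eigenvalue $(d-1)/r^2$ (this needs $d\ge2$):
\[
\int_{S_r}\abs{h-\bar h}^2\le\frac{r^2}{d-1}\int_{S_r}\abs{\nabla_\tau h}^2 .
\]
This bounds the remaining integral by
\[
\frac{1}{d-1}\int s^{\zeta-2}\abs{x}^4\abs{\nabla h}^2\,d\pi_\vartheta .
\]
Finally, $\abs{x}^2/\vartheta=s-1\le s$ gives $\abs{x}^4/\vartheta^2\le s^2$, so that
\[
\frac{1}{\vartheta^2}\,s^{\zeta-2}\abs{x}^4\le s^{\zeta}=q_\zeta .
\]
This yields exactly the constant $2\beta\zeta\norm{J}_{\rm op}/\sqrt{d-1}$ in \eqref{eq:student-strong-sector}.

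\textbf{Step 4 (the operator $\calG$).} The sector bound says that the form $(u,w)\mapsto\ip{\cA\calK^{-1/2}u}{\calK^{-1/2}w}$ is bounded by $C\norm{u}\norm{w}$ on $\calK^{1/2}(C_c^\infty)$. This set is dense in $L^2_0(\pi_\vartheta)$ provided $C_c^\infty$ is a form core for $\cE_\zeta$, which I take from the weighted Poincar\'e inequality \eqref{eq:q-weighted-poincare} and essential self-adjointness of $\cS$ under the polynomial growth of the coefficients. The form therefore extends uniquely to a bounded operator $\calG$ with the stated norm bound. Skew-adjointness passes to the closure from the anti-symmetry on the core.

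I expect the main obstacle to be this density/core issue and the justification of Step 1 near the origin, where $\bar h$ is not smooth. I would handle the origin by cutting off a small ball $B_\epsilon$, noting that its contribution vanishes as $\epsilon\to0$ because $\abs{x}\to0$ there.
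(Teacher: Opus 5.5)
Your argument is correct and is essentially the paper's proof. The current is purely angular, the spherical mean of $h$ drops out, the sharp Poincar\'e inequality on spheres gives the factor $1/\sqrt{d-1}$, and $\abs{x}^2/\vartheta\le s$ turns the angular energies into $\cE_\zeta$ with the same constant. The remaining differences are cosmetic. You remove $\bar h$ through $\cA\bar h=0$ and global anti-symmetry, where the paper uses the sphere-wise identity $\int_{\mathbb S^{d-1}}D_Jf\,d\omega=0$, and you split the weight asymmetrically. Also, $\bar h(x)=H(\abs{x})$ with $H$ even and smooth, so no mollification at the origin is needed.

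One correction to Step 4: essential self-adjointness of $\cS$ on $C_c^\infty$ is not needed, and it can fail here. For $\vartheta>4$ and $1+\vartheta/4<\zeta<\vartheta/2$, the radial part is limit-circle at infinity, since $\int^\infty r^{3+\vartheta-4\zeta}\,dr<\infty$. The property you actually use is that $C_c^\infty$ is a form core. That holds because the form is the closure of $\cE_\zeta$ on $C_c^\infty$ and infinity has zero capacity, $\int^\infty dr/(r^{d-1}p_\vartheta(r)s^\zeta)=\infty$. This is all the paper relies on as well.
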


\begin{proof}
Write $x=r\theta$ with $r>0$ and $\theta\in\mathbb S^{d-1}$, and let $d\omega$ be
the surface measure on $\mathbb S^{d-1}$.  Since $J\theta$ is tangent to the sphere,
\begin{equation}\label{eq:J-angular-derivative}
    Jx\cdot\nabla f(r\theta)=J\theta\cdot\nabla_{\mathbb S}f(r\theta)
    =:D_Jf(r,\theta).
\end{equation}
The vector field $J\theta$ is divergence-free on the sphere: indeed,
$\operatorname{div}_{\mathbb S}(J\theta)
=\operatorname{tr}J-\theta^\top J\theta=0$.  Hence
\begin{equation*}
    \int_{\mathbb S^{d-1}}D_Jf(r,\theta)\,d\omega(\theta)=0.
\end{equation*}
If $\overline h(r)$ denotes the spherical mean of $h(r,\cdot)$, the sharp Poincar\'e
inequality on $\mathbb S^{d-1}$ is
\begin{equation}\label{eq:sphere-poincare}
    \int_{\mathbb S^{d-1}}\abs{h-\overline h}^2d\omega
    \leq\frac{1}{d-1}
    \int_{\mathbb S^{d-1}}\abs{\nabla_{\mathbb S}h}^2d\omega.
\end{equation}
Here $d-1$ is the first nonzero eigenvalue of
$-\Delta_{\mathbb S^{d-1}}$; see, for example, \cite{Bakry2013}.
Combining \eqref{eq:sphere-poincare} with
$\abs{D_Jf}\leq\norm{J}_{\rm op}\abs{\nabla_{\mathbb S}f}$ gives
\begin{align}
    \abs{\int_{\mathbb S^{d-1}}D_Jf\,h\,d\omega}
    =\abs{\int_{\mathbb S^{d-1}}D_Jf\,(h-\overline h)\,d\omega}
    \leq\frac{\norm{J}_{\rm op}}{\sqrt{d-1}}
      \norm{\nabla_{\mathbb S}f}_{L^2(d\omega)}
      \norm{\nabla_{\mathbb S}h}_{L^2(d\omega)}.
      \label{eq:spherical-sector}
\end{align}

Let $p_\vartheta(r)$ denote the radial density factor of $\pi_\vartheta$, so that
$d\pi_\vartheta=p_\vartheta(r)r^{d-1}drd\omega$.  Since
\begin{equation*}
    \cA=-\frac{2\beta\zeta}{\vartheta}s^{\zeta-1}D_J,
\end{equation*}
and
\begin{equation}\label{eq:radial-weight-comparison}
    s^{\zeta-1}
    =\frac{r^2}{s}\frac{s^\zeta}{r^2}
    \leq\vartheta\frac{s^\zeta}{r^2},
\end{equation}
integration of \eqref{eq:spherical-sector} in $r$, followed by Cauchy--Schwarz inequality, yields
\begin{align*}
    \abs{\ip{\cA f}{h}_{L^2(\pi_\vartheta)}}
    &\leq\frac{2\beta\zeta\norm{J}_{\rm op}}{\sqrt{d-1}}
      \left(\int_0^\infty p_\vartheta r^{d-1}
        \frac{s^\zeta}{r^2}
        \norm{\nabla_{\mathbb S}f}_{L^2(d\omega)}^2dr\right)^{1/2}\\
    &\qquad\qquad\qquad\qquad\cdot
      \left(\int_0^\infty p_\vartheta r^{d-1}
        \frac{s^\zeta}{r^2}
        \norm{\nabla_{\mathbb S}h}_{L^2(d\omega)}^2dr\right)^{1/2}\\
    &\leq\frac{2\beta\zeta\norm{J}_{\rm op}}{\sqrt{d-1}}
      \left(\cE_\zeta(f,f)\right)^{1/2}\left(\cE_\zeta(h,h)\right)^{1/2}.
\end{align*}
This proves \eqref{eq:student-strong-sector}.

By the weighted Poincar\'e inequality \eqref{eq:q-weighted-poincare},
$\calK^{-1/2}$ is bounded from
$L^2_0(\pi_\vartheta)$ into the form domain and
\[
\cE_\zeta\left(\calK^{-1/2}u,\calK^{-1/2}u\right)=\norm{u}_{L^2(\pi_\vartheta)}^2.
\]
The anti-symmetric form $(f,h)\mapsto\ip{\cA f}{h}$ therefore extends continuously to
the full form domain.  Pulling it back by $\calK^{-1/2}$ defines $\calG$ as a bounded
operator on all of $L^2_0(\pi_\vartheta)$.  Integration by parts on the compactly
supported core gives
$\ip{\cA f}{h}=-\ip{f}{\cA h}$; density and continuity then give
$\calG^*=-\calG$ after extension.  Uniqueness follows from density of the form core.
\end{proof}

\begin{remark}
For $\zeta>1$, the usual sufficient estimate based on $\abs{c}^2/q_\zeta$ is
unavailable, since
\begin{equation*}
    \frac{\abs{c(x)}^2}{q_\zeta(x)}
    =\left(\frac{2\beta\zeta}{\vartheta}\right)^2
      (s(x))^{\zeta-2}\abs{Jx}^2
    \asymp \abs{x}^{2\zeta-2},
\end{equation*}
along directions not in $\ker J$.  Proposition~\ref{prop:student-sector-superquadratic}
works because the current is purely angular: its spherical mean vanishes, and the
spherical spectral gap controls the remaining angular transport by the angular part of
the Dirichlet energy.  Thus the form-sector estimate remains uniform even though the
corresponding pointwise coefficient is unbounded.
\end{remark}

Combining this angular sector estimate with the weighted Poincar\'e inequality
verifies all operator hypotheses of Theorem~\ref{thm:variance-reduction}.
This yields variance reduction for the same superquadratic Student model for
which the weighted-topology LDP was proved above.

\begin{corollary}[Variance reduction for superquadratic Student clocks]
\label{cor:student-superquadratic-variance}
Under Corollary~\ref{cor:student-ldp}, with $d\geq2$, the stationary diffusion
\eqref{eq:student-zeta-sde} satisfies the central limit theorem and
Theorem~\ref{thm:variance-reduction} for every $g\in L^2_0(\pi_\vartheta)$.  Hence
\begin{equation}\label{eq:student-superquadratic-variance-order}
    \sigma^2_{g,\alpha}\leq\sigma^2_{g,0}.
\end{equation}
If $\alpha\neq0$ and $g_u(x)=u^\top x$ with $Ju\neq0$, then the inequality in
\eqref{eq:student-superquadratic-variance-order} is strict.
\end{corollary}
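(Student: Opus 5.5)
The plan is to verify the three hypotheses of Theorem~\ref{thm:variance-reduction}, read off \eqref{eq:student-superquadratic-variance-order}, and then handle strictness separately for linear observables.

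\textbf{Step 1: operator hypotheses.} Corollary~\ref{cor:student-ldp} already gives the weighted Poincar\'e inequality \eqref{eq:q-weighted-poincare} for $\pi_\vartheta$ with $a=q_\zeta$. So $\calK=-\cS$ is coercive on $L^2_0(\pi_\vartheta)$ and has a bounded inverse. Proposition~\ref{prop:student-sector-superquadratic} then gives the bounded skew-adjoint extension of $\calG$. Writing $-\calL_\alpha=\calK^{1/2}(I-\alpha\calG)\calK^{1/2}$, with $I-\alpha\calG$ invertible because $\calG$ is skew-adjoint, every $g\in L^2_0$ has a unique weak Poisson solution
\[
u=\calK^{-1/2}(I-\alpha\calG)^{-1}\calK^{-1/2}g
\]
in the form domain, and $\int q_\zeta|\nabla u|^2d\pi_\vartheta=\ip{g}{u}$.

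\textbf{Step 2: the CLT.} I would mimic the martingale argument of Corollary~\ref{cor:compact-variance}, without a boundary term. This is the main obstacle, because $u$ is only a form-domain function on an unbounded space, so It\^o's formula is not directly available. I would first take $g$ smooth enough (e.g.\ $C_c^\infty$ and centered) that elliptic regularity makes $u\in C^2$. For general $g$ I would approximate and use the $L^2$ contraction bound: stationarity gives $\Var(\int_0^t(g-g_n)(X_s)\,ds)\le 2t\,\ip{g-g_n}{(-\calL_\alpha)^{-1}(g-g_n)}$ up to constants, which is controlled by $\norm{\calK^{-1/2}(g-g_n)}^2$ and hence by $\norm{g-g_n}^2$. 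The endpoint terms $u(X_0)-u(X_t)$ are $O(1)$ in $L^2(\pi)$ under stationarity. For the quadratic variation I would use the ergodic theorem, with ergodicity coming from uniqueness of $\pi_\vartheta$ in Proposition~\ref{prop:q-heavy-tail}, together with $q_\zeta|\nabla u|^2\in L^1(\pi_\vartheta)$. Then \citep[Theorem~7.1.4]{ethier1986markov} gives \eqref{eq:clt}--\eqref{eq:green-kubo}, and Theorem~\ref{thm:variance-reduction} yields the inequality.

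\textbf{Step 3: strictness.} The goal is to show $\cA\calK^{-1}g_u\ne0$. I would solve the reversible Poisson equation explicitly with the ansatz $\phi=\gamma\,u^\top x$. Since $q_\zeta\Delta\phi=0$ and $\nabla q_\zeta=(2\zeta/\vartheta)s^{\zeta-1}x$,
\[
\cS\phi=-\beta\nabla q_\zeta\cdot\gamma u=-\frac{2\beta\zeta\gamma}{\vartheta}s^{\zeta-1}u^\top x.
\]
This is not proportional to $u^\top x$ unless $\zeta=1$, so the linear ansatz fails. I would therefore try $\phi=\rho(|x|)\,u^\top x/|x|$, which reduces the problem to a radial ODE with a solution $\rho\not\equiv0$. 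Then
\[
\cA\phi=-\frac{2\beta\zeta}{\vartheta}s^{\zeta-1}\frac{\rho(r)}{r}\,(Jx)\cdot u .
\]
This is nonzero because $(Jx)\cdot u=-x^\top Ju$ is not identically zero when $Ju\neq0$. If the radial ODE proves hard, the fallback is to avoid it: I would show directly that $\calK^{-1}g_u$ lies in the degree-one spherical harmonic sector, i.e.\ has the form $\rho(r)\,\theta\cdot u$, by rotation covariance (rotations about $u$ commute with $\cS$). Then $D_J$ applied to it is $\rho(r)\,(J\theta)\cdot u\neq0$. Either route gives \eqref{eq:strict-variance}.

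Note that $g_u\in L^2(\pi_\vartheta)$ requires $\vartheta>2$, which holds. It also needs $g_u\in L^2_0$, which holds by symmetry.
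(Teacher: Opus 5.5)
Your proposal is correct and follows essentially the paper's route: the weighted Poincar\'e inequality and the angular sector bound of Proposition~\ref{prop:student-sector-superquadratic} supply the hypotheses of Theorem~\ref{thm:variance-reduction} and the Poisson-equation martingale CLT, which you spell out via smooth approximation where the paper simply cites the standard argument. Strictness comes from rotational symmetry forcing $\calK^{-1}g_u=\varphi(\abs{x})\,u^\top x$, which is exactly your fallback, so the radial ODE is unnecessary; there are two small tightenings: rotations fixing $u$ alone only make the solution zonal, so you need full $O(d)$-invariance of $\cS$ (or its polar form) to land in the degree-one sector, and since only a form-level sector bound is available for $\zeta>1$ you should not invoke the equivalence requiring $\calK^{-1}g_u\in\calD(\cA)$ unchecked but, as the paper does, verify that $\cA\calK^{-1}g_u$ is nonzero as a form functional by pairing it with a compactly supported test function on an annulus where $\varphi\neq0$.
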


\begin{proof}
The weighted Poincar\'e inequality follows from
Proposition~\ref{prop:q-heavy-tail}(i), while
Proposition~\ref{prop:student-sector-superquadratic} supplies the strong sector
condition \eqref{eq:student-strong-sector} and skew-adjointness of $\calG$.
Together with \eqref{eq:q-weighted-poincare}, this bound controls the
anti-symmetric form by the symmetric Dirichlet-energy norm and gives the
Poisson-equation martingale approximation \citep{duncan2016variance}.
The approximation therefore gives the stationary central limit
theorem, and
Theorem~\ref{thm:variance-reduction} proves \eqref{eq:student-superquadratic-variance-order}.

For strictness, rotational invariance of $\calK$ and uniqueness of its inverse imply that
\begin{equation}\label{eq:student-linear-poisson}
    \calK^{-1}g_u(x)=\varphi(\abs{x})u^\top x
\end{equation}
for a nonzero radial function $\varphi$.  Since $\langle Jx,x\rangle=0$,
\begin{equation}\label{eq:student-angular-action}
    \cA\calK^{-1}g_u(x)
    =-\frac{2\beta\zeta}{\vartheta}(s(x))^{\zeta-1}
      \varphi(\abs{x})u^\top Jx,
\end{equation}
which is nonzero as a form functional whenever $Ju\neq0$.  Indeed, by
\eqref{eq:student-angular-action}, one may choose a compact radial annulus
on which $\varphi$ is nonzero on a set of positive radial measure and
approximate the right-hand side of \eqref{eq:student-angular-action} by
compactly supported smooth test functions.
Hence $\calG\calK^{-1/2}g_u\neq0$, and the strictness criterion in
Theorem~\ref{thm:variance-reduction} completes the proof.
\end{proof}

\begin{remark}\label{rem:student-borderline}
The original heavy-tail theorem \citep[Theorems~2--3]{gurbuzbalaban2025anchored}
proves invariance for $r>-1$ and exponential ergodicity for $r\geq0$.  These statements
do not by themselves prove the weighted-topology empirical-measure LDP used here.  At
the quadratic-clock endpoint $\zeta=1$, one has $r=0$ and
$-\calL_\alpha\cW/\cW$ remains of constant order at infinity, rather than tending to
infinity.  Thus Proposition~\ref{prop:q-heavy-tail}(i) still supplies the coercivity
needed for the asymptotic-variance comparison, but our Witten--Lyapunov proof of the LDP
requires the strict choice $\zeta>1$.  For $\zeta>1$, Corollaries
\ref{cor:student-ldp} and \ref{cor:student-superquadratic-variance} give both notions of
acceleration for the same Student-$t$ target.  The quadratic-clock example below remains
useful because its linear-observable variance can be evaluated in closed form.
\end{remark}

\subsection{A polynomial-tailed example with explicit variance}

We give an example in which the anchored diffusion has a heavy-tailed invariant measure
but the nonreversible drift and the asymptotic variance of linear observables are explicit.
Let $d\geq2$, let $\beta>d/2$, and set
\begin{equation}\label{eq:student-potentials}
    q(x)=1+\abs{x}^2,
    \qquad
    U_0(x)=\beta\log q(x),
    \qquad
    U(x)=(\beta+1)\log q(x),
\end{equation}
where $J\in\R^{d\times d}$ is any fixed skew-symmetric matrix.
Then $a=q$ and the canonical stream potential $\psi=e^{-U_0}$ gives
\begin{equation}\label{eq:student-sde}
    dX_t=-2\beta(I_{d}+\alpha J)X_t\,dt
          +\sqrt{2\left(1+\abs{X_t}^2\right)}\,dW_t.
\end{equation}
Its invariant density is
\begin{equation}\label{eq:student-density}
    \pi_{\beta,d}(x)\propto\left(1+\abs{x}^2\right)^{-(\beta+1)}.
\end{equation}
Since
\[
    dq-\beta x\cdot\nabla q
    =d-(2\beta-d)\abs{x}^2,
\]
condition \eqref{eq:q-heavy-tail-drift} holds with $r=0$, so
Proposition~\ref{prop:q-heavy-tail}(i) gives the weighted Poincar\'e inequality.  The
sector condition is also automatic in this example.  Indeed,
\begin{equation}\label{eq:student-sector}
    \norm{\cA f}_{L^2(\pi_{\beta,d})}^2
    =4\beta^2\int_{\R^d}\abs{Jx\cdot\nabla f}^2d\pi_{\beta,d}
    \leq4\beta^2\norm{J}_{\rm op}^2
       \int_{\R^d}\left(1+\abs{x}^2\right)
                    \abs{\nabla f}^2d\pi_{\beta,d}.
\end{equation}
Combining \eqref{eq:student-sector} with the weighted Poincar\'e inequality yields
\eqref{eq:sector-condition}.

For the quadratic clock, the drift closes on linear observables.  The
conditional mean and Green--Kubo integral can therefore be evaluated
explicitly, giving a closed-form version of the general variance comparison.

\begin{proposition}[Exact reduction for linear observables]\label{prop:student-variance}
For $g_u(x)=u^\top x$, $u\in\R^d$,
\begin{equation}\label{eq:student-exact-variance}
    \sigma^2_{u,\alpha}
    =\frac{1}{\beta(2\beta-d)}
      u^\top\left(I_{d}-\alpha^2J^2\right)^{-1}u
    \leq\frac{\abs{u}^2}{\beta(2\beta-d)}
    =\sigma^2_{u,0}.
\end{equation}
For $\alpha\neq0$, the inequality is strict if and only if $Ju\neq0$.
If $d$ is even and $J$ is an orthogonal complex structure, so that
$J^2=-I_{d}$, then
\begin{equation}\label{eq:student-even-dimensional-ratio}
    \frac{\sigma^2_{u,\alpha}}{\sigma^2_{u,0}}
    =\frac1{1+\alpha^2}.
\end{equation}
In particular, such a $J$ is obtained by taking a block diagonal matrix with
$d/2$ copies of
\[
    J_2=\begin{pmatrix}0&-1\\1&0\end{pmatrix}.
\]
\end{proposition}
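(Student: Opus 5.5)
The plan is to exploit the fact that, for the quadratic clock, the generator $\calL_\alpha$ of \eqref{eq:student-sde} maps linear functions to linear functions. This lets us solve the Poisson equation in closed form and evaluate the Green--Kubo formula \eqref{eq:green-kubo} explicitly.

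For $g_w(x)=w^\top x$ we have $\nabla g_w=w$ and $\Delta g_w=0$, so
\[
\calL_\alpha g_w(x)=-2\beta\,((I_{d}+\alpha J)x)\cdot w=-2\beta\, x^\top(I_{d}-\alpha J)w ,
\]
using $J^\top=-J$. Hence $-\calL_\alpha g_w=g_u$ whenever $w=(2\beta)^{-1}(I_{d}-\alpha J)^{-1}u$. The matrix $I_{d}-\alpha J$ is invertible because $J$ has purely imaginary spectrum. This $g_w$ is mean zero by the symmetry $x\mapsto -x$ of $\pi_{\beta,d}$.

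Next I justify that $g_w$ is the Poisson solution in the form domain used in Theorem~\ref{thm:variance-reduction}. Its energy $\int q\abs{w}^2\,d\pi_{\beta,d}$ is finite exactly when $\E_{\pi}\abs{X}^2<\infty$. Since the density decays like $\abs{x}^{-2\beta-2}$, this holds precisely because $\beta>d/2$. The weighted Poincar\'e inequality and the sector bound \eqref{eq:student-sector} were already verified above the statement, so uniqueness of the weak Poisson solution in $L^2_0$ and the CLT \eqref{eq:clt} apply. Therefore
\[
\sigma^2_{u,\alpha}=2\ip{g_u}{g_w}_{L^2(\pi)}=2\,u^\top\Sigma w, \qquad \Sigma:=\E_\pi[XX^\top].
\]

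To compute $\Sigma$, rotational invariance of $\pi_{\beta,d}$ gives $\Sigma=d^{-1}\E_\pi\abs{X}^2\, I_{d}$. I then apply invariance to $f(x)=\abs{x}^2$:
\[
\calL_\alpha\abs{x}^2=2d(1+\abs{x}^2)-4\beta\abs{x}^2 .
\]
The $J$-term drops out since $x^\top Jx=0$. Setting $\int\calL_\alpha f\,d\pi=0$ yields $\E_\pi\abs{X}^2=d/(2\beta-d)$, hence $\Sigma=(2\beta-d)^{-1}I_{d}$. The one technical point, which I regard as the main obstacle, is justifying $\int\calL_\alpha f\,d\pi=0$ for this unbounded $f$. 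I would do it with a radial cutoff $f\chi_R$ and dominated convergence. The error terms are controlled by $\abs{x}^2\pi$ and $q\abs{x}\,\abs{\nabla\chi_R}\pi$, which are integrable because $\beta>d/2$. Alternatively, this step can be handled by the same closure argument as in Theorem~\ref{thm:invariance}.

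Combining these gives
\[
\sigma^2_{u,\alpha}=\frac{1}{\beta(2\beta-d)}\,u^\top(I_{d}-\alpha J)^{-1}u .
\]
Only the symmetric part of the matrix contributes to the quadratic form. Since
\[
(I_{d}-\alpha J)^{-1}+(I_{d}+\alpha J)^{-1}=2(I_{d}-\alpha^2J^2)^{-1},
\]
this yields \eqref{eq:student-exact-variance}. Setting $\alpha=0$ gives $\sigma^2_{u,0}=\abs{u}^2/(\beta(2\beta-d))$.

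The inequality follows because $-J^2=J^\top J\succeq0$, so $(I_{d}-\alpha^2J^2)^{-1}\preceq I_{d}$. Diagonalizing the symmetric matrix $J^\top J$, equality $u^\top(I_{d}-\alpha^2J^2)^{-1}u=\abs{u}^2$ for $\alpha\neq0$ holds iff $u\in\ker J^\top J=\ker J$, which is the strictness criterion. Finally, if $J^2=-I_{d}$ then $(I_{d}-\alpha^2J^2)^{-1}=(1+\alpha^2)^{-1}I_{d}$, which gives \eqref{eq:student-even-dimensional-ratio}. The block-diagonal matrix built from copies of $J_2$ is skew-symmetric and squares to $-I_{d}$, so it is such a $J$.
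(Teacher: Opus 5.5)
Your proof is correct and follows essentially the same route as the paper: both use that $\calL_\alpha$ maps linear functions to linear functions, evaluate the Green--Kubo formula to obtain $\frac{1}{\beta(2\beta-d)}u^\top(I_{d}-\alpha J)^{-1}u$, and then symmetrize using $J^\top=-J$. The differences are cosmetic. You solve $-\calL_\alpha g_w=g_u$ directly instead of integrating the conditional mean $e^{-2\beta(I_{d}+\alpha J)t}x$ in time, and you obtain $\E_\pi\abs{X}^2=d/(2\beta-d)$ from $\int\calL_\alpha\abs{x}^2\,d\pi=0$ rather than from a beta integral; your form-domain and cutoff justifications also supply details the paper leaves implicit.
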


\begin{proof}
Rotational symmetry and a beta-integral give
\begin{equation}\label{eq:student-covariance}
    \int_{\R^d}xx^\top\pi_{\beta,d}(dx)=\frac1{2\beta-d}I_d.
\end{equation}
The conditional mean of \eqref{eq:student-sde} is
\begin{equation}\label{eq:student-conditional-mean}
    \E[X_t\mid X_0=x]=e^{-2\beta(I_{d}+\alpha J)t}x.
\end{equation}
Using \eqref{eq:student-covariance}--\eqref{eq:student-conditional-mean}
in the Green--Kubo formula
\[
    \sigma^2_{g,\alpha}
    =2\int_0^\infty\ip{g}{P_tg}_{L^2(\pi_{\beta,d})}\,dt
    =2\ip{g}{(-\calL_\alpha)^{-1}g}_{L^2(\pi_{\beta,d})},
\]
which is \eqref{eq:green-kubo} (see also
\cite{duncan2016variance,pavliotis2014stochastic}),
we obtain
\begin{align*}
    \sigma^2_{u,\alpha}
    &=2\int_0^\infty
      \E_{\pi_{\beta,d}}[g_u(X_0)g_u(X_t)]\,dt\\
    &=\frac1{\beta(2\beta-d)}
      u^\top(I_{d}-\alpha J)^{-1}u
      =\frac1{\beta(2\beta-d)}
      u^\top\left(I_{d}-\alpha^2J^2\right)^{-1}u.
\end{align*}
To justify the last equality, note that $J^\top=-J$ and hence
\[
    \left[(I_{d}-\alpha J)^{-1}\right]^\top
    =(I_{d}+\alpha J)^{-1}.
\]
The anti-symmetric part of a real matrix does not contribute to its quadratic
form.  
Therefore,
\begin{align*}
    u^\top(I_{d}-\alpha J)^{-1}u
    =
    \frac12u^\top\left[
       (I_{d}-\alpha J)^{-1}+(I_{d}+\alpha J)^{-1}
    \right]u
    =
    u^\top\left(I_{d}-\alpha^2J^2\right)^{-1}u,
\end{align*}
where the second equality follows from
\[
    (I_{d}-\alpha J)^{-1}
    =(I_{d}+\alpha J)\left(I_{d}-\alpha^2J^2\right)^{-1},
    \qquad
    (I_{d}+\alpha J)^{-1}
    =(I_{d}-\alpha J)\left(I_{d}-\alpha^2J^2\right)^{-1}.
\]
Since $-J^2$ is positive semidefinite,
$\left(I_{d}-\alpha^2J^2\right)^{-1}\preceq I_{d}$, with equality on a vector $u$ exactly
when $\alpha Ju=0$.  This proves \eqref{eq:student-exact-variance}; the
special case $J^2=-I_{d}$ gives
\eqref{eq:student-even-dimensional-ratio}.
\end{proof}

\begin{remark}
When $d$ is odd, every skew-symmetric $J$ has a nontrivial kernel.  The formula
\eqref{eq:student-exact-variance} shows that the variance is unchanged in
directions in $\ker J$ and is strictly reduced in every direction $u$ with
$Ju\neq0$.
\end{remark}

\begin{remark}
The reversible diffusion associated directly with the polynomial-tailed density
\eqref{eq:student-density} need not possess a standard spectral gap.  In contrast, the
anchored Dirichlet form contains the growing weight $a(x)=1+\abs{x}^2$, which is the
appropriate setting for a weighted Poincar\'e inequality.  This illustrates why the
coercivity condition \eqref{eq:weighted-poincare}, rather than an unweighted inequality,
is natural here.
\end{remark}

\section{Numerical Experiments}\label{sec:numerical}

The Euler-Maruyama discretization of \eqref{eq:nald} is
\begin{equation}\label{eq:algorithm-unconstrained}
    x_{k+1}=x_k-\eta a(x_k)\nabla U_0(x_k)
    +\eta\alpha e^{U(x_k)}J(x_k)\nabla\psi(x_k)
    +\sqrt{2\eta a(x_k)}\,\xi_{k+1},
\end{equation}
where $\xi_k$ are independent standard Gaussian vectors.  On a convex constrained
domain, the corresponding projected scheme is
\begin{equation}\label{eq:algorithm-constrained}
    x_{k+1}=\Pi_K\left(x_k-\eta a(x_k)\nabla U_0(x_k)
    +\eta\alpha e^{U(x_k)}J(x_k)\nabla\psi(x_k)
    +\sqrt{2\eta a(x_k)}\,\xi_{k+1}\right),
\end{equation}
where
$\Pi_{K}(x):=\arg\min_{y\in\bar{K}}\| y-x\|$.
In our numerical examples, $U$ always takes the form:
\begin{equation}\label{eq:Ufg}
U(x)=f(x)+g(x),
\end{equation}
where $f(x)$ is differentiable but $g(x)$ is not. 
We construct the anchored reference potential as
\begin{equation}\label{eq:U0anchor}
U_0(x)=f(x)+g_{0}(x),
\end{equation}
where $g_{0}(x)$ is a smoothed version of $g(x)$ such as the Gaussian smoothing as considered in \cite{gurbuzbalaban2025anchored}.

\subsection{Unconstrained sampling}

\subsubsection{A heavy-tailed target with a non-smooth regularizer}
\label{sec:heavy}

We conduct numerical experiments on a target that is both heavy-tailed and
non-smooth, built from the anchored heavy-tailed family of
Section~\ref{sec:examples} in the quadratic-clock case 
and the minimax concave penalty (MCP) regularizer (see e.g. \cite{Zhang10}). Let
\begin{equation}
\label{eq:student-target}
\pi(x)\propto \big(q(x)\big)^{-(\beta+1)}\,e^{-g(x)},\qquad
q(x):=1+\frac{1}{\nu}\,x^\top\Sigma^{-1}x,\qquad
\beta:=\frac{d+\nu}{2}-1 ,
\end{equation}
where $q^{-(\beta+1)}$ is, up to normalization, the density of the
multivariate Student-$t$ distribution in $\mathbb{R}^{d}$ with $\nu$ degrees
of freedom and scale matrix $\Sigma$, and
\begin{equation}
\label{eq:student-mcp}
g(x):=\sum_{i=1}^{d}p_\lambda(x_i),\qquad
p_\lambda(x):=\begin{cases}\lambda|x|-\dfrac{x^{2}}{2\gamma}, & |x|\le\gamma\lambda,\\[6pt]
\dfrac{\gamma\lambda^{2}}{2}, & |x|>\gamma\lambda,\end{cases}
\end{equation}
is the minimax concave penalty (MCP) with regularization parameter $\lambda>0$
and scaling parameter $\gamma>1$. Thus $U=f+g$ with
$f=(\beta+1)\log q$ smooth and $g$ non-differentiable on the coordinate
hyperplanes $\{x_i=0\}$. We take $d=3$, $\nu=6$,
$\Sigma=\operatorname{diag}(10^{-2},10^{-1},1)$, so that the Student-$t$ part
is anisotropic with condition number $\kappa=100$, and $\lambda=1$,
$\gamma=2$. 
Since $0\le g\le d\gamma\lambda^{2}/2$ is bounded, the target keeps
the polynomial tails of the Student-$t$ distribution: only the moments of
order below $\nu$ are finite, and
$\nabla f(x)=\frac{2(\beta+1)}{\nu}\,\Sigma^{-1}x/q(x)$ vanishes at infinity,
which is the regime in which the overdamped Langevin dynamics
\eqref{eq:overdamped-2} loses its restoring force and is known to converge
slowly. At the same time $\nabla U$ does not exist on $\{x_i=0\}$, where
$g$ has its cusps, so \eqref{eq:overdamped-2} cannot be used without
smoothing. 
Following Section~\ref{sec:anchored-heavy-tail} and \eqref{eq:U0anchor} we
take
\begin{equation}
\label{eq:student-anchor}
U_0=\beta\log q+g_0,\qquad U=(\beta+1)\log q+g,\qquad
a=e^{U-U_0}=q\,e^{\,g-g_0},
\end{equation}
where $g_0(x)=\sum_{i=1}^{d}p^{\varepsilon}_\lambda(x_i)$ is the
deterministic smoothing of $g$ used in \citet{gurbuzbalaban2025anchored},
\begin{equation}
\label{eq:student-mcp-eps}
p^{\varepsilon}_\lambda(t):=\begin{cases}
\lambda\sqrt{t^{2}+\varepsilon^{2}}-\dfrac{\lambda\,t^{2}}{2\sqrt{\gamma^{2}\lambda^{2}+\varepsilon^{2}}},
& |t|\le\gamma\lambda,\\[8pt]
\dfrac{\lambda\,(\gamma^{2}\lambda^{2}+2\varepsilon^{2})}{2\sqrt{\gamma^{2}\lambda^{2}+\varepsilon^{2}}},
& |t|>\gamma\lambda,\end{cases}
\end{equation}
with $\varepsilon>0$ (we take $\varepsilon=0.1$). The function $p^{\varepsilon}_\lambda$ is continuously
differentiable, with
\begin{equation}
\label{eq:student-mcp-eps-grad}
(p^{\varepsilon}_\lambda)'(t)=\begin{cases}
\lambda t\left(\dfrac{1}{\sqrt{t^{2}+\varepsilon^{2}}}-\dfrac{1}{\sqrt{\gamma^{2}\lambda^{2}+\varepsilon^{2}}}\right),
& |t|\le\gamma\lambda,\\[8pt]
0, & |t|>\gamma\lambda,\end{cases}
\end{equation}
which is bounded by $\lambda$, Lipschitz (with constant at most
$\lambda/\varepsilon$), has the sign of $t$ and vanishes for
$|t|>\gamma\lambda$, and $p^{\varepsilon}_\lambda\to p_\lambda$
uniformly as $\varepsilon\to0$. A
direct computation shows that $t\mapsto p^{\varepsilon}_\lambda(t)-p_\lambda(t)$
is positive, even, and non-increasing in $|t|$ on $[0,\gamma\lambda]$, so that
$0<p^{\varepsilon}_\lambda-p_\lambda\le p^{\varepsilon}_\lambda(0)-p_\lambda(0)=\lambda\varepsilon$,
and therefore
\begin{equation}
\label{eq:student-clock}
e^{-d\lambda\varepsilon}\,q(x)\le a(x)\le q(x),
\end{equation}
so that the anchored clock is, up to a factor in $[0.74,1]$, the quadratic
clock $a=q$ of the family of Section~\ref{sec:anchored-heavy-tail} with
$r=0$ and $\zeta=1$ and a scale matrix $\Sigma$ in place of the identity.
Since $\nabla U_0=\frac{2\beta}{\nu}\Sigma^{-1}x/q+\nabla g_0$, for a constant
skew-symmetric matrix $J$ the NRALD \eqref{eq:student-zeta-sde} reads
\begin{equation}
\label{eq:student-NALD}
dX_t=-e^{\,g(X_t)-g_0(X_t)}\,(I_d+\alpha J)\left(\frac{2\beta}{\nu}\,\Sigma^{-1}X_t
+q(X_t)\,\nabla g_0(X_t)\right)\,dt+\sqrt{2\,a(X_t)}\;dW_t ,
\end{equation}
with $\partial_i g_0(x)=(p^{\varepsilon}_\lambda)'(x_i)$.  
In the absence of the regularizer
($\lambda=0$) the two factors of $q$ cancel and the drift is linear,
$-\frac{2\beta}{\nu}(I_d+\alpha J)\Sigma^{-1}X_t$; the regularizer adds the term
$q\,\nabla g_0$, whose $i$-th component has the sign of $x_i$ and vanishes
for $|x_i|>\gamma\lambda$. 
The heavy tails of $\pi$ are
produced entirely by the state-dependent clock in the diffusion
coefficient, and only the smooth anchor $U_0$ is differentiated; the
non-smooth target $U$ enters through function evaluations in $a$.
In our experiments, we take the constant skew-symmmetric matrix $J=J_{a}$
as:
\begin{equation}\label{eq:J_a}
    J_a=\begin{pmatrix}0&0&a\\
0&0&0\\
-a&0&0\end{pmatrix}.
\end{equation}

\begin{figure}[!htbp]
    \centering
    \includegraphics[width = \linewidth]{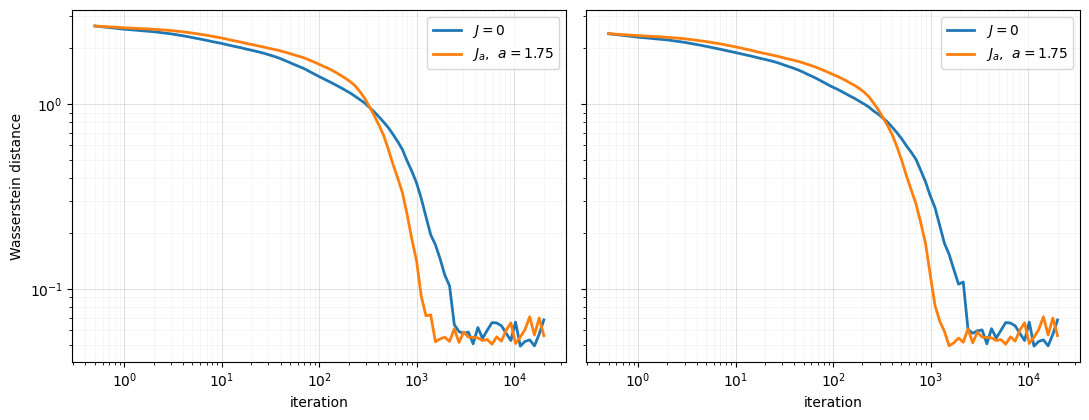}
    \caption{$2$-Wasserstein distance between the
    particle distribution and an exact sample of the target
    \eqref{eq:student-target} with the MCP regularizer \eqref{eq:student-mcp},
    as a function of the iterate on logarithmic axes, for the reversible
    anchored Langevin algorithm ($\alpha=0$) and NRALD with the constant
    skew-symmetric matrix $J_a$ ($\alpha=1$, $a=1.75$),
    averaged over $5$ replications of $N=5000$ particles, from the initial
    distributions $\mathcal N(0,10\,I_d)$ (left) and $\mathcal U(-5,5)^{d}$ (right).}
    \label{fig:heavy}
\end{figure}
Figure~\ref{fig:heavy} shows the two curves for the two initial
distributions. During the first three hundred iterates the non-reversible
chain is slightly behind, since its step size is smaller and the ramp
$\theta_k$ is still switching the skew term on. Once the skew term is
active, the non-reversible chain converges faster: starting from
$\mathcal N(0,10\,I_d)$, the $2$-Wasserstein distance falls below
$0.5$ after $568$ iterates for NRALD against $794$ for the reversible chain and at iterate $1000$ the distance is $0.12$ against
$0.37$. Starting from the uniform distribution the numbers are $508$
against $794$, and $0.13$
against $0.32$ at iterate $1000$, so the picture does not depend on the
initial distribution. The number of iterates needed to reach twice the
noise floor is therefore reduced by a factor of $1.95$ (Gaussian start) and
$1.74$ (uniform start), and the physical time $k\eta$ by a factor of $3.7$
and $3.3$ once the smaller step size of the non-reversible chain is taken
into account. Both chains then fluctuate at the same level,
$\mathrm{W}_2\approx0.057$, which is the noise level of the two-sample
estimator (its floor is $0.064$), in agreement with the fact that they
share the invariant distribution $\pi$. We stress that the gain is obtained
although the target is heavy-tailed, anisotropic and non-differentiable:
the anchored construction replaces the vanishing gradient of $f$ and the
non-existent gradient of $g$ by the gradient of the smooth anchor $U_0$,
whose leading part is linear, and the constant skew-symmetric matrix $J_a$ from ~\eqref{eq:J_a} then
accelerates this linear part in the same way as for a Gaussian target,
while the state-dependent clock $a$ restores the correct heavy-tailed,
non-smooth target.

\subsection{Constrained sampling}

\subsubsection{Truncated elliptical and \texorpdfstring{$\ell_{1}$}{l1}-Laplace distributions}\label{toy_example}

We test NRALD ~\eqref{eq:nrald} on two non-smooth targets restricted to a ball and $\ell_{1}$. Since exact samples of
these targets are available, so that the convergence of the algorithms can
be measured directly by the $\mathcal W_1$ distance to the target rather
than by a downstream accuracy. Let
\begin{equation}
\label{eq:ball-Sigma}
\Sigma:=DCD,\qquad D=\operatorname{diag}(0.5,\,1,\,2),\qquad
C_{ii}=1,\quad C_{ij}=0.6\ (i\neq j),
\end{equation}
so that $\Sigma$ has eigenvalues $0.131$, $0.595$ and $4.525$ , let $L$ be its lower-triangular Cholesky factor,
$\Sigma=LL^\top$, and write $y:=L^{-1}x$ for the whitened coordinates. The
two targets are $\pi(x)\propto e^{-U(x)}\mathbf 1_K(x)$ with
\begin{align}
&\text{Target (A)}\quad U(x)=\left\|L^{-1}x\right\|_2=\left(x^\top\Sigma^{-1}x\right)^{1/2},\label{target_a}
\\
&\text{Target (B)}\quad U(x)=\left\|L^{-1}x\right\|_1=\sum_{i=1}^{3}\left|(L^{-1}x)_i\right|,\label{target_b}
\end{align}
and
\begin{equation}
\label{eq:ball-K}
K=\left\{x\in\mathbb{R}^{3}:\|x\|_2\le R\right\},\qquad R=\frac12,
\end{equation}
i.e.\ the centered ball \eqref{eq:ball} with $r=R^{2}=\frac14$. The potential of
~\eqref{target_a} is Lipschitz and smooth except at the origin, where it has a conical
singularity, while that of ~\eqref{target_b} is Lipschitz and non-differentiable on the
three planes $\{(L^{-1}x)_i=0\}$ through the origin; in both cases
$\nabla U$ fails to exist exactly where the density is largest. The
constraint is severe: only $2.0\%$ of target (A) in \eqref{target_a} and $4.5\%$ of target (B) in \eqref{target_b} of the mass of the
unconstrained distributions lies in $K$, and the restricted targets are close
to the uniform distribution on $K$. Whatever acceleration is observed below in Figure \ref{fig:toy} is therefore due to
the interaction of the circulation with the constraint and with the
non-smooth core, not to an elongated target.

Following \cite{gurbuzbalaban2025anchored} we take as anchor the
deterministic smoothing of $U$ in the whitened coordinates,
\begin{equation}
\label{eq:ball-anchor}
\text{(A)}\quad U_0(x)=\sqrt{\left\|L^{-1}x\right\|_2^{2}+\delta^{2}},
\qquad\qquad
\text{(B)}\quad U_0(x)=\sum_{i=1}^{3}\sqrt{\left(L^{-1}x\right)_i^{2}+\delta^{2}},
\end{equation}
with $\delta=0.1$, which is $C^\infty$ with
\begin{equation}
\label{eq:ball-gradU0}
\text{(A)}\quad \nabla U_0(x)=\frac{L^{-\top}y}{\sqrt{\|y\|_2^{2}+\delta^{2}}},
\qquad\qquad
\text{(B)}\quad \nabla U_0(x)=L^{-\top}\left(\frac{y_i}{\sqrt{y_i^{2}+\delta^{2}}}\right)_{i=1}^{3},
\end{equation}
so that $\|\nabla U_0\|_2\le\|L^{-1}\|_2=2.77$ in case (A) and
$\|\nabla U_0\|_2\le\sqrt3\,\|L^{-1}\|_2=4.79$ in case (B). Since
$0<\sqrt{t^{2}+\delta^{2}}-|t|\le\delta$, the clock satisfies
\begin{equation}
\label{eq:ball-clock}
a(x)=e^{U(x)-U_0(x)}\in\big[e^{-\delta},1\big)=[0.905,1)\ \ \text{(A)},
\qquad
a(x)\in\big[e^{-3\delta},1\big)=[0.741,1)\ \ \text{(B)},
\end{equation}
and it is Lipschitz on $K$ because $U$ is Lipschitz and $U_0$ is smooth.
Only $U_0$ is differentiated; the non-smooth $U$ enters through function
evaluations in $a$.

We run the projected scheme \eqref{eq:algorithm-constrained} with the state-dependent skew-symmetric matrix $J=J_s$ from ~\eqref{eq:Js}, the
projection $\Pi_K(z)=z\min\{1,R/\|z\|_2\}$ and, for all schemes, the same
step size $\eta=5\times10^{-4}$ and the same number of iterates, $5000$, so
that the reversible and non-reversible chains use the same number of
evaluations of $U$ and $\nabla U_0$ and differ only by one cross product per
iterate. We propagate $N=5000$ independent walkers, all started at the
boundary point $x_0=\Pi_K(0.5,0.5,0.5)=(1,1,1)/(2\sqrt3)$, and repeat the
experiment over $8$ independent replicas. Exact samples of $\pi$ are
obtained by rejection: we draw from the unconstrained law of
~\eqref{target_a} and ~\eqref{target_b} and keep the draws that fall in $K$. As a measure of
convergence we use, every $25$ iterates, the $\mathcal W_1$ distance
between the empirical distribution of the walkers and that of $5000$ exact
samples in each coordinate, averaged over the three coordinates,
\begin{equation}
\label{eq:ball-W1}
\mathcal W_1(k):=\frac13\sum_{i=1}^{3}\mathcal W_1\big(\hat\mu_{k,i},\hat\pi_i\big),
\end{equation}
and we report its mean over the replicas. Being a
two-sample quantity, \eqref{eq:ball-W1} does not vanish for a perfect
sampler: between two independent exact samples of size $5000$ it equals
$0.0050$ for ~\eqref{target_a} and $0.0043$ for ~\eqref{target_b} on average, which we call the floor. For each
replica and each $m\in\{3,5\}$ we record the number of iterates $\tau_m$
after which $\mathcal W_1(k)$ stays below $m$ times the floor, and we report
the mean of $\tau_m$ over the replicas with a bootstrap $95\%$ interval, as
well as the ratio of the mean $\tau_m$ of the reversible chain to that of
the non-reversible one.

\begin{figure}[htbp]
    \centering
    \includegraphics[width=\linewidth]{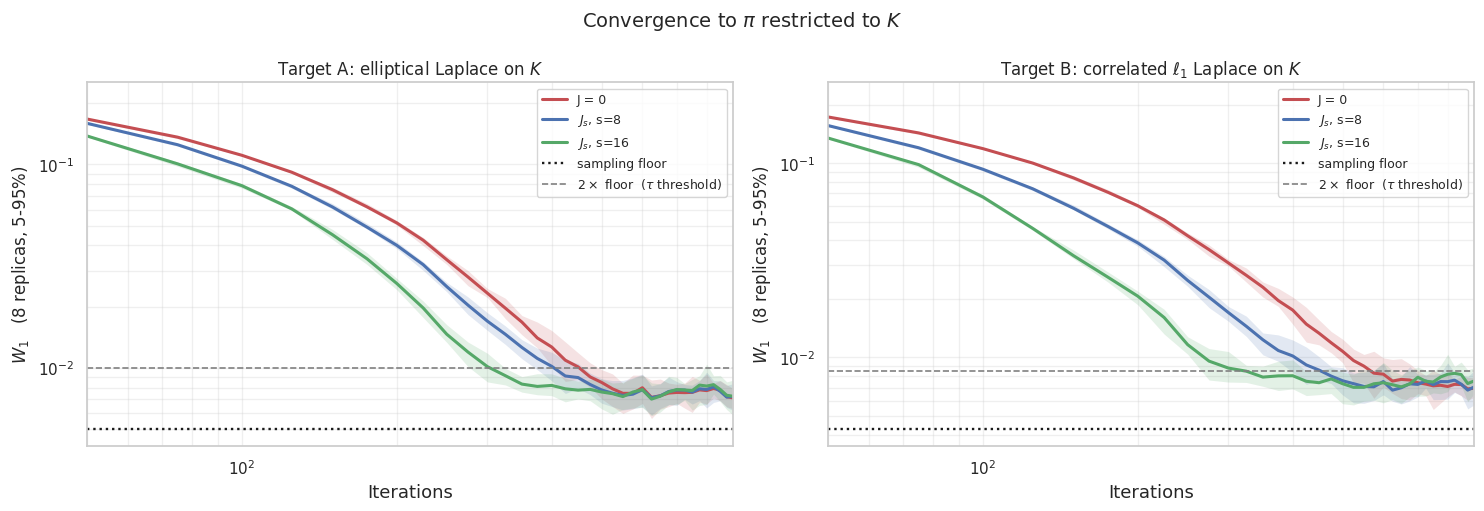}
    \caption{$\mathcal W_1$ distance \eqref{eq:ball-W1} between the walkers
    and an exact sample of the target, averaged over $8$ replicas, as a
    function of the iterate on logarithmic axes, for the reversible
    projected anchored Langevin algorithm ($\alpha=0$) and NRALD with the
    tensor $J_s$ in \eqref{eq:Js} with $s=8$ and $s=16$ ($\alpha=1$), for the
    targets (A) ~\eqref{target_a} (left)  and (B) ~\eqref{target_b} (right) on the ball
    \eqref{eq:ball-K}. The dotted line is the two-sample floor and the
    dashed line is $3\times$floor, the threshold of $\tau_3$.}
    \label{fig:toy}
\end{figure}

\begin{figure}[!htbp]
    \centering
    \includegraphics[width=\linewidth]{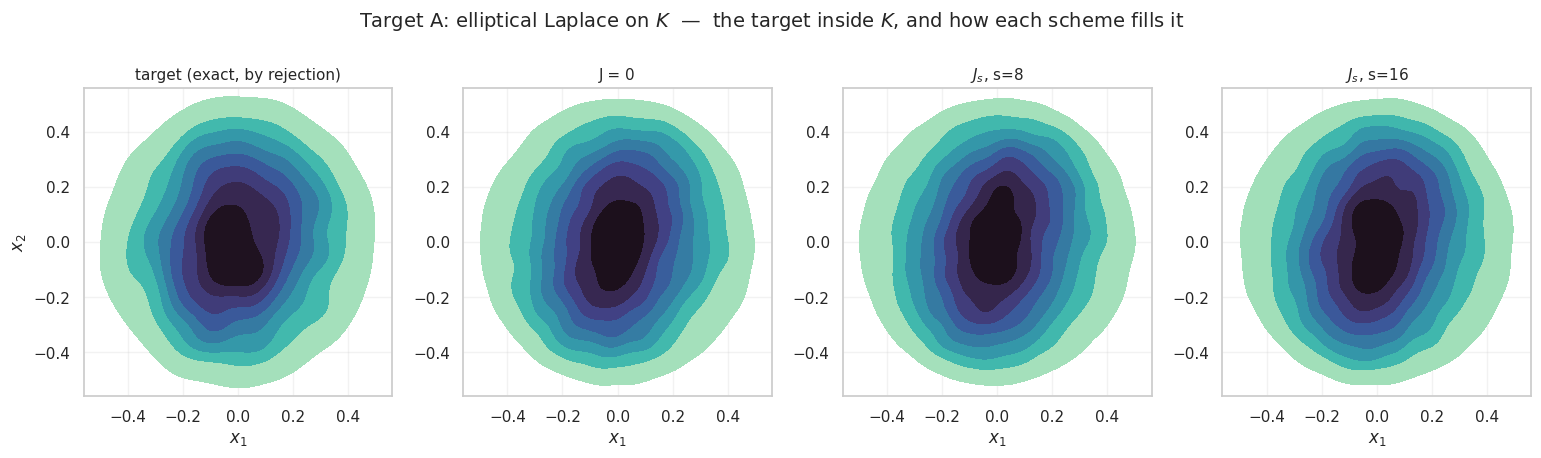}
    \caption{Visualized density plots of the first two coordinates $(x_1,x_2)$
    for target (A), $U(x)=\|L^{-1}x\|_2$ in \eqref{target_a}, restricted to
    the ball $K$ in \eqref{eq:ball-K}. From left to right: $5000$ exact samples
    of the target obtained by rejection, and the walkers of the reversible
    projected anchored Langevin algorithm ($J=0$) and of NRALD with $J_s$ in
    \eqref{eq:Js} for $s=8$ and $s=16$ after $5000$ iterates with
    $\eta=5\times10^{-4}$ ($8$ replicas of $5000$ walkers), shown as kernel
    density estimates with the same contour levels.}
    \label{fig:elliptic}
\end{figure}

\begin{figure}[!htbp]
    \centering
    \includegraphics[width=\linewidth]{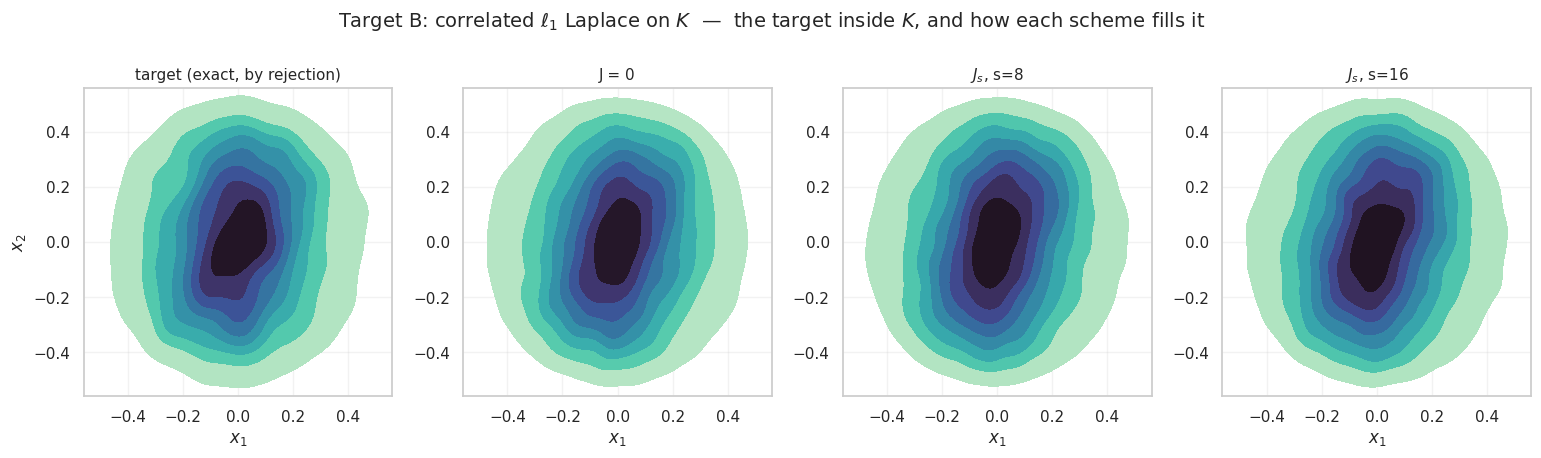}
    \caption{Visualized density plots of the first two coordinates $(x_1,x_2)$
    for target (B), $U(x)=\|L^{-1}x\|_1$ in \eqref{target_b}, restricted to
    the ball $K$ in \eqref{eq:ball-K}. From left to right: $5000$ exact samples
    of the target obtained by rejection, and the walkers of the reversible
    projected anchored Langevin algorithm ($J=0$) and of NRALD with $J_s$ in
    \eqref{eq:Js} for $s=8$ and $s=16$ after $5000$ iterates with
    $\eta=5\times10^{-4}$ ($8$ replicas of $5000$ walkers), shown as kernel
    density estimates with the same contour levels.}
    \label{fig:l1}
\end{figure}

Figure~\ref{fig:toy} shows that NRALD with
$J_s$ \eqref{eq:Js} converges faster than the reversible projected anchored Langevin
algorithm on both targets, and that the gain grows with $s$. On target (A) ~\eqref{target_a},
$\mathcal W_1$ after $100$, $200$ and $300$ iterates is $0.111$, $0.052$ and
$0.023$ for the reversible chain, $0.098$, $0.040$ and $0.017$ for $s=8$, and
$0.079$, $0.026$ and $0.010$ for $s=16$; on target (B) the corresponding
values are $0.119$, $0.060$, $0.031$, then $0.093$, $0.039$, $0.017$, and
$0.067$, $0.021$, $0.009$. The number of iterates needed to reach
$3\times$floor drops from $381$ to $338$ ($s=8$) and $259$ ($s=16$) on (A),
and from $469$ to $369$ and $253$ on (B). Doubling $s$ roughly doubles the gain, and the gain is
larger for target (B), whose potential is non-smooth on three planes, than
for target (A), whose potential is non-smooth only at the origin. Once converged, all schemes settle at the same stationary level. The
stationary $\mathcal W_1$ is $0.0075$ for the three schemes on (A) and
$0.0070$, $0.0069$ and $0.0074$ on (B), within one bootstrap interval of
each other. Figures~\ref{fig:elliptic}
and~\ref{fig:l1} confirm this visually: after $5000$ iterates the
$(x_1,x_2)$ marginals of the three schemes are indistinguishable from that of
the exact sample, including the tilted core of target (B). This is what the
theory predicts. The tensor $J_s$ preserves $\pi$, and since it is
tangential it adds no normal component to the step near $\partial K$, so it
leaves the projection error of the scheme unchanged.

\subsubsection{Bayesian logistic regression: Synthetic data}

We test NRALD ~\eqref{eq:nrald} on a binary classification synthetic problem whose target is not
differentiable. Suppose we can access a dataset $Z=\{z_j\}_{j=1}^{n}$,
$z_j=(X_j,y_j)$, where $X_j\in\mathbb{R}^{d}$ are the features and
$y_j\in\{0,1\}$ the labels, and assume that, conditionally on the features
and on the regression coefficients $\beta\in\mathbb{R}^{d}$, the labels are
independent with
\begin{equation}
\label{eq:logit}
\mathbb{P}\left(y_j=1\mid X_j,\beta\right)=\frac{1}{1+e^{-\beta^\top X_j}} .
\end{equation}
The first coordinate of every $X_j$ equals $1$, so $\beta_1$ is an
intercept, and we write $\beta=(\beta_1,\beta_{2:d})$. The prior on $\beta$
is the product of a Gaussian $\mathcal N(0,\sigma^2)$ on the intercept, a
Laplace (LASSO) prior with parameter $\lambda>0$ on the slopes, and the
indicator of a compact set $K\subset\mathbb{R}^d$. The target is therefore
$\pi(\beta)\propto e^{-U(\beta)}\mathbf 1_K(\beta)$ with $U=f+g$ of the form
\eqref{eq:Ufg}, where
\begin{equation}
\label{eq:logistic-fg}
f(\beta):=\sum_{j=1}^{n}\left[\log\left(1+e^{\beta^\top X_j}\right)-y_j\,\beta^\top X_j\right]
+\frac{\beta_1^{2}}{2\sigma^{2}},
\qquad
g(\beta):=\lambda\sum_{i=2}^{d}|\beta_i| .
\end{equation}
Here $f$ is convex and $C^\infty$, whereas $g$ is the LASSO regularizer restricted to the slopes: it is convex and
Lipschitz but not differentiable on the coordinate hyperplanes
$\{\beta_i=0\}$, $i=2,\dots,d$, which is precisely where a sparse posterior
puts its mass. Hence $U$ is locally Lipschitz but $\nabla U$ does not exist
on a set of positive $\pi$-probability (near which the chain spends most of
its time), and the reflected Langevin dynamics \eqref{eqn:RLD} cannot be
used. Following \eqref{eq:U0anchor}, we take the anchor $U_0=f+g_0$ with
\begin{equation}
\label{eq:logistic-anchor}
g_0(\beta):=\lambda\sum_{i=2}^{d}\sqrt{\beta_i^{2}+\delta^{2}},
\qquad
a(\beta)=e^{U(\beta)-U_0(\beta)}=e^{\,g(\beta)-g_0(\beta)}
\in\left[e^{-(d-1)\lambda\delta},\,1\right],
\end{equation}
for a small $\delta>0$.
Only $U_0$ is differentiated; the target $U$ enters through function
evaluations in the clock $a$.
Consider the following example of $d=9$, where the first coordinate of $X_j$
is the intercept. We first generate $n=2000$ synthetic data by the following
model
\begin{equation}
\label{eq:synthetic}
X_j=\left(1,\widetilde X_j\right),\quad
\widetilde X_j\sim\mathcal{N}\left(0,\Sigma\right),\quad
p_j\sim\mathcal{U}(0,1),\quad
y_j=\begin{cases}1&\text{if } p_j\le\dfrac{1}{1+e^{-\beta_\ast^\top X_j}}\\[6pt] 0&\text{otherwise}\end{cases},
\end{equation}
where $\mathcal{U}(0,1)$ stands for the uniform distribution on $[0,1]$,
$\beta_\ast\in\mathbb{R}^{9}$ is the true coefficient vector, and
$\Sigma\in\mathbb{R}^{8\times8}$ is block diagonal. In
\eqref{eq:logistic-fg}--\eqref{eq:logistic-anchor} we take $\lambda=2$,
$\sigma=5$ and $\delta=0.02$, and $\nabla U_0$ is computed over the full
training set. Then we implement the scheme \eqref{eq:algorithm-constrained} with
$1000$ iterates, initialized from the uniform distribution on $K$, and repeat
it over $R=100$ independent replicates, where the reversible ($\alpha=0$) and
non-reversible ($\alpha=1$) chains of a replicate share the same initial
point and the same Gaussian increments. For the constrained Bayesian
logistic regression, it is not practical for us to compute the
$\mathcal{W}_1$ distance between the approximated Gibbs distribution and the
empirical distribution. Hence, for such a binary classification problem, we
use the accuracy of the current iterate $\beta_k$ over the training set and
the test set to measure the goodness of the convergence performance.
 
In the experiment with the ball constraint set in \eqref{eq:ball} with
$r=1$, we choose the skew-symmetric matrix $J_s(x)$ in \eqref{eq:Js} with
$s=16$, ramped up linearly from $0$ over the first $20$ iterates, and the step
size $\eta=3\times10^{-6}$, for the non-reversible anchored Langevin
algorithm shown by the red solid line. We use $20\%$ of the whole dataset as
the test set. The mean and standard deviation of the accuracy distribution
over the $R=100$ replicates are shown in Figure~\ref{fig:logsyth-ball}.

\begin{figure}[!htbp]
    \centering
    \includegraphics[width=\linewidth]{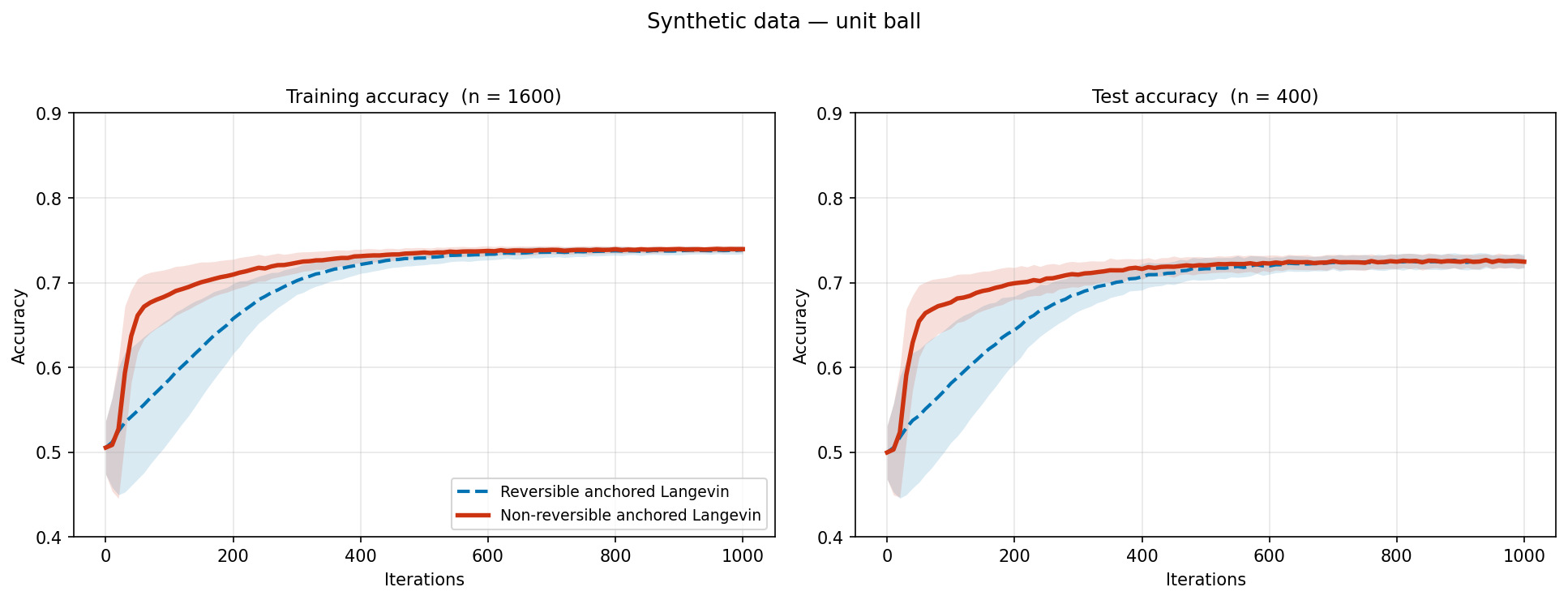}
       
    \caption{Accuracy over the training set and the test set for the
    synthetic data. The red part (solid line) denotes the mean and standard
    deviation of the non-reversible anchored Langevin algorithm with $J_s(x)$
    ($\alpha=1$) and the blue part (dashed line) denotes the mean and standard
    deviation of the reversible anchored Langevin algorithm ($\alpha=0$).}
    \label{fig:logsyth-ball}
\end{figure}
 
Under the same setting, we consider the smoothed $\ell_1$ ball constraint in
\eqref{eq:l1ball} with $\varepsilon=0.2$ and $\Lambda=7.4$ in this synthetic
data example, the scaling parameter $s=4$ for $J_g(x)$ defined in
\eqref{eq:Jg}, and the step size $\eta=7\times10^{-6}$. We obtain
Figure~\ref{fig:logsyth-l1}.
 
\begin{figure}[!htbp]
   \centering
   \includegraphics[width=\linewidth]{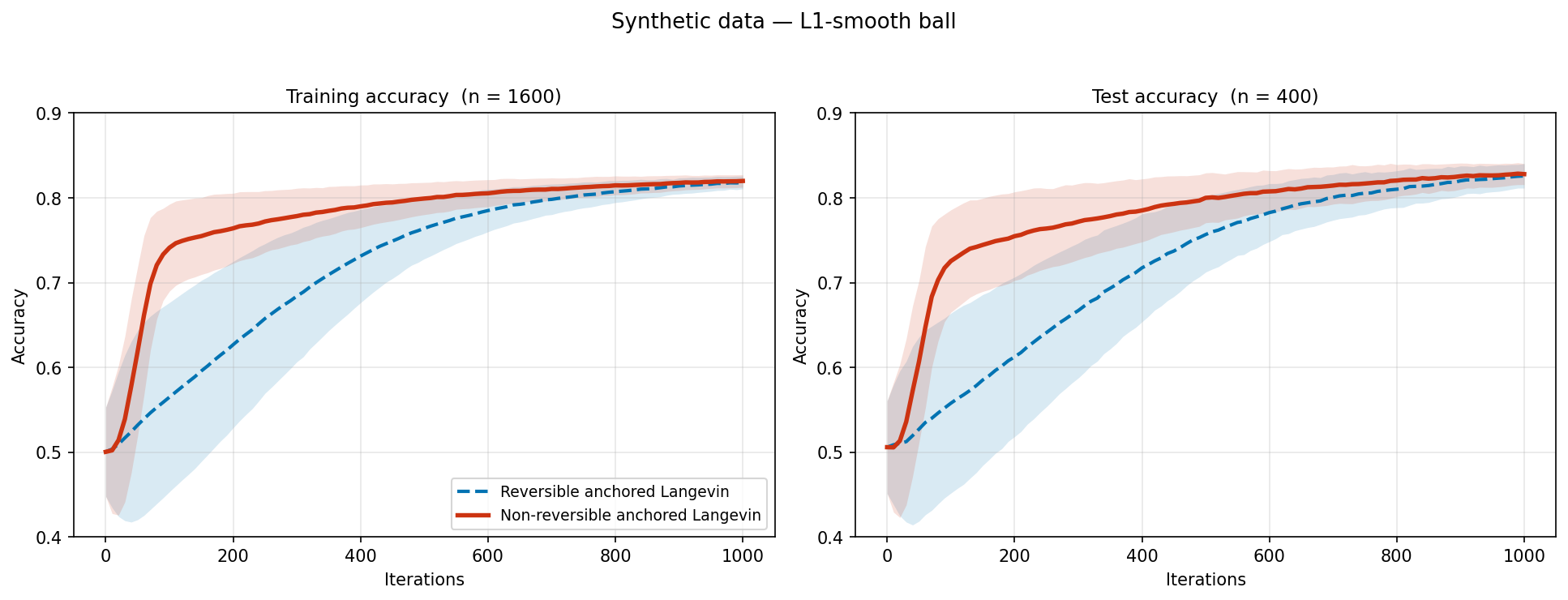}
    \caption{Accuracy over the training set and the test set for the
    synthetic data. The red part (solid line) denotes the mean and standard
    deviation of the non-reversible anchored Langevin algorithm with $J_g(x)$
    ($\alpha=1$) and the blue part (dashed line) denotes the mean and standard
    deviation of the reversible anchored Langevin algorithm ($\alpha=0$).}
    \label{fig:logsyth-l1}
\end{figure}
 
We observe that the non-reversible anchored Langevin algorithm, with
$J_s(x)$ in the red line in Figure~\ref{fig:logsyth-ball} and with $J_g(x)$ in
the red line in Figure~\ref{fig:logsyth-l1}, requires fewer iterations to
achieve the same level of accuracy on the test set and exhibits more stable
sampling behavior comparing to the reversible anchored Langevin algorithm in
the blue dashed line. In particular, after $100$ iterates the mean test
accuracy is $0.72$ versus $0.56$ for the smoothed $\ell_1$ ball constraint and
$0.68$ versus $0.58$ for the ball constraint, while both algorithms reach the
same accuracy level by $1000$ iterates, in agreement with the fact that they
share the same invariant distribution.

\subsubsection{Bayesian logistic regression: Real data}

In this section, we consider the constrained Bayesian logistic regression
problem on the MAGIC Gamma Telescope dataset\footnote{The Telescope dataset
is publicly available from
\url{https://archive.ics.uci.edu/ml/datasets/magic+gamma+telescope}.} and
the Titanic dataset\footnote{The Titanic dataset is publicly available from
\url{https://www.kaggle.com/c/titanic}.}. The Telescope dataset contains
$n=19020$ samples with $10$ features, describing the registration of high
energy gamma particles in a ground-based atmospheric Cherenkov gamma
telescope using the imaging technique, and the goal is to classify each
event as a gamma or a hadron shower. We append one identically zero feature
so that, with the intercept, the dimension $d=12$ is a multiple of $3$ and
matches the block structure of $J_s(x)$ and $J_g(x)$; the coefficient of this
feature is pulled to zero by the LASSO term and plays no role in the
predictions. The Titanic dataset contains $891$ samples with $11$ attributes
describing the passengers, and the goal is to predict whether a passenger
survived or not. Some of the attributes are irrelevant to our goal or
mostly missing, namely the passenger identifier, the name, the ticket
number and the cabin, and we discard them. We keep the remaining seven
attributes, namely the passenger class, the sex, the age (with the $177$
missing values replaced by the median age), the number of siblings and
spouses aboard, the number of parents and children aboard, the fare (as
$\log(1+\text{fare})$) and the port of embarkation; the last one is
categorical with three levels and is encoded by two indicator variables.
We thus obtain a dataset containing $n=891$ labeled samples with $8$
feature columns, so that $d=9$ with the intercept. For both datasets we use $20\%$ of the whole dataset as the test
set, stratified by label, and we standardize the features by the mean and
standard deviation of the training set.

In \eqref{eq:logistic-fg}--\eqref{eq:logistic-anchor} we take $\lambda=2$,
$\sigma=5$ and $\delta=0.02$ as before, and $\nabla U_0$ is computed over the
full training set. For the ball constraint set $K_{r=1}$ in \eqref{eq:ball},
all features are rescaled by a common factor so that the pilot coefficient
vector has Euclidean norm $0.8$ and lies inside $K_{r=1}$; for the smoothed
$\ell_1$ ball constraint set in \eqref{eq:l1ball} we take $\varepsilon=0.2$
and $\Lambda=d\varepsilon+\|\hat\beta\|_1+1$, where $\hat\beta$ is the pilot
coefficient vector in the transformed coordinates. For both datasets, we
initialize the reversible ($\alpha=0$) and non-reversible ($\alpha=1$)
anchored Langevin algorithms with the uniform distribution on $K$, run
$R=100$ independent replicates in which the two chains of a replicate share
the same initial point and the same Gaussian increments, and report the
mean and standard deviation over the replicates of the accuracy of the
current iterate $\beta_k$ over the training set and the test set.

In the experiment of using the Telescope dataset with a centered ball
constraint set, we take the constraint set as the centered ball
$K_{r=1}\subset\mathbb{R}^{12}$ defined in \eqref{eq:ball}, and we set the step
size $\eta=2\times10^{-7}$ and implement the two algorithms for $1500$
iterates, where we choose the skew-symmetric matrix as $J_s(x)$ defined in
\eqref{eq:Js} with $s=16$ for the non-reversible anchored Langevin algorithm
shown by the red solid line. Figure~\ref{fig:magic-ball} reports the
accuracy of the two algorithms over the training set and the test set. In
the experiment of using the Telescope dataset with the smoothed $\ell_1$
ball constraint set defined in \eqref{eq:l1ball}, we choose the
skew-symmetric matrix as $J_g(x)$ defined in \eqref{eq:Jg} with $s=8$, the
step size $\eta=2\times10^{-6}$, and implement the two algorithms for $1000$
iterates to obtain Figure~\ref{fig:magic-l1}.

\begin{figure}[!htbp]
        \centering
        \includegraphics[width=\linewidth]{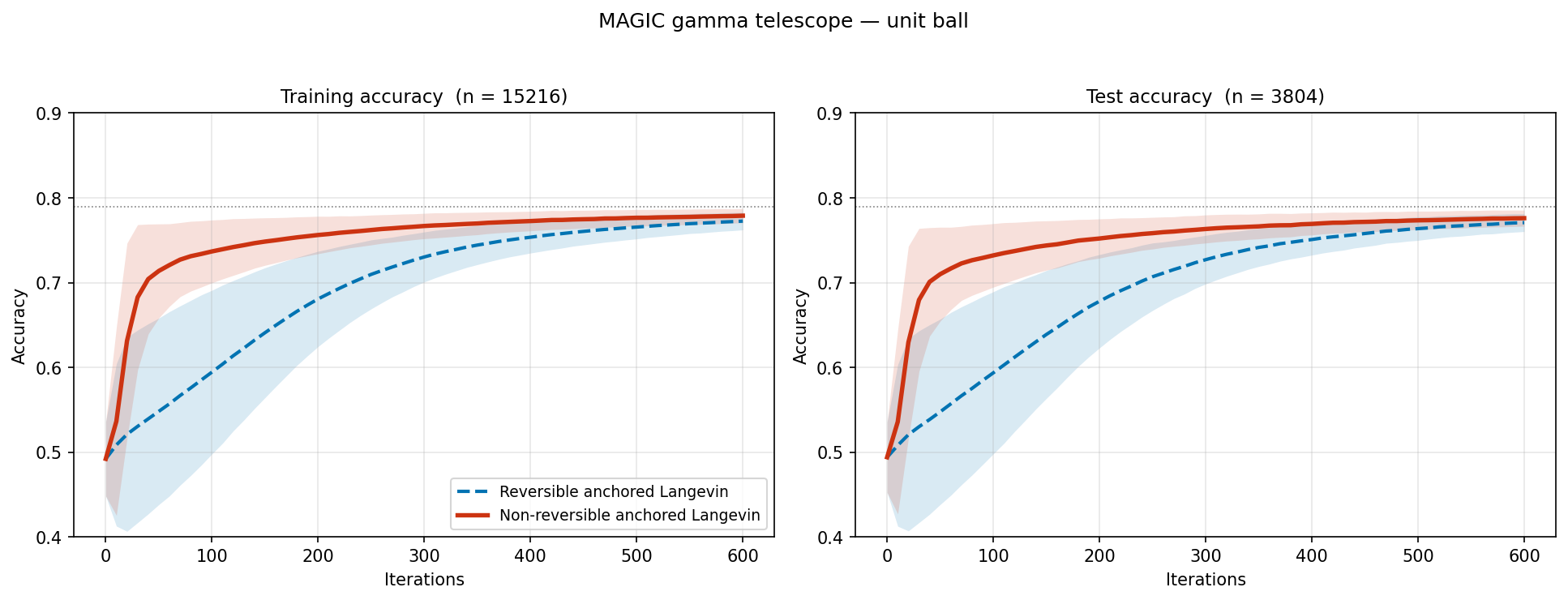}
    \caption{Accuracy over the training set and the test set for the
    Telescope dataset over the unit ball. The red part (solid line) denotes the mean and
    standard deviation of the non-reversible anchored Langevin algorithm with
    $J_s(x)$ ($\alpha=1$), the blue part (dashed line) denotes the mean and
    standard deviation of the reversible anchored Langevin algorithm
    ($\alpha=0$), and the grey dotted line is the test accuracy of the
    Bayesian logistic regression fit.}
    \label{fig:magic-ball}
\end{figure}

\begin{figure}[!htbp]
    \centering
    \includegraphics[width=\linewidth]{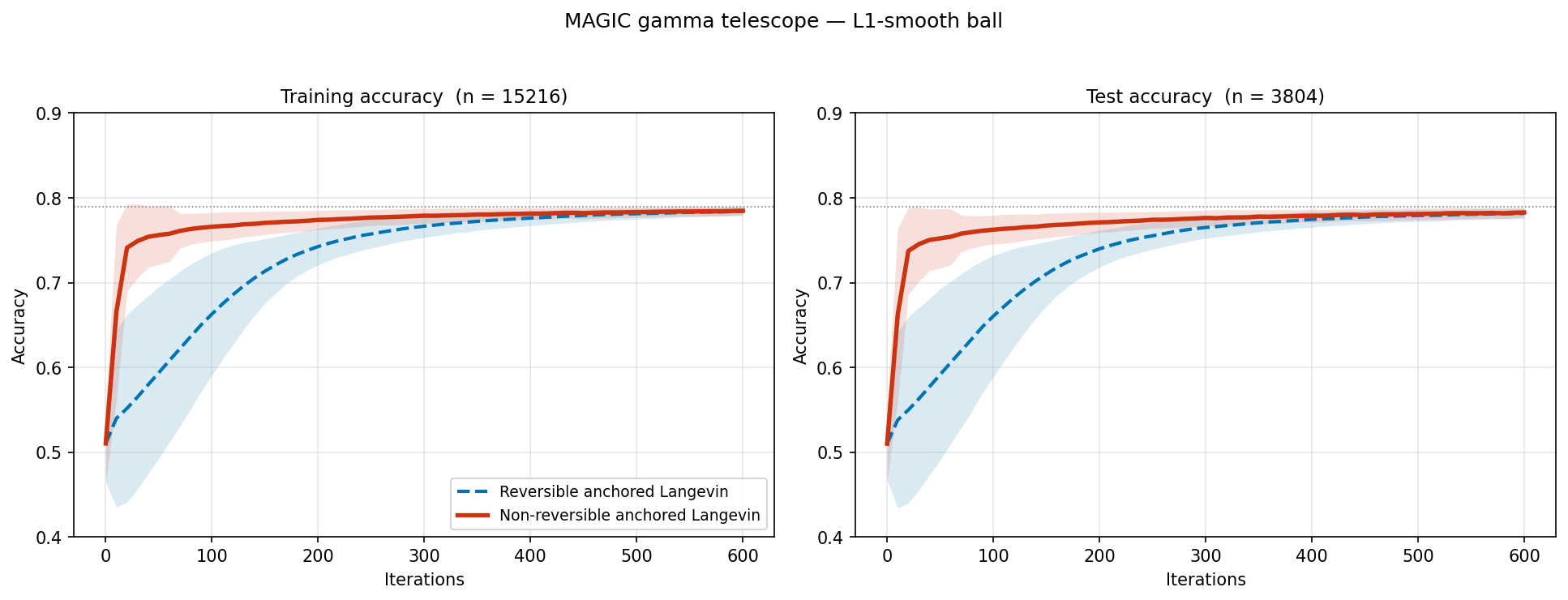}
    \caption{Accuracy over the training set and the test set for the
    Telescope dataset over the smoothed $l_1$ ball. The red part (solid line) denotes the mean and
    standard deviation of the non-reversible anchored Langevin algorithm with
    $J_g(x)$ ($\alpha=1$), the blue part (dashed line) denotes the mean and
    standard deviation of the reversible anchored Langevin algorithm
    ($\alpha=0$), and the grey dotted line is the test accuracy of the
    Bayesian logistic regression fit.}
    \label{fig:magic-l1}
\end{figure}

For the Titanic data, we first consider the constraint set as the centered
ball $K_{r=1}\subset\mathbb{R}^{9}$, and we set the step size
$\eta=1.4\times10^{-6}$ and implement the two algorithms for $2000$ iterates,
where we choose the skew-symmetric matrix as $J_s(x)$ defined in
\eqref{eq:Js} with $s=12$, ramped up linearly from $0$ over the first $10$
iterates, for the non-reversible anchored Langevin algorithm shown by the
red solid line. Figure~\ref{fig:titanic-ball} reports the accuracy
level of the two algorithms over the training and test sets. Taking the
constraint set as the smoothed $\ell_1$ ball constraint set defined in
\eqref{eq:l1ball}, we choose the skew-symmetric matrix as $J_g(x)$ defined in
\eqref{eq:Jg} with $s=8$, the step size $\eta=7\times10^{-6}$, and implement
the two algorithms for $3000$ iterates to obtain
Figure~\ref{fig:titanic-l1}.

\begin{figure}[htbp]
    \centering
    \includegraphics[width=\linewidth]{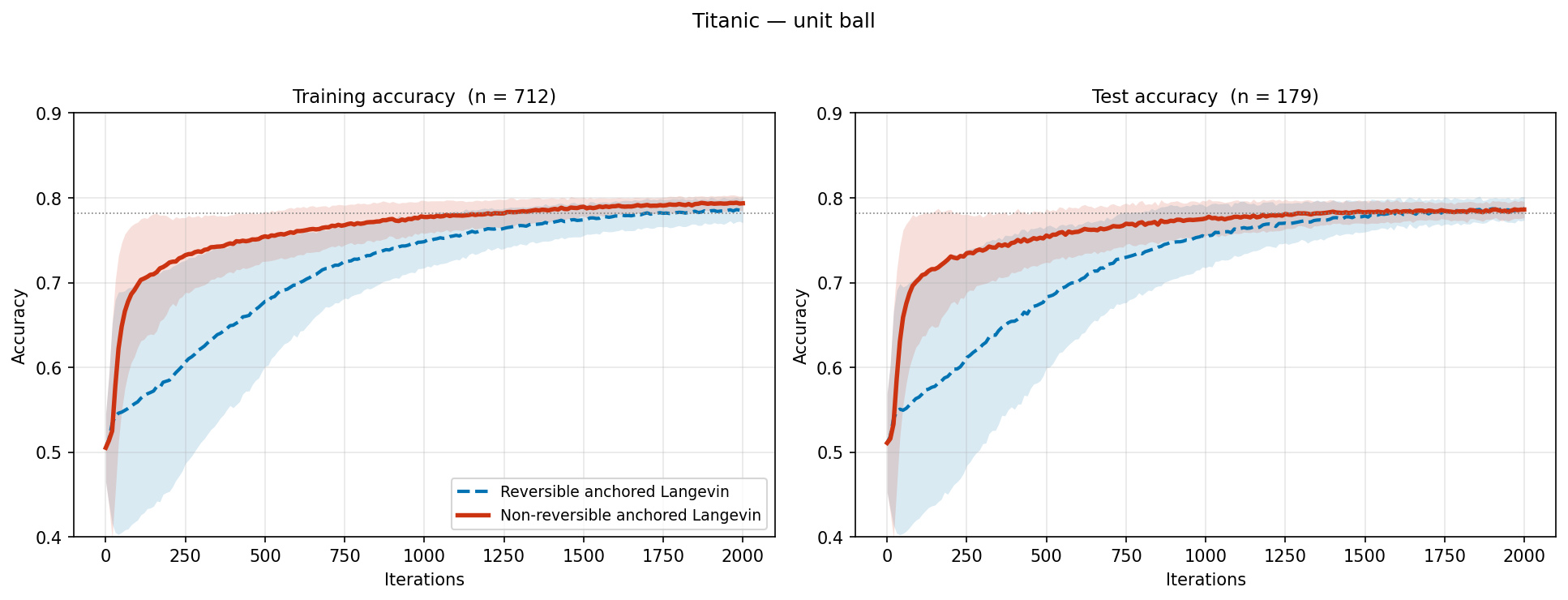}
    \caption{Accuracy over the training set and the test set for the Titanic
    dataset over the unit ball. The red part (solid line) denotes the mean and standard
    deviation of the non-reversible anchored Langevin algorithm with $J_s(x)$
    ($\alpha=1$), the blue part (dashed line) denotes the mean and standard
    deviation of the reversible anchored Langevin algorithm ($\alpha=0$), and
    the grey dotted line is the test accuracy of the Bayesian logistic regression
    fit.}
    \label{fig:titanic-ball}
\end{figure}

\begin{figure}[htbp]
    \centering
    \includegraphics[width=\linewidth]{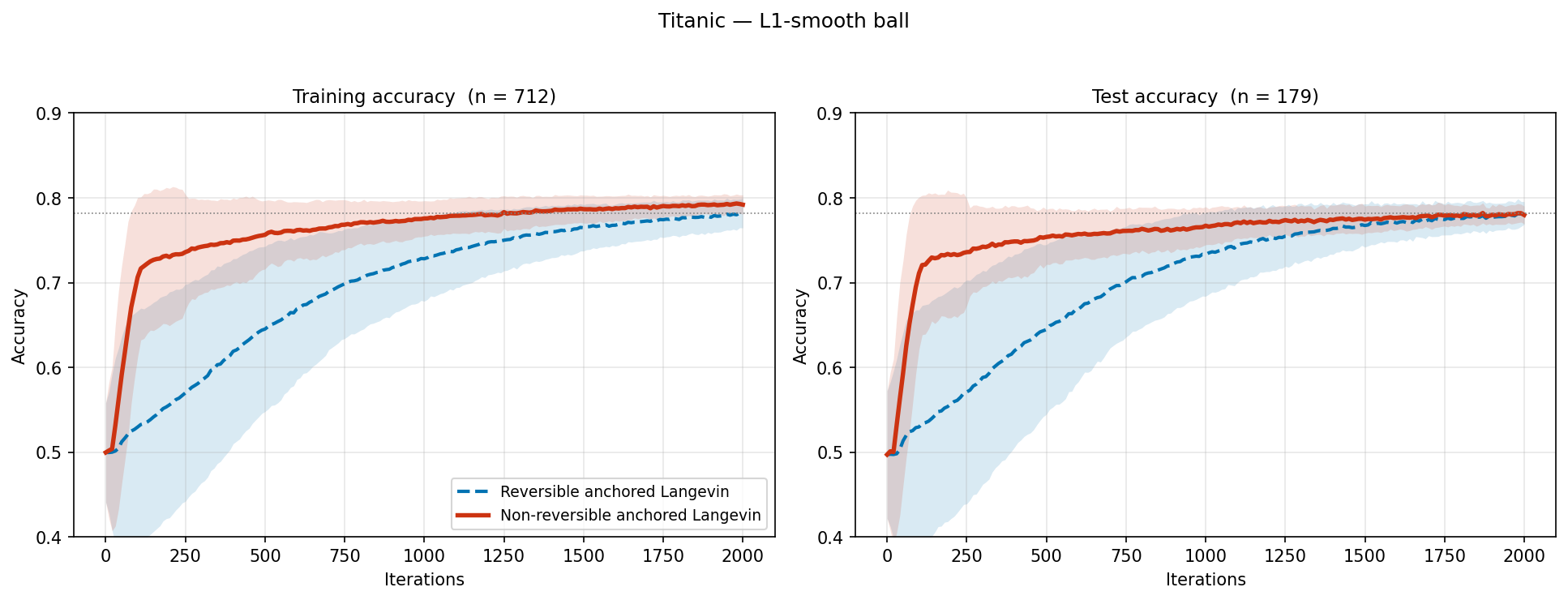}
    \caption{Accuracy over the training set and the test set for the Titanic
    dataset over smoothed $l_1$ ball. The red part (solid line) denotes the mean and standard
    deviation of the non-reversible anchored Langevin algorithm with $J_g(x)$
    ($\alpha=1$), the blue part (dashed line) denotes the mean and standard
    deviation of the reversible anchored Langevin algorithm ($\alpha=0$), and
    the grey dotted line is the test accuracy of the Bayesian logistic regression
    fit.}
    \label{fig:titanic-l1}
\end{figure}

From Figure~\ref{fig:magic-ball} and Figure~\ref{fig:magic-l1}
for the Telescope dataset, and Figure~\ref{fig:titanic-ball} and
Figure~\ref{fig:titanic-l1} for the Titanic dataset, we can conclude
that the non-reversible anchored Langevin algorithm ~\eqref{eq:nrald}, with either $J_s(x)$ or
$J_g(x)$, consistently attains a given level of classification accuracy with
fewer iterates than the reversible anchored Langevin algorithm ~\eqref{eqn:RALD} on both
datasets, which indicates a faster convergence rate. On the Telescope
dataset, the mean test accuracy after $50$ iterates is $0.71$ versus $0.55$
with the ball constraint and $0.75$ versus $0.59$ with the smoothed $\ell_1$
ball constraint; on the Titanic dataset, after $150$ iterates it is $0.72$
versus $0.58$ with the ball constraint and $0.73$ versus $0.54$ with the
smoothed $\ell_1$ ball constraint. The non-reversible algorithm also exhibits
a much smaller spread across replicates during the transient. By the end of
each run both algorithms have reached the accuracy of the Bayesian logistic
regression fit and agree with each other to within $0.005$, in agreement
with the fact that they share the same invariant distribution.

\section{Conclusion}\label{sec:discussion}

In this paper, we introduced non-reversible anchored Langevin dynamics (NALD) on
the whole Euclidean space and its normally reflected counterpart, non-reversible reflected anchored Langevin dynamics (NRALD),  on a bounded domain.  Both
dynamics combine a differentiable reference potential with a state-dependent
clock while allowing the target potential itself to be non-differentiable.
We constructed the nonreversible perturbation through a divergence-free
weighted current, proved invariance of the target distribution, and identified
the dynamics as random time changes of nonreversible reference diffusions.
We then quantified the resulting acceleration at three complementary levels.
We established finite-time total-variation convergence bounds, proved
level-two empirical-measure large deviation principles and an explicit
nonnegative decomposition of the irreversible rate function, and showed that
the asymptotic variance of stationary ergodic averages decreases under
the anti-symmetric perturbation.  The assumptions required on unbounded state
spaces were verified for a class of heavy-tailed anchored diffusions, including
a superquadratic Student-$t$ model, and an exactly solvable quadratic-clock
example illustrated the variance reduction for linear observables.
Numerical experiments are provided to illustrate the performance
of our proposed algorithms.

It will be interesting to obtain non-asymptotic convergence guarantees 
for the algorithms based on the discretization of the continuous-time 
NALD for unconstrained sampling
and NRALD for constrained sampling in the paper. 
The analysis can be challenging due to the state-dependent diffusion coefficient in NALD and NRALD
as well as the presence of reflections in NRALD. 
This will be left for future investigations.


\bibliographystyle{plainnat}
\bibliography{langevin}


\end{document}